\def\eqref#1{equation~\ref{#1}}
\def\1{\bm{1}}
\DeclareMathAlphabet{\mathsfit}{\encodingdefault}{\sfdefault}{m}{sl}
\SetMathAlphabet{\mathsfit}{bold}{\encodingdefault}{\sfdefault}{bx}{n}
\def\cA{{\mathcal{A}}}
\def\cB{{\mathcal{B}}}
\def\cC{{\mathcal{C}}}
\def\cD{{\mathcal{D}}}
\def\cO{{\mathcal{O}}}
\newcommand{\R}{\mathbb{R}}
\DeclareMathOperator*{\argmax}{arg\,max}
\DeclareMathOperator*{\argmin}{arg\,min}
\definecolor{myyellow}{rgb}{1,1,0.7}
\newcommand{\myref}[1]{$\left(\ref{#1}\right)$} % new macro
\newcommand{\eqdef}{\coloneqq}
\definecolor{bgcolor}{rgb}{0.93,0.99,1}
\definecolor{bgcolor2}{rgb}{0.8,1,0.8}
\definecolor{bgcolor3}{rgb}{0.50,0.90,0.50}
\newcommand{\algname}[1]{{\sf\red\relscale{0.90}#1}\xspace}
\newcommand{\gammaM}{{\gamma}}
\newcommand{\tauM}{{\tau}}
\newcommand{\nclients}{{M}}
\newcommand{\iclient}{m}
\newcommand{\kstep}{t}
\newcommand{\kcohort}{{C}}
\newcommand{\Koper}{{{H}}}
\newcommand{\localsteps}{K}
\newcommand{\comm}{T}
\newcommand{\betaM}{\delta}
\newcommand{\deltaM}{\alpha}
\newcommand{\localsolver}{\mathcal{A}}
\newcommand{\localsolmk}{y_{\iclient}^{\star,\kstep}}
\newcommand{\lastlocittermk}{y_{\iclient}^{\localsteps,\kstep}}
\newcommand{\lastlocitterk}{y^{\localsteps,\kstep}}
\newcommand{\localsolk}{y^{\star,\kstep}}
\newcommand{\set}{S^{t}}
\newcommand{\Exps}[1]{\mathbb{E}_S\!\left[ #1 \right]}
\newcommand{\localfun}{\psi^t}
\newcommand{\localfuni}{\localfun_\iclient}
\newtheorem{theorem}{Theorem}[section]
\newtheorem{corollary}[theorem]{Corollary}
\newtheorem{assumption}{Assumption}
\newtheorem{definition}{Definition}
\newcommand{\squeeze}{\textstyle} % if not in use
\definecolor{mydarkgreen}{RGB}{39,130,67}
\definecolor{mydarkred}{RGB}{192,25,25}
\newcommand{\green}{\color{mydarkgreen}}
\newcommand{\red}{\color{mydarkred}}
\newcommand{\cmark}{{\green\ding{51}}}%
\newcommand{\xmark}{{\red\ding{55}}}%
\newcommand{\michal}[1]{\todo[inline]{\textbf{Michal: }#1}}
\newcommand{\peter}[1]{\todo[inline]{\textbf{Peter: }#1}}
\definecolor{lgray}{rgb}{0.95,0.95,0.95}
\definecolor{yel}{rgb}{1,0.98,0.92}
\definecolor{mydarkblue}{rgb}{0,0.2,0.9}
\definecolor{mydarkred}{rgb}{0.8,0.0,0.0}
\definecolor{mydarkgreen}{rgb}{0,0.55,0}
\definecolor{myorange}{RGB}{255,100,0}
\newcommand{\algn}[1]{{\sf\color{mydarkred}\scalefont{0.96}{#1}}\xspace}
\newcommand{\sqnorm}[1]{\left\| #1 \right\|^2}
\newcommand{\Exp}[1]{\mathbb{E}\!\left[ #1 \right]}
\newcommand{\EE}[2]{\mathbb{E}_{#1}\!\left[ #2 \right]}
\newcommand{\Rop}{\mathcal{P}}
\newcommand{\oma}{\omega_{\mathrm{ran}}}
\newcommand{\Kx}{\localsteps}
\newcommand{\Cx}{\kcohort}
\newcommand{\Mx}{\nclients}
\newcommand{\Tx}{\comm}
\begin{document}

\twocolumn[

\aistatstitle{Can 5${}^{\rm th}$ Generation Local Training Methods Support Client Sampling? Yes!}

\aistatsauthor{Micha\l{} Grudzie\'{n} \And Grigory Malinovsky \And  Peter Richt\'arik }

\aistatsaddress{KAUST \& University of Oxford\footnote{} \And KAUST \And KAUST } ]

\begin{abstract}
The celebrated \algname{FedAvg} algorithm of \citet{FL2017-AISTATS} is based on three components: client sampling (CS), data sampling (DS) and local training (LT). While the first two are reasonably well understood, the third component, whose role is to  reduce the number of communication rounds needed to train the model, resisted all attempts at a satisfactory theoretical explanation.  \citet{ProxSkip-VR} identified four distinct generations of LT methods based on the quality of the provided theoretical communication complexity guarantees. Despite a lot of progress in this area, none of the existing works were able to show that it is theoretically better to employ multiple local gradient-type steps (i.e., to engage in LT) than to rely on a single local gradient-type step only in the important heterogeneous data regime.  In a recent breakthrough embodied in their \algname{ProxSkip} method and its theoretical analysis, \citet{ProxSkip} showed that LT indeed leads to provable communication acceleration for arbitrarily heterogeneous data, thus jump-starting the $5^{\rm th}$ generation of LT methods. However, while these latest generation LT methods are compatible with DS, none of them  support CS. We resolve this open problem in the affirmative. In order to do so, we had to base our algorithmic development on new algorithmic and theoretical foundations.

\end{abstract}

%%%%%%%%%%%%%%%%
%%%%%%%%%%%%%%%%
%%%%%%%%%%%%%%%%
\section{Introduction} 
%%%%%%%%%%%%%%%%
%%%%%%%%%%%%%%%%
%%%%%%%%%%%%%%%%

 {\em Federated learning} (FL) is an emerging paradigm for the training of supervised machine learning models over geographically distributed and often private datasets stored across a potentially very large number of clients' devices, such as mobile phones, edge devices and hospital servers. 
 
 The roots of this young field can be traced to four foundational papers dealing with federated optimization \citep{FEDOPT}, communication compression \citep{FEDLEARN},  federated averaging \citep{FL2017-AISTATS} and secure aggregation \citep{FL-secure_aggreg}\footnote{These four works are cited in the Google AI blog \citep{FLblog2017} which  originally announced FL to the general public.}.  
 
 Federated learning has grown massively
 since its inception---in volume, depth and breadth alike---with many advances in theory, algorithms, systems and practical applications~\citep{FL-big,FL_survey_2020,FieldGuide2021}. 

In this work we study the standard
optimization formulation of  federated learning, which has  the form
\begin{equation}
	\label{eq:main}
	\squeeze 
		\min \limits_{x \in \mathbb{R}^d}\left[f(x) \eqdef \frac{1}{\Mx} \sum \limits_{\iclient=1}^{\Mx}  f_\iclient(x)\right],
\end{equation}
where $\Mx$ is the number of clients/devices and each function 
$f_\iclient(x)\eqdef \EE{\xi\sim \cD_\iclient}{\ell(x,\xi)}$ 
represents the average loss, measured via the loss function $\ell$, of the model parameterized by $x\in \R^d$ over the training data $\cD_\iclient$ owned by client $\iclient \in [\Mx]\eqdef \{1,\dots,\Mx\}$.

\subsection{Federated averaging}

Proposed by \citet{FL2017-AISTATS}, federated averaging (\algname{FedAvg}) is an immensely popular method   specifically  designed to  solve problem \myref{eq:main} while being mindful of several constraints characteristic of practical federated environments. In particular, \algname{FedAvg} is based on gradient descent (\algname{GD}), 

but introduces three modifications: 

\phantom{XX} a) client sampling (CS), \\ % a.k.a.\ partial  participation,   
\phantom{XX} b) data sampling (DS),  and \\ % a.k.a.\ mini-batching, and 
\phantom{XX} c) local training (LT). 

Training via \algname{FedAvg} proceeds in a number of communication rounds. Each round $t$ starts with the selection of a subset/cohort  $\set \subseteq [\Mx]$ of the clients of size $\Cx^t = |\set|$; these will  participate in the training in this round. The aggregating server then broadcasts the current version of the model, $x^t$,  to all clients $\iclient \in \set$ in the current cohort. Subsequently, each  client $\iclient \in \set$ performs $\Kx$ iterations  of    \algname{SGD} on its local loss function $f_\iclient$,  initiated with $x^t$, using minibatches $\cB_\iclient^{k,t}\subseteq \cD_m$ of size $b_\iclient=|\cB_\iclient^{k,t}|$  for $k=0,\dots,\Kx-1$. Finally, all participating devices send their updated models to the server for aggregation into a new model $x^{t+1}$, and the process is repeated. 

All three modifications can be turned on or off, individually, or in any combination. For example, if we set $\Cx^t=M$ for all $t$, then {\em all} clients are participating in all rounds, i.e., CS is turned off. 
Further, if we set $b_\iclient=|\cD_\iclient|$ for each client $\iclient \in [M]$, then all clients  use {\em all} their data to compute the local gradient estimator needed to perform each \algname{SGD} step, i.e., DS is turned off.  
%In other words, \algname{FedAvg} operates in the full participation regime. 
Finally, if we set $\Kx=1$, then only a {\em single} \algname{SGD} step is taken by each participating client, i.e., LT is turned off. 
If all of these modifications are turned off, \algname{FedAvg} reduces to vanilla \algname{GD}.

\subsection{Client and data sampling}
While \citet{FL2017-AISTATS} provided convincing empirical evidence for the efficacy of  \algname{FedAvg}, their work did not contain any theoretical results. Much progress in FL in the last five years can be attributed to the efforts by the FL community to understand, analyze, and improve upon these mechanisms, often first in isolation, as this is easier when deep understanding is desired. 

Since  {\em unbiased} client and data sampling mechanisms  are  intimately linked to the stochastic approximation literature dating back to the work of \citet{RobbinsMonro:1951}, it is not surprising that CS and DS are relatively well understood. For example,  variants of \algname{SGD} supporting  virtually arbitrary unbiased CS and DS mechanisms have been analyzed by \citet{SGD-AS}  in the smooth strongly convex regime and by \citet{ES-SGD-nonconvex,OptClientSampling2020} in the smooth nonconvex regime. Oracle optimal\footnote{See also the earlier work of \citet{nonconvex_arbitrary}, who analyzed arbitrary sampling mechanisms in the smooth nonconvex regime with suboptimnal variance-reduced methods.} (in the smooth  nonconvex regime) variants of \algname{SGD} supporting  virtually arbitrary unbiased CS and DS mechanisms were proposed and analyzed by \citet{PAGE-AS}, who built upon the previous works of  \citet{PAGE2021,SPIDER} and \citet{SARAH}.  

However, all the works mentioned above  analyze  \algname{GD} + CS/DS only, with LT turned off. If LT is included in the mix as well, or even considered in isolation as a single add-on to vanilla \algname{GD}, significant technical issues arise. These issues have kept the FL community uneasy and therefore busy and immensely productive for many years. Since, as we shall see, this will be of crucial importance for us to motivate the contributions of this paper, we will now   outline the  development of the theoretical understanding of the LT mechanism by the FL community over the last seven years.

\subsection{Local training}
Local training---the practice of requiring each participating client to perform {\em multiple}  local optimization steps (as opposed to performing a {\em single} step only) based on their local data before communication-expensive parameter synchronization is allowed to take place---is one of the most practically useful algorithmic ingredients in the training of FL models. In fact, LT is so central to the practical success of FL, and so unique and novel within the trio (CS, DS and LT) of techniques forming the \algname{FedAvg} method, that many authors attach the prefix ``Fed'' (meaning ``federated'') to any optimization method performing some version of LT, whether CS and DS are present as well or not.

While LT was popularized by \citet{FL2017-AISTATS}, it was proposed in the same form before  \citep{Povey2015,SparkNet2016},  also without any theoretical justification\footnote{However, the even earlier and closely related line of work on the \algname{CoCoA} framework,  which is based on solving the dual problem using arbitrary local solvers, comes with solid theoretical justification \citep{cocoa,COCOA+,COCOA+journal}. Finally, we would be remiss if we did not mention that another related method was proposed and studied more than 25 years ago by \citet{Mang1995}.}. 
However, until recently, the empirically observed and often very significant communication-saving potential of LT remained elusive, escaping all attempts at a satisfying theoretical justification.  

\subsection{Five generations of local training methods}

We shall now briefly review the development of the theoretical understanding of LT  in the smooth strongly convex regime. We  follow the classification proposed by \citet{ProxSkip-VR}, who identified five distinct generations of LT methods---1) heuristic, 2) homogeneous, 3) sublinear, 4) linear, and 5) accelerated---each new improving upon the previous one in a certain important way.  

{\bf  1${}^{\rm st}$ generation of LT methods (heuristic).}  The 1${}^{\rm st}$ generation methods offer ample empirical evidence, but do not come with any convergence rates~\citep{Povey2015,SparkNet2016,FL2017-AISTATS}. 

{\bf  2${}^{\rm nd}$ generation of LT methods (homogeneous).}  The 2${}^{\rm nd}$ generation LT methods do provide guarantees, but their analysis crucially depends on one or another of the many incarnations of  data homogeneity assumptions, such as i) bounded gradients, i.e., requiring $\| \nabla f_\iclient(x)\|\leq c$ for all $\iclient\in [\Mx]$ and $x\in \R^d$ \citep{FedAvg-nonIID}, or ii) bounded gradient dissimilarity (a.k.a.\ strong growth), i.e., requiring $\frac{1}{\Mx}\sum_{\iclient=1}^\Mx \| \nabla f_\iclient(x) \|^2\leq c\|\nabla f(x)\|^2$ for all  $x\in \R^d$ \citep{LocalDescent2019}. This is  problematic since such assumptions  are prohibitively restrictive; indeed, they are typically not satisfied in real FL environments \citep{FL-big,FieldGuide2021}. 

{\bf  3${}^{\rm rd}$ generation of LT methods (sublinear).}  The 3${}^{\rm rd}$ generation LT theory managed to succeed in disposing of the problematic data homogeneity assumptions \citep{localGD,localSGD-AISTATS2020}. \citet{woodworth2020minibatch} and \citet{glasgow2022sharp} subsequently provided lower bounds for \algname{LocalGD} with DS, showing that its communication complexity is not better than that of minibatch \algname{SGD} in the heterogeneous data setting.  Additionally, \citet{LFPM} analyzed LT methods for general fixed point problems. 

Unfortunately, these results suggest that LT-enhanced \algname{GD}, often called \algname{LocalGD}, suffers from a  sublinear convergence rate, which is clearly inferior to the linear convergence rate of vanilla \algname{GD}. While removing the reliance on data homogeneity assumptions was clearly an important step forward, this rather pessimistic theoretical result seems to suggest that LT makes \algname{GD} worse. However, this is at odds with the empirical evidence, which maintains that LT enhances \algname{GD}, and often significantly so. For these reasons, theoreticians continued to soldier on, with the quest to at least close the theoretical gap between LT-based methods and vanilla \algname{GD}.

{\bf  4${}^{\rm th}$ generation of LT methods (linear).} These efforts led to the identification of the {\em client drift} phenomenon as the culprit responsible for the  gap, and to a solution based on various techniques for the reduction of client drift. This development  marks the start of the  4${}^{\rm th}$  generation of LT methods. The first\footnote{If we do not count the closely related works belonging to the \algname{CoCoA} framework~\citep{cocoa,COCOA+,COCOA+journal}.} method belonging to this generation, called \algname{Scaffold}, and due to \citet{Scaffold}, employs a \algname{SAGA}-like variance reduction technique \citep{SAGA} to tame the client drift caused by LT. As a result, \algname{Scaffold} has the same communication complexity as \algname{GD}. 
\citet{LSGDunified2020} subsequently proposed a unified framework for designing and analyzing 3${}^{\rm rd}$ and 4${}^{\rm th}$ generation in a single theorem, including new 4${}^{\rm th}$ generation  LT methods such \algname{S-Local-GD} and \algname{S-Local-SVRG}. Finally, \citet{FEDLIN} proposed the \algname{FedLin} method, which can be seen as a variant of one of the methods from \citet{LSGDunified2020} allowing for the clients to take different number of local steps (without this leading to any theoretical benefit).

{\bf  5${}^{\rm th}$ generation of LT methods (accelerated).}  In a recent breakthrough, \citet{ProxSkip} proved that a certain new and simple form of local training, embodied in their \algname{ProxSkip} method, leads to {\em provable communication acceleration} in the smooth strongly convex regime, even in the notoriously difficult heterogeneous data setting in which the client data $\{\cD_\iclient\}_{\iclient=1}^{\Mx}$ is allowed to be arbitrarily different. In particular, if each $f_\iclient$ is $L$-smooth and $\mu$-strongly convex, then \algname{ProxSkip} solves \myref{eq:main} in $\cO(\sqrt{\nicefrac{L}{\mu}} \log \nicefrac{1}{\varepsilon})$ communication rounds, which is a significant acceleration when compared with the $\cO(\nicefrac{L}{\mu} \log \nicefrac{1}{\varepsilon})$ complexity of \algname{GD}. 
According to \citet{scaman2019optimal}, this accelerated communication complexity is optimal.
\citet{ProxSkip} provided several extensions of  their method. In particular,  \algname{ProxSkip} was enhanced with a very flexible  DS mechanism which can capture virtually any form of (unbiased and non-variance-reduced) data sampling scheme\footnote{The \algname{ProxSkip} method of \citet{ProxSkip}  can incorporate all forms of DS strategies captured by the {\em arbitrary sampling} approach of \citet{gower2019sgd} which is enabled by their {\em expected smoothness} inequality.}. 
Motivated by this progress, several other methods belonging to the  5${}^{\rm th}$ generation of LT methods were recently proposed. 

First,  \citet{ProxSkip-VR} extended the  \algname{ProxSkip} method via the inclusion of virtually arbitrary {\em variance-reduced} \algname{SGD} methods~\citep{gorbunov2020unified} in lieu of simple \algname{SGD}, inlcuding \algname{SVRG}~\citep{SVRG,S2GD}, \algname{SAGA}~\citep{SAGA}, \algname{JacSketch}~\citep{JacSketch}, \algname{L-SVRG}~\citep{hofmann2015variance,L-SVRG} or \algname{DIANA}~\citep{DIANA,horvath2019stochastic}. 

Second, \citet{RandProx} observed that the Bernoulli-type randomness employed in the \algname{ProxSkip} method whose role is to avoid the computation of an expensive proximity operator  is a special case of a more general principle: the application of an unbiased compressor to the proximity operator, combined with a bespoke variance reduction mechanism to tame the variance introduced by the compressor. \citet{RandProx} further generalized the forward-backward setting used by \citet{ProxSkip} to more complex splitting schemes involving the sum of three operators (e.g., \algname{ADMM}~\citep{ADMM-Hestenes-1969,ADMM-Powel-1969} and \algname{PDDY}~\citep{DY,PDDY2020}), and besides analyzing the smooth strongly convex regime, provided results in the  convex regime as well. 

Finally, \citet{sadiev2022communication} pioneered an alternative approach, based on an LT-friendly modification of the celebrated \algname{Chambolle-Pock} method~\citep{CP2011}. In their \algname{APDA-Inexact} method, the accelerated communication complexity is preserved, but compared to \algname{ProxSkip}, the \# of gradient-type LT steps in each communication round is improved from $\cO(\kappa^{1/2})$ to $\cO(\kappa^{1/3})$ and $\cO(\kappa^{1/4})$, where $\kappa=\nicefrac{L}{\mu}$ is the condition number. They further improve on some results of \citet{ProxSkip} related to the decentralized regime where communication happens along the edges of a connected network.

\section{Contributions}

\begin{table*}[!t]
	\centering
	\footnotesize
%	\scriptsize
	\caption{Comparison of all 5${}^{\rm th}$ generation local training (LT) methods. Our \algname{5GCS} method is the first that supports client sampling (CS). Moreover, similarly to \algname{APDA-Inexact}, our theory allows for the LT solver to be chosen virtually arbitrarily.}
	\label{tbl:main}
	\begin{threeparttable}
		\begin{tabular}{ccccc}
&&&&  \\			
			{\bf 5${}^{\rm th}$ generation LT Method}  &  \bf \shortstack{LT Solver}&    {\bf \shortstack{Data Sampling}}&  {\bf \shortstack{Client Sampling}} & \bf  \shortstack{Reference} \\			
%%%%%%%%%%%%%%%%%%%%%%%%%%%%%%%%%%%%%%%%%%			
			\hline
%&&&&&  \\				
			\algname{ProxSkip} 
			& \algname{GD}, \algname{SGD}	
			& \cmark \tnote{\color{blue}(a)}					
			& \xmark
			& \citet{ProxSkip}\\ 
%&&&&&  \\
%%%%%%%%%%%%%%%%%%%%%%%%%%%%%%%%%%%%%%%%%%			
			\hline
%&&&&&  \\		
			\algname{ProxSkip-VR} 
			& \algname{GD}, \algname{SGD}, \algname{VR-SGD}			
			& \cmark \tnote{\color{blue}(b)}						
			& \xmark
			& \citet{ProxSkip-VR}\\ 
%&&&&&  \\	
%%%%%%%%%%%%%%%%%%%%%%%%%%%%%%%%%%%%%%%%%%			
			\hline
%&&&&&  \\		
			\algname{APDA-Inexact} 
			& any				
			& \xmark
			& \xmark 
			& \citet{sadiev2022communication}\\ 
%&&&&&  \\
%%%%%%%%%%%%%%%%%%%%%%%%%%%%%%%%%%%%%%%%%%			
			\hline
%&&&&&  \\			
			\algname{RandProx} 
			& \algname{GD} 						
			& \xmark
			& \xmark 
			& \citet{RandProx}\\ 
%&&&&&  \\	
%%%%%%%%%%%%%%%%%%%%%%%%%%%%%%%%%%%%%%%%%%			
			\hline
%&&&&&  \\			
			\algname{5GCS} 
			& any						
			& \cmark
			& \cmark 
			& this work \\ 
%&&&&&  \\	
%%%%%%%%%%%%%%%%%%%%%%%%%%%%%%%%%%%%%%%%%%
		\end{tabular}
		%%%%%%%%%%%%%%%%%%%%%%%%%%
		\begin{tablenotes}
			{\scriptsize
				\item [{\color{blue}(a)}]  Only supports non-variance reduced DS on clients.
				\item [{\color{blue}(b)}]  Supports non-variance reduced {\em and} variance-reduced DS on clients.
				}
		\end{tablenotes}
		%%%%%%%%%%%%%%%%%%%%%%%%%%    
	\end{threeparttable}
\end{table*}

Now that the FL community finally managed to show that (appropriately designed) LT techniques, which as we have seen are key behind the success of modern federated optimization methods for solving \myref{eq:main}, 
lead to provable communication acceleration guarantees (in the smooth strongly convex regime), we adopt the stance that further algorithmic and theoretical progress in FL should be focused on advancing the 5${}^{\rm th}$ generation of LT methods. 

To the best of our knowledge, there are only a handful of papers providing methods and results that belong to this latest generation of LT methods~\citep{ProxSkip, ProxSkip-VR, sadiev2022communication, RandProx}. A close examination of these works reveals that much is yet to be discovered.

\subsection{The open problem we address in this work}

\begin{quote}\em \footnotesize The starting point of our work is the observation that  none of the 5${}^{\rm th}$ generation local training (LT) methods support client sampling (CS). In other words, it is not  known whether it is possible to design a method that would  enjoy communication acceleration via LT and at the same time also support CS.
\end{quote}

 The problem is harder than one may initially think. We have talked to several people about this, including the authors of the \algname{ProxSkip} method. It turns out that they have tried---``very hard'' in their own words---but their efforts did not bear any fruit. We have tried as well, and failed. The analysis of \algname{ProxSkip} is remarkably tight, and every adaptation towards supporting CS seems to either lead to technical problems during the proof construction, or to a loss of communication acceleration. In fact, it is not even clear how should a CS variant of \algname{ProxSkip} look like. Our attempts at guessing what such a method could look like failed as well, and the variants we brainstormed diverged in our numerical experiments as soon as CS was enabled. 
 
Fortunately, it turns out that these negative results were helpful to us after all. Indeed, they led us to the idea that we should try to develop an entirely different method; one that is not based on either \algname{ProxSkip} nor \algname{APDA-Inexact}. Once we started to think outside the box created by our pre-conceived solution path, we eventually managed to succeed.

\subsection{Summary of contributions}

\begin{table*}[!t]
	\centering
	\footnotesize
%	\scriptsize
	\caption{Variants of \algname{5GCS} (Algorithm~\ref{alg:5GCS}) depending on the choice of the LT procedure  run by clients $\iclient \in \set$ in the current cohort. $\Mx = $ number of clients; $\Cx = $ cohort size. }
	\label{tbl:variants}
	\begin{threeparttable}
		\begin{tabular}{cccc}
			{\bf Algorithm}  &  \bf \shortstack{Local Training via Subroutine $\cA$} & \bf  \shortstack{Communication Complexity}  &   \bf  \shortstack{Theorem} \\			
%%%%%%%%%%%%%%%%%%%%%%%%%%%%%%%%%%%%%%%%%%		
			\hline
			\algname{5GCS${}_\infty$} \tnote{\color{blue}(a)} 
			& $K=\infty$ steps of \algname{GD}				& $\cO\left(\left(\frac{\Mx}{\Cx}+\sqrt{\frac{\Mx}{\Cx}\frac{L}{\mu }}\right)\log \frac{1}{\varepsilon}\right)$
			&  \ref{thm:5GCS-infty} \\				
%%%%%%%%%%%%%%%%%%%%%%%%%%%%%%%%%%%%%%%%%%		
			\hline
			\algname{5GCS${}_K$} 
			& $K=\cO(\sqrt{\frac{\Cx}{\Mx}\frac{L}{\mu}})$ steps of \algname{GD}				& $\cO\left(\left(\frac{\Mx}{\Cx}+\sqrt{\frac{\Mx}{\Cx}\frac{L}{\mu }}\right)\log \frac{1}{\varepsilon}\right)$
			&  \ref{thm:5GCS} \\
%%%%%%%%%%%%%%%%%%%%%%%%%%%%%%%%%%%%%%%%%%		
			\hline
			\algname{5GCS${}_0$} \tnote{\color{blue}(b)} 
			& $K=0$ steps of \algname{GD}						
			& $ \cO\left(\frac{\Mx}{\Cx}\frac{L}{\mu} \log \frac{1}{\varepsilon}\right)$ \tnote{\color{blue}(c)}	
			& \ref{thm:5GCS-0} 	
			\\			
%%%%%%%%%%%%%%%%%%%%%%%%%%%%%%%%%%%%%%%%%%		
			\hline
			\algname{5GCS${}_\cA$} 
			& any method $\cA$ (as long as it satisfies Assumption~\ref{ass:GTPS})			& $\cO\left(\left(\frac{\Mx}{\Cx}+\sqrt{\frac{\Mx}{\Cx}\frac{L}{\mu }}\right)\log \frac{1}{\varepsilon}\right)$
			&  \ref{thm:INEXACTPPanyM}					
%%%%%%%%%%%%%%%%%%%%%%%%%%%%%%%%%%%%%%%%%%
		\end{tabular}
		%%%%%%%%%%%%%%%%%%%%%%%%%%
		\begin{tablenotes}
			{\scriptsize
				\item [{\color{blue}(a)}] This method can be found in the appendix as Algorithm~\ref{alg:5GCS-infty}.
				\item [{\color{blue}(b)}] This method can be found in the appendix as Algorithm~\ref{alg:5GCS-0}.				
				\item [{\color{blue}(c)}]  % Belongs to 4${}^{\rm th}$ generation of LT methods since 
			Does not have accelerated communication complexity. Indeed, the communication complexity is $\cO\left(\nicefrac{L}{\mu} \log \nicefrac{1}{\varepsilon}\right)$ instead of $\cO\left(\sqrt{\nicefrac{L}{\mu}} \log \nicefrac{1}{\varepsilon}\right)$ in the $\Cx=\Mx$ regime.
			
				}
		\end{tablenotes}
		%%%%%%%%%%%%%%%%%%%%%%%%%%    
	\end{threeparttable}
\end{table*}

We are now ready to outline the key insights and contributions of our work.  Our main idea is to start our development with the remarkable \algname{Point-SAGA} method of \citet{defazio2016simple}. The key appealing property of this method is that it can solve \myref{eq:main} with an accelerated rate in the smooth strongly convex regime. However, \algname{Point-SAGA} has two critical drawbacks: 

\phantom{X} (i) In each communication round, \algname{Point-SAGA}  samples  a single client only, uniformly at random, which means it supports a very rudimentary and hence not practically interesting form of CS only.

\phantom{X} (ii) \algname{Point-SAGA} requires a prox-oracle for each $f_\iclient$, where $\iclient$ is the active client, i.e., 
\[\squeeze 
\mathrm{prox}_{\frac{1}{\tau} f_\iclient} (x) \eqdef \arg \min \limits_{u\in \R^d} \left \{  f_\iclient(u) + \frac{\tau}{2}\|x-u\|^2 \right\}\]
for some $x\in \R^d$ and $\tau>0$ in each communication round, and do it exactly. This is problematic, since exact evaluation of the proximity operator is rarely possible, and inexact evaluation (with a small error) may be overly expensive, imparting an excessive computational burden on the clients. 

Our main contributions  can be summarized as follows.

$\diamond$ We propose a new LT method for FL, which we call \algname{5GCS} (Algorithm~\ref{alg:5GCS}), which achieves accelerated communication complexity, and also supports client sampling. To the best of our knowledge, this is the first 5${}^{\rm th}$ generation LT method which works with client sampling (see Table~\ref{tbl:main}). Moreover, according to \citet{NIPS2016_645098b0}, the communication complexity of \algname{5GCS} is optimal.

$\diamond$ Our method supports arbitrary LT subroutines as long as they satisfy a certain technical assumption (Assumption~\ref{ass:GTPS}). See Table~\ref{tbl:variants} for a list of four variants of \algname{5GCS} depending on what LT subroutine is applied, and the associated communication complexities.

$\diamond$ When an infinity of \algname{GD} steps is used as the LT subroutine, our method  \algname{5GCS} in each communication round evaluates the prox of  $f_\iclient$ for all clients $\iclient$ in the cohort, and  reduces to  a minibatch version of \algname{PointSAGA}, which is  new\footnote{There is one exception: this method was recently analyzed by \citet{RandProx}.}. While this method enjoys accelerated communication complexity, its reliance on a prox oracle puts a heavy computation burden on the clients. On the other hand, when zero \algname{GD} steps are used as a subroutine, our method achieves linear but nonaccelerated communication complexity only. Fortunately, it is sufficient to apply a relatively small number of \algname{GD} steps as the LT subroutine while preserving the accelerated communication complexity of minibatch \algname{PointSAGA}.

$\diamond$ Several further contributions are mentioned in the remaining text.

\section{Main Results}

\begin{algorithm*}[!t]
	\caption{\algn{5GCS}}
	\footnotesize
	\begin{algorithmic}[1]\label{alg:5GCS}
		\STATE  \textbf{Input:} initial primal iterates $x^0\in\mathbb{R}^d$; initial dual iterates $u_1^0, \dots,u_{\Mx}^0 \in\mathbb{R}^d$; primal stepsize $\gammaM>0$; dual stepsize $\tauM>0$; cohort size $\Cx\in \{1,\dots,\Mx\}$
		\STATE  \textbf{Initialization:}  $v^0\eqdef \sum_{\iclient=1}^\Mx u_\iclient^0$  \hfill {\color{gray} \footnotesize $\diamond$ The server initiates $v^{0}$ as the sum of the initial dual iterates} 
		\FOR{communication round $t=0, 1, \ldots$} 
		\STATE Choose a cohort $\set\subset \{1,\ldots,\Mx\}$ of clients of cardinality $\Cx$, uniformly at random \hfill {\color{gray} \footnotesize $\diamond$  CS step} 
		\STATE Compute $\hat{x}^{t} = \frac{1}{1+\gammaM\mu} \left(x^t - \gammaM v^t\right)$ and broadcast it to the clients  in the cohort 
		\FOR{$\iclient\in \set$}
		\STATE Find $\lastlocittermk$ as the final point after $\Kx$ iterations of some local optimization algorithm $\mathcal{A}_\iclient$, initiated with $y_\iclient^0=\hat{x}^t$, for solving the optimization problem \hfill {\color{gray} \footnotesize $\diamond$ Client $\iclient$ performs $\Kx$ LT steps} 
		\begin{eqnarray}
			\squeeze 
			\lastlocittermk \approx \argmin \limits_{y\in\mathbb{R}^d}\left\{\localfuni(y) \eqdef  F_\iclient(y)+\frac{\tauM}{2} \sqnorm{y-\left(\hat{x}^\kstep+\frac{1}{\tauM}u_\iclient^t\right)}\right\}\label{localprob}
		\end{eqnarray}
		\STATE Compute $u_\iclient^{t+1}=\nabla F_\iclient(\lastlocittermk)$ and send it to the server \hfill {\color{gray} \footnotesize $\diamond$ Client $\iclient$ updates its dual iterate} 
		\ENDFOR
		\FOR{$\iclient\in\{1, \ldots,\Mx\}\backslash \set$}
		\STATE $u_{\iclient}^{t+1}\eqdef u_{\iclient}^t $ \hfill {\color{gray} \footnotesize $\diamond$ Non-participating clients do nothing} 
		\ENDFOR
		\STATE $v^{t+1} \eqdef \sum_{\iclient=1}^{\Mx} u_\iclient^{t+1} $ \hfill {\color{gray} \footnotesize $\diamond$ The server maintains $v^{t+1}$ as the sum of the dual iterates} 
		\STATE  $x^{t+1} \eqdef \hat{x}^{t}- \gammaM \frac{\Mx}{\Cx} (v^{t+1}-v^t)$ \hfill {\color{gray} \footnotesize $\diamond$ The server updates the primal iterate} 
		\ENDFOR
	\end{algorithmic}
\end{algorithm*}
In this section we describe our new method, \algname{5GCS} (Algorithm~\ref{alg:5GCS}) for solving \myref{eq:main}, and formulate our main convergence results (see Table~\ref{tbl:variants} for a summary).
 % As mentioned before, \algname{5GCS} is the first {\color{red}5}${}^{\rm th}$ {\color{red}G}eneration  local training method supporting {\color{red}C}lient {\color{red}S}ampling. 

\subsection{Convexity and smoothness}
In our analysis  we focus on the regime when each $f_\iclient$ is $L$-smooth and $\mu$-strongly convex, which are standard assumptions in the convex optimization literature\footnote{While many practical FL models  involve neural networks which lead to nonconvex problems instead, in our work we focus on resolving a certain key open problem in the foundations of FL for which there is no answer even in the regime we consider. }. 

\begin{assumption}\label{ass:main} The functions $f_\iclient$ are $L$-smooth and $\mu$-strongly convex for all $\iclient \in \{1,\dots,\Mx\}$.
\end{assumption}

We shall use this assumption in what follows without explicitly mentioning this.  Recall that a continuously differentiable function $\phi :\mathbb{R}^{d}\to\mathbb{R}$ is $L$-smooth if
$\phi(x)-\phi(y)-\langle\nabla \phi(y),x-y\rangle \leq \frac{L}{2}\|x-y\|^2$ for all $x,y\in \R^d$, 
and $\mu$-strongly convex if
$\phi(x)-\phi(y)-\langle\nabla \phi(y),x-y\rangle \geq \frac{\mu}{2}\|x-y\|^2$ for all $x,y\in \R^d$.

\subsection{Problem reformulation and its dual}\label{sec:H}

Our method applies to a certain reformulation of \myref{eq:main} which we shall now describe. Let $\Koper\colon\R^d\to\R^{\Mx  d}$ be the linear operator which maps $x\in \R^{d}$ into the vector $(x,\dots,x)\in \R^{\Mx d}$ consisting of $\Mx$ copies of $x$. First, notice that $F_\iclient(x)\eqdef \frac{1}{\Mx}(f_\iclient(x)-\frac{\mu}{2}\sqnorm{x})$ is convex and $L_F$-smooth with  $L_F \eqdef \frac{1}{\Mx}(L-\mu)$. Further,  define $F:\R^{\Mx d}\to \R$ via
$F(x_1, \dots,x_\Mx) \eqdef \sum_{\iclient=1}^{\Mx} F_\iclient(x_\iclient).$ 

Having established the necessary notation, we consider the following reformulation of problem~\myref{eq:main}: 
\begin{equation}
	\squeeze 
	x^\star = \argmin \limits_{x\in\R^d}  \left[f(x) \eqdef F(\Koper x) + \frac{\mu}{2}\sqnorm{x}\right].\label{eq:main-new}
\end{equation}
It is straightforward to see that $f$ from \myref{eq:main} and \myref{eq:main-new} are identical functions. The  dual problem to \myref{eq:main-new} is
\begin{align*}\label{dualnew}
	\squeeze 
	u^\star = \argmax\limits_{u\in\R^{\Mx d}} \, \left( \frac{1}{2\mu}\sqnorm{\sum \limits_{\iclient=1}^{\Mx} u_\iclient}+\sum \limits_{\iclient=1}^{\Mx} F_\iclient^*(u_\iclient)\right),\notag
\end{align*}
where $F_\iclient^*$ is the Fenchel conjugate of $F_\iclient$, defined by $F_\iclient^*(y)\eqdef\sup_{x\in\R^{d}}\{\langle x,y \rangle-F_\iclient(x) \}.$
Under Assumption~\ref{ass:main}, the primal and dual problems have unique optimal solutions $x^\star$ and $u^\star$, respectively.

\subsection{The 5GCS algorithm}

Our proposed algorithm, \algname{5GCS}, is formalized as Algorithm~\ref{alg:5GCS}. The method produces a sequence of primal iterates $x^t$, and a sequence of dual iterates
$u^t = (u_1^t, \dots, u_\Mx^t)$. We have added several comments explaining the steps, and believe that the method should be easy to parse without additional commentary.  
 In each communication round $t$, the participating clients $\iclient \in \set$ in parallel perform LT via $K$ steps of \algname{GD} applied to minimizing function $\localfuni$; see  \myref{localprob}. Below we outline four special variants of \algname{5GCS}, depending on the choice of the LT subroutines $\{\cA_\iclient\}_{\iclient=1}^{\Mx}$.

\subsection{LT subroutine: GD with $\Kx = +\infty$ steps (i.e., prox)}

The choice $\Kx = +\infty$ corresponds to exact minimization of function $\localfuni$ defined in \myref{localprob}, i.e., to the evaluation of the prox operator of $F_\iclient$ for all $\iclient \in \set$. In this case, \algname{5GCS} reduces to \algname{Minibatch-Point-SAGA} (see  Algorithm~\ref{alg:5GCS-infty}), and its convergence properties are  described by the next result.

\begin{theorem}\label{thm:5GCS-infty} Consider Algorithm~\ref{alg:5GCS} (\algname{5GCS}) with the LT solver being \algname{GD} run for $K=+\infty$ iterations (this is equivalent to Algorithm~\ref{alg:5GCS-infty}; we shall also call the method \algname{5GCS${}_\infty$}).
	Let $\gammaM>0$, $\tauM>0$ and $\gammaM \tauM \leq \frac{1}{\Mx}$. Then for the Lyapunov function
	\begin{equation*}
	\squeeze	
	\Psi^{\kstep}\eqdef  \frac{1}{\gammaM}\sqnorm{x^{\kstep}-x^\star}+\frac{\Mx}{\Cx}\left(\frac{1}{\tauM}+2\frac{1}{L_F}\right)\sqnorm{u^{\kstep}-u^\star},
	\end{equation*}
 the iterates of the method satisfy
	$
		\Exp{\Psi^{\Tx}}\leq (1-\rho)^\Tx \Psi^0,
$
	where 
$ 	\rho \eqdef  \min\left(\frac{\gammaM\mu}{1+\gammaM\mu}, \frac{\Cx}{\Mx}\frac{2\tauM }{L_F+2\tauM }\right)<1.
$

\end{theorem}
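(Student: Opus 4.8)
The plan is to prove a one-round conditional contraction $\mathbb{E}[\Psi^{t+1}\mid\mathcal{F}^t]\le(1-\rho)\Psi^t$, where $\mathcal{F}^t$ collects all randomness up to the start of round $t$, and then unroll it with the tower property to get $\mathbb{E}[\Psi^T]\le(1-\rho)^T\Psi^0$. I would first record the optimality relations for \myref{eq:main-new}: $u_\iclient^\star=\nabla F_\iclient(x^\star)$ for every $\iclient$, $\sum_\iclient u_\iclient^\star=-\mu x^\star$ (equivalently $\nabla f(x^\star)=0$), and hence the server point at optimum satisfies $\hat x^\star=\frac{1}{1+\gammaM\mu}(x^\star-\gammaM v^\star)=x^\star$; note that the strong convexity of each $f_\iclient$ enters the analysis only through the explicit quadratic $\tfrac\mu2\sqnorm{x}$ in \myref{eq:main-new}, which is handled by the step producing $\hat x^t$. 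Since $\Kx=+\infty$ makes $\lastlocittermk$ the exact minimizer of $\localfuni$, its optimality condition rewrites the dual step as a resolvent step: $u_\iclient^{t+1}=\nabla F_\iclient(\lastlocittermk)$ with $\lastlocittermk=\hat x^t+\tfrac1{\tauM}(u_\iclient^t-u_\iclient^{t+1})$. For $\iclient\notin\set$ I would introduce the virtual update $\bar u_\iclient^{t+1}\eqdef\nabla F_\iclient(\mathrm{prox}_{\frac{1}{\tauM}F_\iclient}(\hat x^t+u_\iclient^t/\tauM))$ and $\bar v^{t+1}\eqdef\sum_\iclient\bar u_\iclient^{t+1}$; uniform cohort sampling gives $\Pr[\iclient\in\set]=\Cx/\Mx$, so $\mathbb{E}[u_\iclient^{t+1}\mid\mathcal{F}^t]=\tfrac{\Cx}{\Mx}\bar u_\iclient^{t+1}+(1-\tfrac{\Cx}{\Mx})u_\iclient^t$, $\mathbb{E}[v^{t+1}-v^t\mid\mathcal{F}^t]=\tfrac{\Cx}{\Mx}(\bar v^{t+1}-v^t)$, and, by the sampling-without-replacement second-moment identity, $\mathbb{E}[\sqnorm{v^{t+1}-v^t}\mid\mathcal{F}^t]$ is controlled by $\tfrac{\Cx}{\Mx}\sum_\iclient\sqnorm{\bar u_\iclient^{t+1}-u_\iclient^t}$ plus a multiple of $\sqnorm{\bar v^{t+1}-v^t}$.

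The analytic core is a per-client inequality coming from the $\tfrac1{L_F}$-cocoercivity of $\nabla F_\iclient$ (valid since each $F_\iclient$ is convex and $L_F$-smooth). Applying it at $y=\lastlocittermk$ (or its virtual analogue), substituting the resolvent identity $\lastlocittermk-x^\star=(\hat x^t-x^\star)+\tfrac1{\tauM}(u_\iclient^t-u_\iclient^{t+1})$ and the polarization identity, I obtain, for every $\iclient$,
\[
\bigl\langle\bar u_\iclient^{t+1}-u_\iclient^\star,\,\hat x^t-x^\star\bigr\rangle\ \ge\ \Bigl(\tfrac{1}{L_F}+\tfrac{1}{2\tauM}\Bigr)\sqnorm{\bar u_\iclient^{t+1}-u_\iclient^\star}+\tfrac{1}{2\tauM}\sqnorm{\bar u_\iclient^{t+1}-u_\iclient^t}-\tfrac{1}{2\tauM}\sqnorm{u_\iclient^t-u_\iclient^\star},
\]
which I then sum over all $\iclient$, using $\sum_\iclient(\bar u_\iclient^{t+1}-u_\iclient^\star)=\bar v^{t+1}+\mu x^\star$. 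On the primal side, the identity $\hat x^t-x^\star=\frac{1}{1+\gammaM\mu}\bigl((x^t-x^\star)-\gammaM(v^t-v^\star)\bigr)$ (expand $\hat x^t$ and use $v^\star=-\mu x^\star$) is the source of the factor $\frac{\gammaM\mu}{1+\gammaM\mu}$. Expanding $x^{t+1}=\hat x^t-\gammaM\tfrac{\Mx}{\Cx}(v^{t+1}-v^t)$ and taking conditional expectation gives $\mathbb{E}[\sqnorm{x^{t+1}-x^\star}\mid\mathcal{F}^t]=\sqnorm{\hat x^t-x^\star}-2\gammaM\langle\hat x^t-x^\star,\,\bar v^{t+1}-v^t\rangle+\gammaM^2\tfrac{\Mx^2}{\Cx^2}\mathbb{E}[\sqnorm{v^{t+1}-v^t}\mid\mathcal{F}^t]$; I would split $\bar v^{t+1}-v^t=(\bar v^{t+1}-v^\star)-(v^t-v^\star)$, bound the first piece via the summed cocoercivity inequality, and fold the second into the $\hat x^t$--$x^t$ identity.

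Finally I would form $\tfrac1{\gammaM}\mathbb{E}[\sqnorm{x^{t+1}-x^\star}\mid\mathcal{F}^t]+\tfrac{\Mx}{\Cx}(\tfrac1{\tauM}+\tfrac2{L_F})\mathbb{E}[\sqnorm{u^{t+1}-u^\star}\mid\mathcal{F}^t]$, plug in the dual decomposition $\tfrac{\Mx}{\Cx}\mathbb{E}[\sqnorm{u^{t+1}-u^\star}\mid\mathcal{F}^t]=\sum_\iclient\sqnorm{\bar u_\iclient^{t+1}-u_\iclient^\star}+(\tfrac{\Mx}{\Cx}-1)\sum_\iclient\sqnorm{u_\iclient^t-u_\iclient^\star}$, substitute the summed cocoercivity bound to cancel the cross term against negative multiples of $\sqnorm{\bar u_\iclient^{t+1}-u_\iclient^\star}$ and $\sqnorm{\bar u_\iclient^{t+1}-u_\iclient^t}$, and invoke the stepsize restriction $\gammaM\tauM\le\tfrac1{\Mx}$ so that the positive variance contribution $\gammaM^2\tfrac{\Mx}{\Cx}\sum_\iclient\sqnorm{\bar u_\iclient^{t+1}-u_\iclient^t}$ is absorbed by the negative $\tfrac{\gammaM}{\tauM}$-weighted terms. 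Collecting the surviving multiples of $\sqnorm{x^t-x^\star}$ and $\sqnorm{u^t-u^\star}$ should yield $(1-\rho)\Psi^t$ with $\rho=\min\bigl(\tfrac{\gammaM\mu}{1+\gammaM\mu},\ \tfrac{\Cx}{\Mx}\tfrac{2\tauM}{L_F+2\tauM}\bigr)$: the first argument governs the primal block (the $\hat x$-contraction rate), the second the dual block, with $\Cx/\Mx$ the per-client participation probability and $\tfrac{2\tauM}{L_F+2\tauM}$ arising from balancing the $\tfrac1{L_F}$ and $\tfrac1{\tauM}$ coefficients against the weight $\tfrac2{L_F}$ in $\Psi$.

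I expect the main obstacle to be this last bookkeeping step: arranging all coefficients so that precisely $\rho$ emerges with no leftover positive terms. Two points need care: the part of the minibatch variance proportional to $\sqnorm{\bar v^{t+1}-v^t}$ must be absorbed together with the cross term and the quadratic completion of $\sqnorm{x^{t+1}-x^\star}$; and the coupling between $\hat x^t$ and $x^t$ mixes the primal and dual errors, so the weight $2/L_F$ in the Lyapunov function and the constant $1/\Mx$ in the stepsize bound are presumably dictated by exactly this computation --- the plan is to carry them as free parameters and verify at the end that the stated choices are the ones that close the argument.
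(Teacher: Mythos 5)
Your plan is correct and tracks the paper's own proof essentially step for step: expand the primal and dual second moments through the client-sampling operator's variance identity, absorb the cross term via the optimality identity for $\hat x^t$ (which produces the $\tfrac{\gammaM\mu}{1+\gammaM\mu}$ factor), and control the dual error via $1/L_F$-cocoercivity of $\nabla F_\iclient$ --- which you phrase per client, while the paper equivalently invokes $1/L_F$-strong monotonicity of $\partial F^*$ on the aggregate, but it is the same inequality --- closing the argument with $\gammaM\tauM\le 1/\Mx$ and the stated $\rho$. The remaining organizational differences (you split the primal second moment around $\hat x^t$ rather than around $\mathbb{E}[x^{t+1}\mid\mathcal{F}^t]$) are purely cosmetic.
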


The following corollary gives a bound on the number of communication rounds needed to solve the problem.
\begin{corollary}\label{cor:5GCS-infty}
Choose any $0<\varepsilon<1$. If
	we choose $\gammaM=\sqrt{\frac{2\Cx}{L_F\mu \Mx^2}}$ and $\tauM=\sqrt{\frac{L_F\mu}{2\Cx}}$, then  in order to guarantee $\Exp{\Psi^{\Tx}}\leq \varepsilon \Psi^0$, it suffices to take
\[
	\squeeze	
	\Tx \geq	\left(\frac{\Mx}{\Cx}+\sqrt{\frac{\Mx}{ \Cx} \frac{L-\mu}{2\mu}}\right)\log \frac{1}{\varepsilon} =\tilde{\cO}\left(\frac{\Mx}{\Cx}+\sqrt{\frac{\Mx}{\Cx}\frac{L}{\mu}}\right)
\]
	communication rounds.
\end{corollary}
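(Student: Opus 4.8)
\emph{Plan for Theorem~\ref{thm:5GCS-infty} (the corollary is immediate from it; see the end).} The plan is to prove the one-step contraction $\Exp{\Psi^{t+1}\mid\mathcal F^t}\le(1-\rho)\Psi^t$ and unroll it. The first step is to pin down the fixed point. Writing $u_\iclient^\star\eqdef\nabla F_\iclient(x^\star)$, the first-order optimality condition for \myref{eq:main-new} reads $\sum_\iclient\nabla F_\iclient(x^\star)+\mu x^\star=0$, i.e.\ $v^\star\eqdef\sum_\iclient u_\iclient^\star=-\mu x^\star$; hence $\hat x^\star=\tfrac1{1+\gammaM\mu}(x^\star-\gammaM v^\star)=x^\star$, the inner problem \myref{localprob} is minimized at $y_\iclient^{\star}=x^\star$ with $\nabla F_\iclient(y_\iclient^{\star})=u_\iclient^\star$, so $(x^\star,u^\star)$ is stationary for Algorithm~\ref{alg:5GCS}. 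I abbreviate $a\eqdef\hat x^t-x^\star$, $p_\iclient\eqdef u_\iclient^t-u_\iclient^\star$, and --- exploiting that \myref{localprob} does not see the sampled cohort --- the ``virtual'' quantities $q_\iclient\eqdef\nabla F_\iclient(y_\iclient^{\star,t})-u_\iclient^\star$ and $\delta_\iclient\eqdef q_\iclient-p_\iclient$ for \emph{all} $\iclient$, with $\Sigma\eqdef\sum_\iclient\delta_\iclient$. The optimality condition of the prox in \myref{localprob} gives the identity $y_\iclient^{\star,t}-x^\star=a-\tfrac1{\tauM}\delta_\iclient$.

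Second, I would take conditional expectation over the uniform cohort $\set$ of size $\Cx$. Since only participating clients update their dual variable, $\Exp{\sqnorm{u^{t+1}-u^\star}\mid\mathcal F^t}=\tfrac\Cx\Mx\sum_\iclient\sqnorm{q_\iclient}+(1-\tfrac\Cx\Mx)\sqnorm{u^t-u^\star}$; and since $x^{t+1}=\hat x^t-\gammaM\tfrac\Mx\Cx\sum_{\iclient\in\set}\delta_\iclient$, the \emph{exact} sampling-without-replacement second-moment identity for $\sum_{\iclient\in\set}\delta_\iclient$ yields
\begin{equation*}
\squeeze\tfrac1\gammaM\Exp{\sqnorm{x^{t+1}-x^\star}\mid\mathcal F^t}=\tfrac1\gammaM\sqnorm a-2\langle a,\Sigma\rangle+\gammaM\tfrac{\Mx(\Cx-1)}{\Cx(\Mx-1)}\sqnorm\Sigma+\gammaM\tfrac{\Mx(\Mx-\Cx)}{\Cx(\Mx-1)}\squeeze\sum_\iclient\sqnorm{\delta_\iclient}.
\end{equation*}
Adding the dual part with weight $\tfrac\Mx\Cx D$, $D\eqdef\tfrac1{\tauM}+\tfrac2{L_F}$, produces an upper bound on $\Exp{\Psi^{t+1}\mid\mathcal F^t}$ in terms of $a$, $\{p_\iclient\}$, $\{q_\iclient\}$, $\Sigma$ only.

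Third is the core deterministic estimate. Baillon--Haddad (cocoercivity of $\nabla F_\iclient$ from convexity and $L_F$-smoothness of $F_\iclient$) at $y_\iclient^{\star,t}$ and $x^\star$, with the prox identity above, gives $\langle q_\iclient,a\rangle\ge(\tfrac1{L_F}+\tfrac1{\tauM})\sqnorm{q_\iclient}-\tfrac1{\tauM}\langle q_\iclient,p_\iclient\rangle$. Writing $-2\langle a,\Sigma\rangle=-2\sum_\iclient\langle a,q_\iclient\rangle+2\langle a,v^t-v^\star\rangle$ and inserting this bound, the choice $D=\tfrac1{\tauM}+\tfrac2{L_F}$ collapses the coefficient of $\sum_\iclient\sqnorm{q_\iclient}$ to $-\tfrac1{\tauM}$, and the identity $-\sqnorm{q_\iclient}+2\langle q_\iclient,p_\iclient\rangle=-\sqnorm{\delta_\iclient}+\sqnorm{p_\iclient}$ turns the $q$-terms into $-\tfrac1{\tauM}\sum_\iclient\sqnorm{\delta_\iclient}+\tfrac1{\tauM}\sqnorm{u^t-u^\star}$. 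All $\delta$- and $\Sigma$-terms then carry coefficient $\gammaM\tfrac{\Mx(\Cx-1)}{\Cx(\Mx-1)}\sqnorm\Sigma+\bigl(\gammaM\tfrac{\Mx(\Mx-\Cx)}{\Cx(\Mx-1)}-\tfrac1{\tauM}\bigr)\sum_\iclient\sqnorm{\delta_\iclient}$; bounding $\sqnorm\Sigma\le\Mx\sum_\iclient\sqnorm{\delta_\iclient}$ the two coefficients add up to exactly $\gammaM\Mx-\tfrac1{\tauM}\le0$, which is where $\gammaM\tauM\le\tfrac1\Mx$ enters. What survives is $\tfrac1\gammaM\sqnorm a+2\langle a,v^t-v^\star\rangle+\bigl(\tfrac1{\tauM}+D(\tfrac\Mx\Cx-1)\bigr)\sqnorm{u^t-u^\star}$, and via $x^t-x^\star=(1+\gammaM\mu)a+\gammaM(v^t-v^\star)$ I finish by checking: (i) the $2\times2$ form $(1-\rho)\tfrac1\gammaM\sqnorm{(1+\gammaM\mu)a+\gammaM(v^t-v^\star)}-\tfrac1\gammaM\sqnorm a-2\langle a,v^t-v^\star\rangle$ is positive semidefinite whenever $\rho\le\tfrac{\gammaM\mu}{1+\gammaM\mu}$ (its determinant condition reduces to $s\cdot\tfrac{1+2\gammaM\mu}{1+\gammaM\mu}\ge1$ with $s=(1-\rho)(1+\gammaM\mu)\ge1$); and (ii) $\tfrac1{\tauM}+D(\tfrac\Mx\Cx-1)\le(1-\rho)\tfrac{\Mx D}\Cx$ whenever $\rho\le\tfrac\Cx\Mx\tfrac{2\tauM}{L_F+2\tauM}$ (using $\tfrac1{\tauM D}=\tfrac{L_F}{L_F+2\tauM}$). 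Taking $\rho=\min\bigl(\tfrac{\gammaM\mu}{1+\gammaM\mu},\tfrac\Cx\Mx\tfrac{2\tauM}{L_F+2\tauM}\bigr)$ makes both hold; unrolling $\Exp{\Psi^{t+1}\mid\mathcal F^t}\le(1-\rho)\Psi^t$ gives $\Exp{\Psi^T}\le(1-\rho)^T\Psi^0$, and $\rho<1$ since each argument of the $\min$ is $<1$.

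The main obstacle is the bookkeeping in step three: the minibatch-sampling variance, the prox/cocoercivity descent, and the strong-convexity contraction baked into the $\hat x^t$-update must all be reconciled simultaneously, and this only works because (a) the dual Lyapunov weight is taken to be exactly $\tfrac1{\tauM}+\tfrac2{L_F}$, not just up to constants, and (b) one keeps the \emph{exact} finite-population variance with coefficients $\tfrac{\Mx(\Cx-1)}{\Cx(\Mx-1)}$ and $\tfrac{\Mx(\Mx-\Cx)}{\Cx(\Mx-1)}$; the looser estimate $\Exp{\sqnorm{\sum_{\iclient\in\set}\delta_\iclient}\mid\mathcal F^t}\le\tfrac\Cx\Mx\sum_\iclient\sqnorm{\delta_\iclient}+\tfrac{\Cx^2}{\Mx^2}\sqnorm\Sigma$ leaves a residual $\tfrac1{\tauM\Cx}\sum_\iclient\sqnorm{\delta_\iclient}$ with positive sign that cannot be absorbed. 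Finally, Corollary~\ref{cor:5GCS-infty} follows at once: the stated $\gammaM,\tauM$ satisfy $\gammaM\tauM=\tfrac1\Mx$ so Theorem~\ref{thm:5GCS-infty} applies, substituting them shows the second (dual) rate is the smaller and equals $\bigl(\tfrac\Mx\Cx+\sqrt{\tfrac\Mx\Cx\tfrac{L-\mu}{2\mu}}\bigr)^{-1}$ (using $\Mx L_F=L-\mu$), and since $\log\tfrac1{1-\rho}\ge\rho$ it suffices to take $T\ge\tfrac1\rho\log\tfrac1\varepsilon$.
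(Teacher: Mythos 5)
Your derivation of the corollary is correct and follows the same route as the paper: verify $\gammaM\tauM = \frac{1}{\Mx}$ so Theorem~\ref{thm:5GCS-infty} applies, substitute the stated $\gammaM,\tauM$ to find $\rho_2^{-1} = \frac{\Mx}{\Cx} + \sqrt{\frac{\Mx}{\Cx}\cdot\frac{L-\mu}{2\mu}}$ (using $\Mx L_F = L-\mu$), observe $\rho_1^{-1} = 1 + \sqrt{\frac{\Mx}{\Cx}\cdot\frac{L-\mu}{2\mu}} \le \rho_2^{-1}$ so the $\min$ is attained at $\rho_2$, and take $T \ge \rho_2^{-1}\log\frac1\varepsilon$. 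The auxiliary re-proof of Theorem~\ref{thm:5GCS-infty} you supply is also sound and rests on the paper's ingredients---the exact without-replacement second-moment identity, $\tfrac1{L_F}$-cocoercivity, and the cancellation $\gammaM\Mx - \tfrac1{\tauM} \le 0$---with somewhat different bookkeeping (primal gradient differences $q_\iclient, p_\iclient, \delta_\iclient$ in place of the paper's dual prox notation, and a $2\times 2$ PSD check for the primal contraction where the paper uses the $\langle c+d, c-d\rangle = \sqnorm{c}-\sqnorm{d}$ identity and then drops the $-\frac{\gammaM}{1+\gammaM\mu}\sqnorm{\Koper^\top(u^t-u^\star)}$ term), landing on the identical rate $\rho$.
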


Note that the communication complexity improves as the cohort size $\Cx$ increases, and becomes $\tilde{\cO}(\sqrt{\nicefrac{L}{\mu }}) $ for $\Cx=\Mx$. This recovers the accelerated communication complexity of existing 5${}^{\rm th}$ generation local training (LT) methods \algname{ProxSkip}, \algname{ProxSkip-VR}  and \algname{APDA-Inexact} in the regime when \algname{GD} is used as the LT method. However, unlike these methods, \algname{5GCS${}_\infty$} supports client sampling (CS). In the opposite extreme, i.e., when the cohort size is minimal ($\Cx=1$), the communication complexity of \algname{5GCS${}_\infty$} becomes $ \tilde{\cO}(\Mx+\sqrt{\nicefrac{\Mx L}{\mu}})$. If $\nicefrac{L}{\mu}\leq \Mx$, which will typically be the case in FL settings with a very large number of clients (e.g., cross-device FL), the complexity simplifies to $ \tilde{\cO}(\Mx)$, which says that we need as many communication rounds as there are clients, which makes sense, since we do not assume any form of data homogeneity, and this means that all clients may contain valuable data. In general, as the cohort size $\Cx$ increases, the communication complexity improves, and interpolates between these two extreme cases.

\begin{figure}[!t]
	\centering
	\includegraphics[width=0.99\linewidth]{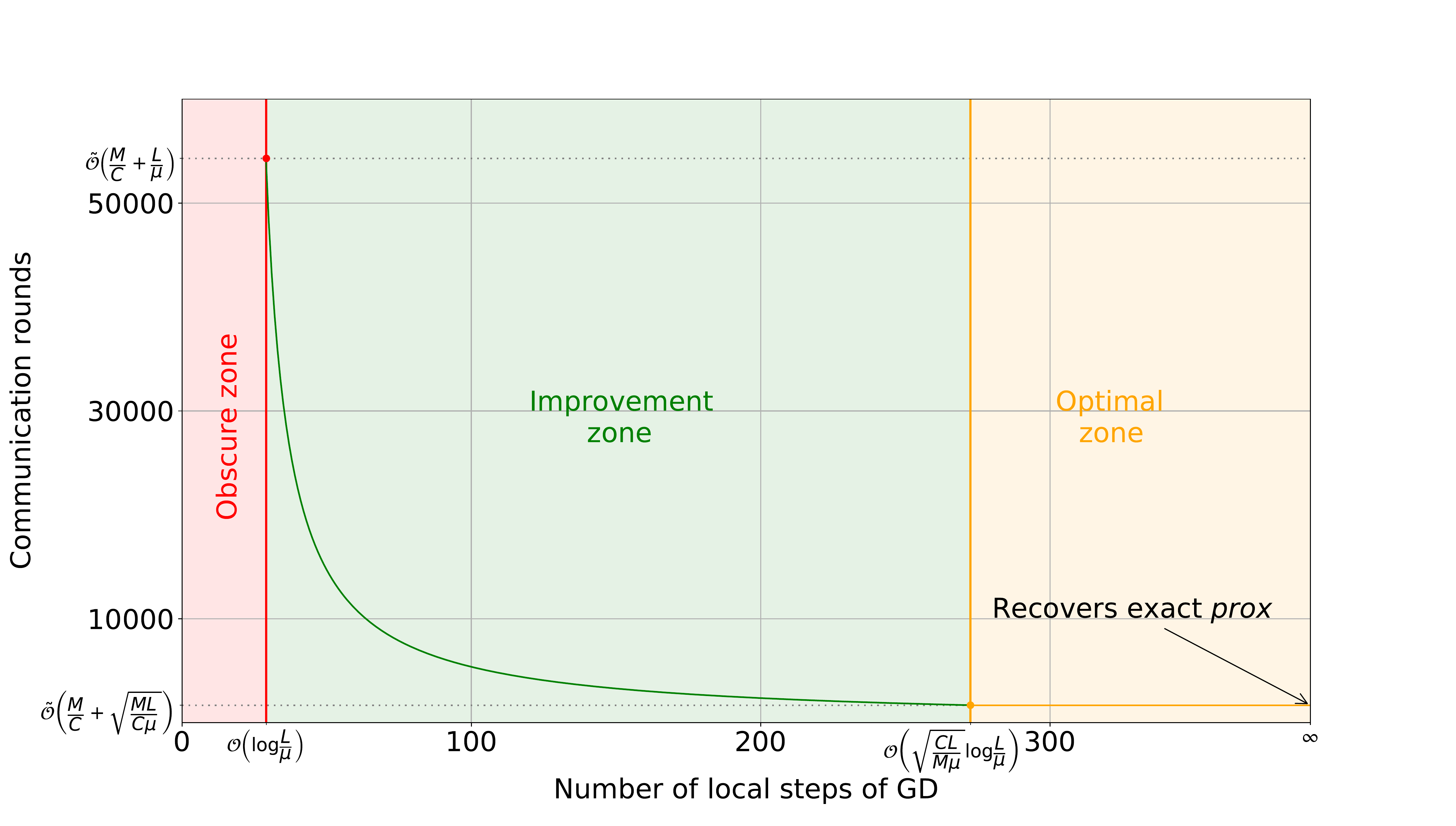}
	\caption{The number of communication rounds of \algname{5GCS} as a function of the number of \algname{GD} steps forming the LT subroutine $\cA$ with $\nicefrac{L}{\mu}=10^4$ and $\nicefrac{\Cx}{\Mx}=0.1$. The key observation is that it is enough to choose $\Kx = \cO(\sqrt{\frac{\Mx}{\Cx}\frac{L}{\mu}})$, which is at the left end-point of the ``optimal zone''. More steps do {\em not} lead to better communication complexity. }
	\label{ris:image}	
\end{figure}

\subsection{LT subroutine: GD with $\Kx = \cO(\sqrt{\frac{\Cx}{\Mx}\frac{L}{\mu}})$ steps}

The key drawback of \algname{5GCS${}_\infty$} is that the LT subroutine needs to take an infinite number of \algname{GD} steps, or equivalently, the method requires the exact evaluation of the prox of $F_\iclient$.  We now show that it is possible to obtain the same accelerated communication complexity as in the $\Kx=+\infty$ case with a finite, and in fact surprisingly small, number of \algname{GD} iterations.
 
\begin{theorem}\label{thm:5GCS} Consider Algorithm~\ref{alg:5GCS} (\algname{5GCS}) with the LT solver being \algname{GD} run for $\Kx \geq \left(\frac{3}{4}\sqrt{\frac{\Cx}{\Mx}\frac{L}{\mu}}+2\right)\log\left(4\frac{L}{\mu}\right)$ iterations.
Let $0<\gammaM\leq \frac{3}{16}\sqrt{\frac{\Cx}{L\mu \Mx}}$ and $\tauM=\frac{1}{2\gammaM \Mx}$. Then for the Lyapunov function
	\begin{equation*}
		\squeeze
		 	\Psi^{\kstep}\eqdef \frac{1}{\gammaM}\sqnorm{x^{\kstep}-x^\star}+\frac{\Mx}{\Cx}\left(\frac{1}{\tauM}+\frac{1}{L_F}\right)\sqnorm{u^{\kstep}-u^\star},
	\end{equation*}
 the iterates of  the method satisfy
$		\Exp{\Psi^{\Tx}}\leq (1-\rho)^\Tx \Psi^0,
$	where 
$
	\rho\eqdef  \min\left\{\frac{\gammaM\mu}{1+\gammaM\mu},\frac{\Cx}{\Mx}\frac{\tauM}{(L_F+\tauM)}\right\}<1.
$

\end{theorem}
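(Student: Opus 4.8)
The plan is to view Theorem~\ref{thm:5GCS} as an \emph{inexact} version of the exact ($\Kx=+\infty$) analysis behind Theorem~\ref{thm:5GCS-infty}, with the local \algname{GD} loop playing the role of an approximate prox-oracle whose error is made small enough to be absorbed into the contraction. Fix a round $t$ and a participating client $m\in\set$. The local objective $\localfuni$ in \eqref{localprob} is $\tauM$-strongly convex and $(L_F+\tauM)$-smooth, so \algname{GD} started from $y_m^{0}=\hat x^t$ satisfies $\sqnorm{y_m^{\Kx,t}-y_m^{\star,t}}\le q^{\Kx}\sqnorm{\hat x^t-y_m^{\star,t}}$ for a contraction factor $q=1-\Theta\!\left(\tauM/(L_F+\tauM)\right)$, where $y_m^{\star,t}\eqdef\argmin\localfuni$. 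Since $\tauM=\tfrac{1}{2\gammaM\Mx}$ and $\gammaM=\Theta\!\left(\sqrt{\Cx/(L\mu\Mx)}\right)$, we get $L_F/\tauM=\Theta\!\left(\sqrt{(\Cx/\Mx)(L/\mu)}\right)$, hence $\Kx=\Theta\!\left(\sqrt{(\Cx/\Mx)(L/\mu)}\,\log(L/\mu)\right)$ \algname{GD} steps --- exactly the lower bound assumed in the statement --- drive the \emph{relative} prox-error below a threshold of order $\mu/L$.

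Next I would turn this into a bound on the dual error. With $u_m^{t+1}=\nabla F_m(y_m^{\Kx,t})$ and writing $\bar u_m^{t+1}\eqdef\nabla F_m(y_m^{\star,t})$ for the exact dual update used in the proof of Theorem~\ref{thm:5GCS-infty}, $L_F$-smoothness of $F_m$ gives $\|u_m^{t+1}-\bar u_m^{t+1}\|\le L_F q^{\Kx/2}\,\|\hat x^t-y_m^{\star,t}\|$. The crucial technical step is to bound $\|\hat x^t-y_m^{\star,t}\|$ by a constant multiple of $\sqrt{\Psi^t}$. Using that $y_m^{\star,t}=\mathrm{prox}_{F_m/\tauM}(\hat x^t+\tfrac{1}{\tauM}u_m^t)$ and that the fixed point satisfies $x^\star=\mathrm{prox}_{F_m/\tauM}(x^\star+\tfrac{1}{\tauM}u_m^\star)$ with $u_m^\star=\nabla F_m(x^\star)$, nonexpansiveness of the prox yields $\|y_m^{\star,t}-x^\star\|\le\|\hat x^t-x^\star\|+\tfrac{1}{\tauM}\|u_m^t-u_m^\star\|$, and expanding $\hat x^t-x^\star=\tfrac{1}{1+\gammaM\mu}\bigl((x^t-x^\star)-\gammaM(v^t-v^\star)\bigr)$ together with $\|v^t-v^\star\|\le\sqrt{\Mx}\,\|u^t-u^\star\|$ shows $\|\hat x^t-y_m^{\star,t}\|\le c_1\|x^t-x^\star\|+c_2\bigl(\gammaM\sqrt{\Mx}+\tfrac{1}{\tauM}\bigr)\|u^t-u^\star\|=\cO(\sqrt{\Psi^t})$, with constants controlled by the stepsize cap $\gammaM\le\tfrac{3}{16}\sqrt{\Cx/(L\mu\Mx)}$. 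Hence $\sum_{m\in\set}\|u_m^{t+1}-\bar u_m^{t+1}\|^2\le \Cx\,L_F^2 q^{\Kx}\cdot\cO(1)\cdot\Psi^t$.

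Finally I would re-run the one-round descent estimate of Theorem~\ref{thm:5GCS-infty}, substituting $u_m^{t+1}=\bar u_m^{t+1}+(\text{error})$ everywhere it enters (in $v^{t+1}$, in $x^{t+1}$, and in $\sqnorm{u^{t+1}-u^\star}$). The exact terms reproduce the clean contraction, while the additional terms are linear and quadratic in the errors; bounding cross terms by Young's inequality and the errors by $L_F q^{\Kx/2}\sqrt{\Psi^t}$ gives $\Exp{\Psi^{t+1}}\le(1-\rho)\Psi^t+\phi(\Kx)\,\Psi^t$ where $\phi(\Kx)=\cO\!\bigl(L_F^2 q^{\Kx}\bigr)$ (times stepsize-dependent constants) vanishes as $\Kx\to\infty$; the stated lower bound on $\Kx$ is precisely what makes $\phi(\Kx)$ small enough to be swallowed, which is also why the Lyapunov coefficient $\tfrac{1}{\tauM}+\tfrac{1}{L_F}$ (rather than $\tfrac{1}{\tauM}+2\tfrac{1}{L_F}$ as in Theorem~\ref{thm:5GCS-infty}) and the reduced stepsize range are used --- they leave exactly the slack needed to close the recursion at the advertised rate $\rho$. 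Unrolling over $t=0,\dots,\Tx-1$ then gives $\Exp{\Psi^{\Tx}}\le(1-\rho)^{\Tx}\Psi^0$. The main obstacle I expect is the middle step: obtaining $\|\hat x^t-y_m^{\star,t}\|\le\cO(\sqrt{\Psi^t})$ with constants tight enough that only $\cO(\sqrt{(\Cx/\Mx)(L/\mu)}\log(L/\mu))$ \algname{GD} steps suffice --- a lossy bound here would inflate the dependence of $\Kx$ on $L/\mu$ and undermine the ``surprisingly small $\Kx$'' point; the interplay with client sampling is by contrast benign, since $|\set|=\Cx$ is deterministic and the errors are controlled pathwise by $\sqrt{\Psi^t}$, so the expectation over $\set$ is routine.
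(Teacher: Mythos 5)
Your proposal is a plausible sketch, but it takes a genuinely different and less direct route than the paper, and the step you yourself flag as the main obstacle is exactly where the paper avoids the detour. The paper does \emph{not} re-run the $K=\infty$ analysis with perturbation terms. Instead, it gives a fresh one-round derivation in which the strong-convexity contribution $-\gammaM\mu\sqnorm{\hat x^t - x^\star}$ is retained explicitly in the primal estimate (it is used differently in the $K=\infty$ proof). After applying Young's inequalities, $-\frac{\mu}{\Mx}\sqnorm{\Koper\hat x^t - \Koper x^\star}$ is split to yield a residual negative budget $-\frac{\mu}{6\Mx}\sqnorm{\Koper\hat x^t - y^{\star,t}}$, and the two LT-error terms $\frac{4\mu L_F^2}{3\Mx\tauM^2}\sqnorm{y^{K,t}-y^{\star,t}}$ and $\frac{L_F}{\tauM^2}\sqnorm{\nabla\psi^t(y^{K,t})}$ are then absorbed \emph{pathwise} into this budget (this is precisely Assumption~\ref{ass:GTPS}). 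Since the \algname{GD} contraction bounds the local error in terms of the \emph{same} quantity $\sqnorm{\hat x^t - y^{\star,t}}$, the absorption is tight and local; there is never any need to relate the local error to the global potential $\Psi^t$. The dual-side manipulation also differs from the $K=\infty$ proof: rather than appealing to $1/L_F$-strong monotonicity of $\partial F^*$ via $\langle\hat u^{t+1}-u^\star, s^{t+1}-s^\star\rangle$, the paper rewrites the key cross term through $\nabla\psi^t(y^{K,t})$ and applies Young with $a=L_F/\tauM$; this, not ``leftover slack for the error,'' is what changes the Lyapunov coefficient to $\frac{1}{\tauM}+\frac{1}{L_F}$.

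Your proposed middle step --- bounding $\|\hat x^t - y_m^{\star,t}\|\lesssim\sqrt{\Psi^t}$ via nonexpansiveness of the prox and $\|v^t-v^\star\|\le\sqrt{\Mx}\|u^t-u^\star\|$, then showing $\phi(K)\Psi^t\ll\rho\Psi^t$ --- is a coherent alternative and a rough order-of-magnitude check suggests it \emph{can} still deliver $K=\cO\!\bigl(\sqrt{\nicefrac{\Cx}{\Mx}\cdot\nicefrac{L}{\mu}}\log\nicefrac{L}{\mu}\bigr)$, but it is genuinely lossy: you must compare a per-client local quantity to the global potential, carry the resulting $\Mx$- and $\tauM$-dependent constants through a fresh round of Young-type absorptions on every cross term into which the error enters, and you have no natural negative term of matching form to cancel against, only $\rho\Psi^t$. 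The paper sidesteps all of this by noticing that the descent inequality itself produces the negative budget in exactly the right variable. So: correct intuition and a workable-looking strategy, but you have not found the paper's key mechanism, and the bookkeeping you would have to do to close your version is considerably heavier than the paper's.
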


Note that \algname{GD} needs to be run for $\Kx = \cO\left(\sqrt{\frac{\Cx}{\Mx}\frac{L}{\mu}}\right)$ local steps on each client in the cohort. This quantity depends on the square root of the condition number only, and is smaller for smaller cohort size $\Cx$. 

It turns out that this result can be improved using a finer analysis. In particular, we can show that some clients can get away with fewer LT steps than this, provided that their local datasets are favorable\footnote{To the best of our knowledge, a result of this type does not exist in the FL literature.}.  To see this, assume that each $f_\iclient$ is $L_\iclient$-smooth. Clearly, this implies that each $f_\iclient$ is $L$-smooth with $L=\max_{\iclient} L_\iclient$, and Theorem~\ref{thm:5GCS} holds with this $L$. However, recall that client $\iclient$ applies \algname{GD}  to (approximately) minimize $\localfuni$ from \myref{localprob}, and this function happens to be $\left(\frac{1}{\Mx}\left(L_\iclient-\mu\right)+\tauM\right)$-smooth and $\tauM$-strongly convex.
It can be easily seen that $\tauM \geq \frac{8}{3}\sqrt{\frac{\mu L}{\Mx\Cx}}$, and hence the condition number of $\localfuni$ is
$\frac{1}{\Mx}\left(L_\iclient-\mu\right)\frac{1}{\tauM}+1 \leq \frac{3}{8}\sqrt{\frac{\Cx}{\Mx } \frac{\nicefrac{L_\iclient^2}{L}}{\mu}} +1
$. So,  \algname{GD} only needs  $\Kx_\iclient = \cO\left( \sqrt{\frac{\Cx}{\Mx} \frac{\nicefrac{L_\iclient^2}{L} }{\mu}}\right)$ iterations on client $\iclient$, which can be much smaller than the worst-case bound $\Kx= \cO\left( \sqrt{\frac{\Cx}{\Mx} \frac{L}{\mu}}\right)$.

The following corollary gives a bound on the number of communication rounds needed to solve the problem.

\begin{corollary}\label{cor:5GCS}
	Choose any $0<\varepsilon<1$ and $\gammaM = \frac{3}{16}\sqrt{\frac{\Cx}{L\mu \Mx}}$. In order to guarantee $\Exp{\Psi^{\Tx}}\leq \varepsilon \Psi^0$, it suffices to take 
	\begin{eqnarray}
		\squeeze 
			T &\geq	& 
			\squeeze
			 \max\left\{1+\frac{16}{3}\sqrt{\frac{\Mx}{\Cx}\frac{L}{\mu}},\frac{\Mx}{\Cx}+\frac{3}{8}\sqrt{\frac{\Mx}{\Cx}\frac{L}{\mu} }\right\}\log\frac{1}{\varepsilon} \notag \\
		&=& \squeeze 
		\tilde{\cO}\left(\frac{\Mx}{\Cx}+\sqrt{\frac{\Mx}{\Cx}\frac{L}{\mu }}\right) \notag
	\end{eqnarray}
	communication rounds.
\end{corollary}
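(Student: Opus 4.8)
The plan is to feed the linear rate from Theorem~\ref{thm:5GCS} into the elementary inequality $1-\rho\le e^{-\rho}$ and then unravel the contraction factor $\rho$ for the prescribed stepsizes. Concretely, Theorem~\ref{thm:5GCS} applies verbatim with the choice $\gammaM=\frac{3}{16}\sqrt{\frac{\Cx}{L\mu\Mx}}$ (which sits exactly at the upper endpoint of the admissible interval $0<\gammaM\le\frac{3}{16}\sqrt{\frac{\Cx}{L\mu\Mx}}$) and the forced companion choice $\tauM=\frac{1}{2\gammaM\Mx}$, and the local-iteration count $\Kx$ taken as in that theorem. This gives $\Exp{\Psi^{\Tx}}\le(1-\rho)^{\Tx}\Psi^0\le e^{-\rho\Tx}\Psi^0$, so $\Exp{\Psi^{\Tx}}\le\varepsilon\Psi^0$ is guaranteed as soon as $\Tx\ge\frac1\rho\log\frac1\varepsilon$. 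Hence the corollary reduces entirely to producing a clean upper bound on $1/\rho$.

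Next I would compute the two quantities inside $\rho=\min\bigl\{\frac{\gammaM\mu}{1+\gammaM\mu},\ \frac{\Cx}{\Mx}\frac{\tauM}{L_F+\tauM}\bigr\}$. From the chosen $\gammaM$ one gets $\frac{1}{\gammaM\mu}=\frac{16}{3}\sqrt{\frac{\Mx}{\Cx}\frac{L}{\mu}}$, so the reciprocal of the first term equals exactly $1+\frac{16}{3}\sqrt{\frac{\Mx}{\Cx}\frac{L}{\mu}}$. For the second term, substitute $\tauM=\frac{1}{2\gammaM\Mx}=\frac{8}{3}\sqrt{\frac{L\mu}{\Mx\Cx}}$ and use $L_F=\frac{L-\mu}{\Mx}\le\frac{L}{\Mx}$ to obtain $\frac{L_F}{\tauM}\le\frac{3}{8}\sqrt{\frac{\Cx}{\Mx}\frac{L}{\mu}}$; therefore the reciprocal of the second term is $\frac{\Mx}{\Cx}\bigl(1+\frac{L_F}{\tauM}\bigr)\le\frac{\Mx}{\Cx}+\frac{3}{8}\sqrt{\frac{\Mx}{\Cx}\frac{L}{\mu}}$. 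Since $\rho$ is a minimum, $1/\rho$ is the maximum of these two reciprocals, hence at most $\max\bigl\{1+\frac{16}{3}\sqrt{\frac{\Mx}{\Cx}\frac{L}{\mu}},\ \frac{\Mx}{\Cx}+\frac{3}{8}\sqrt{\frac{\Mx}{\Cx}\frac{L}{\mu}}\bigr\}$. Plugging this into $\Tx\ge\frac1\rho\log\frac1\varepsilon$ yields the displayed sufficient condition, and collapsing both branches of the max into a single big-$\cO$ expression gives the $\tilde{\cO}\bigl(\frac{\Mx}{\Cx}+\sqrt{\frac{\Mx}{\Cx}\frac{L}{\mu}}\bigr)$ claim.

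There is no deep obstacle here—the argument is essentially bookkeeping—but the one point to handle with care is the direction of the inequality when replacing $L_F$ by $L/\Mx$: this enlarges $L_F/\tauM$, hence enlarges the reciprocal of the second term, hence can only enlarge $1/\rho$, so the resulting bound on $\Tx$ remains a valid \emph{sufficient} condition rather than a necessary one. One should also explicitly record that the hypotheses of Theorem~\ref{thm:5GCS} are met (the stepsize $\gammaM$ achieves the bound with equality, $\tauM=\frac{1}{2\gammaM\Mx}$, and $\Kx$ is at least the required threshold), so that all of its conclusions, including the Lyapunov contraction, stay in force throughout.
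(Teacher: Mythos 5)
Your proof is correct and follows essentially the same route as the paper's: plug the maximal $\gammaM$ (and the induced $\tauM=\frac{1}{2\gammaM\Mx}=\frac{8}{3}\sqrt{\frac{L\mu}{\Mx\Cx}}$) into the rate $\rho$ from Theorem~\ref{thm:5GCS}, bound $L_F\le L/\Mx$, and read off $1/\rho$. Your explicit use of $1-\rho\le e^{-\rho}$ to pass from the contraction factor to the communication count, and your remark that the replacement $L_F\mapsto L/\Mx$ only \emph{enlarges} the bound on $1/\rho$ and hence preserves sufficiency, are slightly more carefully spelled out than the paper's abbreviated calculation, but the argument is the same.
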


This is the same expression as that from Corollary~\ref{cor:5GCS-infty}, and hence the same comments we've made there apply here, too.

\subsection{LT subroutine: GD with $\Kx = 0$ steps}

\begin{theorem}\label{thm:5GCS-0}
Consider Algorithm~\ref{alg:5GCS} (\algname{5GCS}) with the LT solver being \algname{GD} run for $\Kx=0$ iterations (this is equivalent to Algorithm~\ref{alg:5GCS-0}; we shall also call the method \algname{5GCS${}_0$}).	Let $0<\gammaM\leq \frac{\Cx}{4L\Mx}$. Then for the Lyapunov function
	\begin{equation*}
		\squeeze 	\Psi^{\kstep}\eqdef  \frac{\Cx}{\Mx^2\gammaM^2}\left(1-\sqrt{\frac{\gammaM \Mx L_F}{2}}\right)\sqnorm{x^{\kstep}-x^\star}+\sqnorm{u^{\kstep}-u^\star},
	\end{equation*}
the iterates of the method satisfy
$		\Exp{\Psi^{\Tx}}\leq \left(1-\rho\right)^\Tx \Psi^0,
$	
where 
$
	\rho\eqdef  \min\left(\frac{\gammaM\mu}{1+\gammaM\mu},\frac{\Cx}{\Mx+2\gammaM  L_F\Mx^2}\right)<1.
$

\end{theorem}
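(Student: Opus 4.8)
The plan is the following. Setting $\Kx=0$ in Algorithm~\ref{alg:5GCS} makes the inner loop vacuous: $\lastlocittermk=y_\iclient^0=\hat x^t$, so the dual update collapses to $u_\iclient^{t+1}=\nabla F_\iclient(\hat x^t)$ for $\iclient\in\set$ and $u_\iclient^{t+1}=u_\iclient^t$ otherwise; in particular the dual stepsize $\tauM$ is irrelevant for this variant, and the scheme is a \algname{SAGA}-type stochastic primal--dual method in which the server keeps a table $(u_1^t,\dots,u_\Mx^t)$ of memorized gradients and the primal step uses the variance-reduced estimator $\tfrac{\Mx}{\Cx}(v^{t+1}-v^t)$ of $g(\hat x^t)-v^t$, where $g(x):=\sum_\iclient\nabla F_\iclient(x)$. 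I would first record the facts used throughout: each $F_\iclient$ is convex and $L_F$-smooth with $L_F=\tfrac1\Mx(L-\mu)\le L/\Mx$; the dual optimum obeys $u_\iclient^\star=\nabla F_\iclient(x^\star)$, and since $\nabla f(x^\star)=0$ and $\sum_\iclient\nabla F_\iclient(x)=\nabla f(x)-\mu x$, we obtain the fixed points $v^\star=g(x^\star)=\sum_\iclient u_\iclient^\star=-\mu x^\star$ and $\hat x^\star:=\tfrac1{1+\gammaM\mu}(x^\star-\gammaM v^\star)=x^\star$. With $\mathbb{E}_t$ denoting expectation conditional on everything up to the computation of $\hat x^t$, uniformity of the cohort gives $\mathbb{E}_t[v^{t+1}-v^t]=\tfrac{\Cx}{\Mx}(g(\hat x^t)-v^t)$.

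Next I would derive two conditional one-step inequalities. \emph{(Dual.)} Because each client lies in the uniform cohort with probability $\Cx/\Mx$,
\[
\mathbb{E}_t\sqnorm{u^{t+1}-u^\star}=\Big(1-\tfrac{\Cx}{\Mx}\Big)\sqnorm{u^t-u^\star}+\tfrac{\Cx}{\Mx}\sum_\iclient\sqnorm{\nabla F_\iclient(\hat x^t)-\nabla F_\iclient(x^\star)}.
\]
\emph{(Primal.)} From $x^{t+1}-x^\star=(\hat x^t-x^\star)-\gammaM\tfrac{\Mx}{\Cx}(v^{t+1}-v^t)$ I would expand the square, take $\mathbb{E}_t$, use unbiasedness of $\tfrac{\Mx}{\Cx}(v^{t+1}-v^t)$ and a standard second-moment bound for uniform cohort sampling (which leaves a term proportional to $\sum_\iclient\sqnorm{\nabla F_\iclient(\hat x^t)-u_\iclient^t}$, split by Young's inequality into $\sum_\iclient\sqnorm{\nabla F_\iclient(\hat x^t)-\nabla F_\iclient(x^\star)}$ and $\sqnorm{u^t-u^\star}$), and then co-coercivity $\sum_\iclient\sqnorm{\nabla F_\iclient(\hat x^t)-\nabla F_\iclient(x^\star)}\le L_F\langle g(\hat x^t)-g(x^\star),\hat x^t-x^\star\rangle$ to turn the dominant cross term $-2\gammaM\langle g(\hat x^t)-g(x^\star),\hat x^t-x^\star\rangle$ into a useful negative multiple of $\sum_\iclient\sqnorm{\nabla F_\iclient(\hat x^t)-\nabla F_\iclient(x^\star)}$. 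All remaining cross terms involve only $\hat x^t-x^\star$, $v^t-v^\star$ and $x^t-x^\star$; here I would substitute the identity
\[
\hat x^t-x^\star=\tfrac1{1+\gammaM\mu}\big[(x^t-x^\star)-\gammaM(v^t-v^\star)\big]
\]
(which encodes $v^\star=-\mu x^\star$), which both produces the contraction factor $\tfrac1{(1+\gammaM\mu)^2}\le 1-\tfrac{\gammaM\mu}{1+\gammaM\mu}$ on $\sqnorm{x^t-x^\star}$ and, after Young's inequality together with $\sqnorm{v^t-v^\star}\le\Mx\sqnorm{u^t-u^\star}$, reduces those residual terms to multiples of $\sqnorm{x^t-x^\star}$ and $\sqnorm{u^t-u^\star}$.

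It then remains to combine the two inequalities using the stated Lyapunov weights: coefficient $1$ on $\sqnorm{u^t-u^\star}$ and coefficient $a:=\tfrac{\Cx}{\Mx^2\gammaM^2}\big(1-\sqrt{\gammaM\Mx L_F/2}\big)$ on $\sqnorm{x^t-x^\star}$. The weight $a$ must be \emph{large enough} that the negative co-coercive term $-a\cdot\Theta(\gammaM/L_F)\sum_\iclient\sqnorm{\nabla F_\iclient(\hat x^t)-\nabla F_\iclient(x^\star)}$ carried by the primal inequality dominates the positive $\tfrac{\Cx}{\Mx}\sum_\iclient\sqnorm{\cdots}$ carried by the dual inequality (together with the smaller positive variance contribution), yet \emph{small enough} that the coefficient of $\sqnorm{x^t-x^\star}$ produced after one step stays below $(1-\rho)a$; the value $1-\sqrt{\gammaM\Mx L_F/2}$ is what meets both requirements under the stepsize cap $\gammaM\le\tfrac{\Cx}{4L\Mx}$, which via $L_F\le L/\Mx$ and $\Cx\le\Mx$ forces $\gammaM\Mx L_F\le\tfrac14$, so that $a>0$. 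After cancellation one gets $\mathbb{E}_t\Psi^{t+1}\le(1-\rho)\Psi^t$ with $\rho=\min\{\tfrac{\gammaM\mu}{1+\gammaM\mu},\tfrac{\Cx}{\Mx+2\gammaM L_F\Mx^2}\}$, and taking total expectation and iterating over $t=0,\dots,\Tx-1$ yields $\mathbb{E}[\Psi^\Tx]\le(1-\rho)^\Tx\Psi^0$.

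I expect the main obstacle to be precisely this final balancing act. The prox-type $\hat x$-update and the client-sampling dual update each contract a different block of the Lyapunov function, and forcing them into a single inequality requires the cross terms $\langle g(\hat x^t)-g(x^\star),\hat x^t-x^\star\rangle$ and $\langle x^t-x^\star,v^t-v^\star\rangle$ to be split by Young's inequality with parameters that are pinned simultaneously against $L_F$, against $\Mx/\Cx$, and against the sampling-variance constant. Getting these estimates to close \emph{exactly} rather than up to absolute constants — which matters here because the $\cO(\tfrac{\Mx}{\Cx}\tfrac{L}{\mu})$ rate is the target — is what selects the unusual coefficient $\tfrac{\Cx}{\Mx^2\gammaM^2}(1-\sqrt{\gammaM\Mx L_F/2})$ and the exact stepsize threshold, and is the delicate part of the proof.
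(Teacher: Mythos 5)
Your plan captures the right strategy — the same one the paper follows: write the $K=0$ update as a \algname{SAGA}-like stochastic primal--dual scheme, derive a conditional dual inequality and a conditional primal inequality via variance decomposition of the uniform cohort sampling, convert the dominant cross term into a negative multiple of $\sum_\iclient\sqnorm{\nabla F_\iclient(\hat x^t)-\nabla F_\iclient(x^\star)}$ via co-coercivity, and combine with carefully chosen Lyapunov weights. That said, your tactics differ from the paper's in two places that are exactly where the ``magic'' coefficient $\frac{\Cx}{\Mx^2\gammaM^2}\bigl(1-\sqrt{\gammaM\Mx L_F/2}\bigr)$ comes from, and you should be aware of this.

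First, you propose to handle the residual cross terms involving $\hat x^t-x^\star$ and $v^t-v^\star$ by substituting $\hat x^t-x^\star = \frac{1}{1+\gammaM\mu}[(x^t-x^\star)-\gammaM(v^t-v^\star)]$ and then applying Young's inequality together with $\sqnorm{v^t-v^\star}\le\Mx\sqnorm{u^t-u^\star}$. The paper does not do this; it uses the sharper identity
\[
(1+\gammaM\mu)\sqnorm{\hat x^t-x^\star}+2\gammaM\left\langle \hat x^t-x^\star,\ \Koper^\top(u^t-u^\star)\right\rangle
= \frac{1}{1+\gammaM\mu}\Bigl(\sqnorm{x^t-x^\star}-\gammaM^2\sqnorm{\Koper^\top(u^t-u^\star)}\Bigr),
\]
which follows from writing the left side as $\bigl\langle(x^t-x^\star)+\gammaM\Koper^\top(u^t-u^\star),\ \hat x^t-x^\star\bigr\rangle$ and substituting again. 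This produces the contraction factor $\frac{1}{1+\gammaM\mu}$ with zero slack, and the remaining $\sqnorm{\Koper^\top(u^t-u^\star)}$ term is negative and simply dropped — no $\sqnorm{u^t-u^\star}$ leakage and no Young's needed. Your route would introduce extra positive residuals.

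Second, you propose to split the sampling-variance term $\sqnorm{\bar u^{t+1}-u^t}=\sum_\iclient\sqnorm{\nabla F_\iclient(\hat x^t)-u_\iclient^t}$ by Young's inequality. The paper instead writes the dual inequality as
\[
\Exp{\sqnorm{u^{t+1}-u^\star}\mid\mathcal F_t}\le \sqnorm{u^t-u^\star}+\tfrac{\Cx}{\Mx}\Bigl(2\left\langle \bar u^{t+1}-u^\star,\ \bar u^{t+1}-u^t\right\rangle-\sqnorm{\bar u^{t+1}-u^t}\Bigr),
\]
applies Young's with a free parameter $a>0$ to the inner product, and then \emph{chooses the Lyapunov weight $c$ so that the coefficient of $\sqnorm{\bar u^{t+1}-u^t}$ vanishes exactly}: $c=\frac{\Cx}{\Mx^2\gammaM^2}\bigl(1-\tfrac1a\bigr)$. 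Optimizing the residual contraction over $a$ then gives $a=\sqrt{2/(\gammaM\Mx L_F)}$, which is precisely why $1/a=\sqrt{\gammaM\Mx L_F/2}$ appears in the Lyapunov coefficient, and also why the constraint $\gammaM<\frac{1}{2\Mx L_F}$ (implied by $\gammaM\le\frac{\Cx}{4L\Mx}$) appears. You have correctly guessed that a ``balancing act'' determines this coefficient, but in your scheme it would come out of a different pair of Young's parameters and would have to be rederived; it is not automatic that you recover exactly $1-\sqrt{\gammaM\Mx L_F/2}$ rather than something strictly smaller. So the plan is sound in spirit but not yet a proof: carry out the cancellation-by-choice-of-$c$ step and the scalar-product identity, and the constants will land exactly as stated.
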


The following corollary gives a bound on the number of communication rounds needed to solve the problem.

\begin{corollary}\label{cor:5GCS-0}
	Choose any $0<\varepsilon<1$ and
	 $\gammaM=\frac{\Cx}{4 L \Mx}$. In order to guarantee $\Exp{\Psi^{\Tx}}\leq \varepsilon \Psi^0$, it suffices to take
\[
		\squeeze 	
		\Tx \geq 	\max\left\{1+\frac{4\Mx}{\Cx}\frac{L}{\mu},\frac{\Mx}{\Cx}+\frac{L_F\Mx}{L}\right\}\log \frac{1}{\varepsilon}= \tilde{O}\left(\frac{\Mx}{\Cx}\frac{L}{\mu}\right)
\]
	communication rounds.
\end{corollary}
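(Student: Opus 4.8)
\textbf{Proof proposal for Corollary~\ref{cor:5GCS-0}.}
The plan is to simply unpack the linear-convergence guarantee of Theorem~\ref{thm:5GCS-0} with the stated choice of stepsize and translate the contraction factor into an iteration count. Concretely, Theorem~\ref{thm:5GCS-0} gives $\Exp{\Psi^{\Tx}} \le (1-\rho)^{\Tx}\Psi^0$ with $\rho = \min\left(\frac{\gammaM\mu}{1+\gammaM\mu}, \frac{\Cx}{\Mx + 2\gammaM L_F \Mx^2}\right)$, valid whenever $0 < \gammaM \le \frac{\Cx}{4L\Mx}$. First I would substitute $\gammaM = \frac{\Cx}{4L\Mx}$ (which satisfies the constraint with equality) into each of the two terms inside the $\min$. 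For the first term, $\frac{\gammaM\mu}{1+\gammaM\mu} = \left(1 + \frac{1}{\gammaM\mu}\right)^{-1} = \left(1 + \frac{4L\Mx}{\Cx\mu}\right)^{-1}$, so $\frac{1}{\rho} \le 1 + \frac{4\Mx}{\Cx}\frac{L}{\mu}$ from this branch. For the second term, $\frac{\Cx}{\Mx + 2\gammaM L_F \Mx^2} = \frac{\Cx}{\Mx(1 + 2\gammaM L_F \Mx)}$; plugging $\gammaM = \frac{\Cx}{4L\Mx}$ gives $2\gammaM L_F \Mx = \frac{\Cx L_F}{2L}$, hence this branch contributes $\frac{1}{\rho} \le \frac{\Mx}{\Cx}\left(1 + \frac{\Cx L_F}{2L}\right) = \frac{\Mx}{\Cx} + \frac{L_F \Mx}{2L}$. (If the paper's constants differ by a factor of $2$ here I would simply carry the exact algebra through; the point is the form.)

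Next I would use the standard fact that $(1-\rho)^{\Tx} \le \varepsilon$ is implied by $\Tx \ge \frac{1}{\rho}\log\frac{1}{\varepsilon}$, since $\log(1-\rho) \le -\rho$ for $\rho \in (0,1)$ and therefore $(1-\rho)^{\Tx} \le e^{-\rho \Tx} \le e^{-\log(1/\varepsilon)} = \varepsilon$. Because $\rho$ is the minimum of the two quantities computed above, $\frac{1}{\rho}$ is the maximum of their reciprocals, so it suffices to take
\[
\Tx \ge \max\left\{1 + \frac{4\Mx}{\Cx}\frac{L}{\mu},\ \frac{\Mx}{\Cx} + \frac{L_F\Mx}{L}\right\}\log\frac{1}{\varepsilon},
\]
which is exactly the claimed bound. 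Combining this with $\Exp{\Psi^{\Tx}} \le (1-\rho)^{\Tx}\Psi^0 \le \varepsilon \Psi^0$ finishes the guarantee.

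Finally, for the $\tilde{O}$-simplification I would argue that both entries of the $\max$ are $O\!\left(\frac{\Mx}{\Cx}\frac{L}{\mu}\right)$: the first is literally $1 + \frac{4\Mx}{\Cx}\frac{L}{\mu} = O\!\left(\frac{\Mx}{\Cx}\frac{L}{\mu}\right)$, and for the second I would use $L_F = \frac{1}{\Mx}(L-\mu) \le \frac{L}{\Mx}$, so $\frac{L_F\Mx}{L} \le 1$ and hence $\frac{\Mx}{\Cx} + \frac{L_F\Mx}{L} \le \frac{\Mx}{\Cx} + 1 = O\!\left(\frac{\Mx}{\Cx}\right) = O\!\left(\frac{\Mx}{\Cx}\frac{L}{\mu}\right)$ since $L/\mu \ge 1$. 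Taking the max of two quantities that are each $O\!\left(\frac{\Mx}{\Cx}\frac{L}{\mu}\right)$ preserves the bound, giving $\Tx = \tilde{O}\!\left(\frac{\Mx}{\Cx}\frac{L}{\mu}\right)$.

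There is no serious obstacle here — this is a routine ``convert contraction rate to iteration complexity'' argument; the only thing to be careful about is tracking the exact constants coming out of the two branches of the $\min$ in Theorem~\ref{thm:5GCS-0} and of the chosen $\gammaM$, so that the constants $4$, $1$, and the $L_F\Mx/L$ term match the statement precisely rather than merely up to constant factors. I would double-check by re-deriving $2\gammaM L_F \Mx$ at $\gammaM = \frac{\Cx}{4L\Mx}$ and confirming it equals $\frac{\Cx L_F}{2L}$, and that this yields the second term of the $\max$ as written (possibly the paper keeps the $\frac12$ implicit inside the $\tilde O$ and writes $\frac{L_F\Mx}{L}$; I would match whichever convention the statement uses).
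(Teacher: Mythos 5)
Your proposal is correct and follows essentially the same route as the paper: substitute $\gammaM=\frac{\Cx}{4L\Mx}$ into the contraction factor, take the reciprocal of $\rho=\min\{\cdot,\cdot\}$ as a maximum, and convert the geometric decay into $\Tx\geq\frac{1}{\rho}\log\frac{1}{\varepsilon}$ rounds. The only difference is cosmetic: the paper re-derives the rate from the unsimplified contraction expression inside the proof of Theorem~\ref{thm:5GCS-0} (via a $\theta$-parameterization with looser constants, yielding the term $\frac{L_F\Mx}{L}$), whereas your direct use of the stated $\rho$ gives the slightly tighter $\frac{L_F\Mx}{2L}$, which still implies the claimed bound.
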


In this case, we do {\em not} obtain communication acceleration. This is because LT  with $\Kx=0$ is not extensive enough.

\subsection{LT subroutine: any method $\cA$}

Finally, we now show that \algname{5GCS} is not limited to exclusively using \algname{GD} as the LT solver. To the contrary, \algname{5GCS} works with any subroutine $\cA$ as long as it is possible to guarantee that, after a sufficiently large number $\Kx$ of iterations, a certain inequality  holds. 

\begin{assumption}\label{ass:GTPS}
	Let $\{\mathcal{A}_1,\dots,\mathcal{A}_\Mx\}$ be any LT subroutines for minimizing functions $\{\psi_1^t, \dots, \psi_\Mx^t\}$ defined in \myref{localprob}, capable of finding points $\{y_{1}^{\Kx,\kstep},\dots,y_{\Mx}^{\Kx,\kstep}\}$ in $\Kx$ steps, from the starting point $y_{\iclient}^{0,\kstep}=\hat{x}^\kstep$ for all $\iclient \in \{1,\dots,\Mx\}$, which satisfy the inequality
	\begin{align*}
		\squeeze 	\sum\limits_{\iclient=1}^{\Mx}\frac{4}{\tauM^2}\frac{\mu L_F^2}{3\Mx}\sqnorm{\lastlocittermk-\localsolmk}&	\squeeze +\sum\limits_{\iclient=1}^{\Mx}\frac{L_F}{\tauM^2}\sqnorm{\nabla\localfuni(\lastlocittermk)}\\
		&	\squeeze  \leq\sum\limits_{\iclient=1}^{\Mx}\frac{\mu}{6\Mx}\sqnorm{\hat{x}^\kstep-\localsolmk},
	\end{align*}
	where $\localsolmk$ is the unique minimizer of $\localfuni$, and $\tauM\geq \frac{8}{3}\sqrt{\frac{L\mu}{\Mx \Cx}}.$
\end{assumption}

Our most general result follows:

\begin{theorem}\label{thm:INEXACTPPanyM}
Consider Algorithm~\ref{alg:5GCS} (\algname{5GCS}) with the LT solvers $\{\cA_1,\dots,\cA_\Mx\}$ satisfying Assumption~\ref{ass:GTPS}. Let $0<\gammaM$ and $0<\tauM$ satisfy $\gammaM\leq\frac{1}{\tauM\Mx}\left(1-\frac{4\mu}{3\Mx\tauM}\right)$. 
Then for the Lyapunov function
	\begin{equation*}
		\squeeze 	
		\Psi^{\kstep}\eqdef \frac{1}{\gammaM}\sqnorm{x^{\kstep}-x^\star}+\frac{\Mx}{\Cx}\left(\frac{1}{\tauM}+\frac{1}{L_F}\right)\sqnorm{u^{\kstep}-u^\star}\label{PPINEXACTpsianyM},
	\end{equation*}
	the iterates of the method satisfy
$
		\Exp{\Psi^{\Tx}}\leq (1-\rho)^\Tx \Psi^0,
$
	where 
$	\rho\eqdef  \min\left\{\frac{\gammaM\mu}{1+\gammaM\mu},\frac{\Cx}{\Mx}\frac{\tauM}{(L_F+\tauM)}\right\}<1.
$

\end{theorem}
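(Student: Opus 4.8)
The plan is to prove a one–communication–round contraction conditioned on the history $\cF^t$ (the $\sigma$-algebra under which $x^t,u^t,\hat x^t$ are determined), namely $\E[\Psi^{t+1}\mid\cF^t]\le(1-\rho)\Psi^t$, and then chain it over $t=0,\dots,\Tx-1$ and apply the tower rule to get $\E[\Psi^{\Tx}]\le(1-\rho)^{\Tx}\Psi^0$; since $\frac{\gammaM\mu}{1+\gammaM\mu}<1$, this already gives $\rho<1$. Before that I record the optimality data: the minimizer of \eqref{eq:main-new} satisfies $\sum_\iclient\nabla F_\iclient(x^\star)+\mu x^\star=0$, so putting $u_\iclient^\star\eqdef\nabla F_\iclient(x^\star)$ gives $v^\star\eqdef\sum_\iclient u_\iclient^\star=-\mu x^\star$, hence $\hat x^\star=\frac{1}{1+\gammaM\mu}(x^\star-\gammaM v^\star)=x^\star$, each local problem \eqref{localprob} is then minimized at $\localsolmk=x^\star$, and $(x^\star,u^\star)$ is a fixed point of \algname{5GCS} with exact solves. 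Two identities are used throughout: from the definition of $\hat x^t$, $v^t=\frac{1}{\gammaM}(x^t-\hat x^t)-\mu\hat x^t$; and from first–order optimality of \eqref{localprob}, $\nabla F_\iclient(\localsolmk)-u_\iclient^t=\tauM(\hat x^t-\localsolmk)$.

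Next I would take the conditional expectation over the uniform cohort $\set$. For the dual block, since $u_\iclient^{t+1}=u_\iclient^t$ for $\iclient\notin\set$, $u_\iclient^{t+1}=\nabla F_\iclient(\lastlocittermk)$ for $\iclient\in\set$, and $\Pr[\iclient\in\set]=\Cx/\Mx$, one gets $\frac{\Mx}{\Cx}\E[\sqnorm{u^{t+1}-u^\star}\mid\cF^t]=\sum_\iclient\sqnorm{\nabla F_\iclient(\lastlocittermk)-u_\iclient^\star}+\frac{\Mx-\Cx}{\Cx}\sqnorm{u^t-u^\star}$ — the weight $\Mx/\Cx$ in $\Psi$ is chosen precisely so the fresh dual error enters with coefficient one. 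For the primal block, with $\tilde x^{t+1}\eqdef\E[x^{t+1}\mid\cF^t]=\hat x^t-\gammaM(\hat v^{t+1}-v^t)$ and $\hat v^{t+1}\eqdef\sum_\iclient\nabla F_\iclient(\lastlocittermk)$, a bias–variance split together with the elementary sampling bound $\Var_{\set}(\sum_{\iclient\in\set}a_\iclient)\le\frac{\Cx}{\Mx}\sum_\iclient\sqnorm{a_\iclient}$ gives $\frac{1}{\gammaM}\E[\sqnorm{x^{t+1}-x^\star}\mid\cF^t]\le\frac{1}{\gammaM}\sqnorm{\tilde x^{t+1}-x^\star}+\gammaM\frac{\Mx}{\Cx}\sum_\iclient\sqnorm{\nabla F_\iclient(\lastlocittermk)-u_\iclient^t}$. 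This reduces the theorem to a deterministic descent bound on $\frac{1}{\gammaM}\sqnorm{\tilde x^{t+1}-x^\star}$.

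For the deterministic step I would first handle the exact–prox skeleton ($\lastlocittermk=\localsolmk$), which is essentially minibatch \algname{Point-SAGA}. Expanding $\sqnorm{\hat x^t-x^\star-\gammaM(\hat v^{t+1}-v^t)}$ and substituting $\hat v^{t+1}-v^t=\sum_\iclient(\nabla F_\iclient(\localsolmk)-u_\iclient^\star)+(v^\star-v^t)$, I would use the $\hat x^t$-identity to rewrite $v^\star-v^t$ via $x^t-\hat x^t$ and $\hat x^t-x^\star$ (this is what brings the shrinkage rate $\frac{\gammaM\mu}{1+\gammaM\mu}$ and the term $\frac{1}{\gammaM}\sqnorm{x^t-x^\star}$ of $\Psi^t$ into the bound), split $\hat x^t-x^\star=(\hat x^t-\localsolmk)+(\localsolmk-x^\star)$ inside the remaining inner products, and apply $\frac{1}{L_F}$-cocoercivity of $\nabla F_\iclient$ to $\langle\localsolmk-x^\star,\nabla F_\iclient(\localsolmk)-u_\iclient^\star\rangle$ (source of the $\frac{1}{L_F}$ weight) together with the prox-optimality identity and the three-point identity $2\langle a,b\rangle=\sqnorm a+\sqnorm b-\sqnorm{a-b}$ on the $\langle\hat x^t-\localsolmk,\cdot\rangle$ terms (source of the $\frac{1}{\tauM}$ weight and of a negative reserve $-\frac{1}{\tauM}\sum_\iclient\sqnorm{\nabla F_\iclient(\localsolmk)-u_\iclient^t}$). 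After collecting terms the leftover non-Lyapunov quantities are a multiple of $\gammaM\Mx\sum_\iclient\sqnorm{\nabla F_\iclient(\localsolmk)-u_\iclient^t}$ (from $\gammaM^2\sqnorm{\hat v^{t+1}-v^t}$ and the cohort-variance term) and a multiple of $\sqnorm{\hat x^t-x^\star}$; the condition $\gammaM\tauM\Mx\le1$ makes the former nonpositive against the $-\frac{1}{\tauM}\sum_\iclient\sqnorm{\cdot}$ reserve, $\tauM\ge\frac{8}{3}\sqrt{L\mu/(\Mx\Cx)}$ keeps that reserve strictly positive, and the two contraction reserves $\frac{\gammaM\mu}{1+\gammaM\mu}$ and $\frac{\Cx}{\Mx}\frac{\tauM}{L_F+\tauM}$ absorb the rest, giving the exact-prox version of the claim.

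Finally I would reinstate inexactness: every $\nabla F_\iclient(\localsolmk)$ is replaced by $\nabla F_\iclient(\lastlocittermk)$ and every $\localsolmk$ inside a cocoercivity/monotonicity step by $\lastlocittermk$, producing perturbation terms controlled by $\sqnorm{\lastlocittermk-\localsolmk}$ (via $L_F$-smoothness, $\|\nabla F_\iclient(\lastlocittermk)-\nabla F_\iclient(\localsolmk)\|\le L_F\|\lastlocittermk-\localsolmk\|$) and by $\sqnorm{\nabla\localfuni(\lastlocittermk)}$ (note $\nabla\localfuni(\lastlocittermk)=\nabla F_\iclient(\lastlocittermk)-u_\iclient^t+\tauM(\lastlocittermk-\hat x^t)$ is exactly the defect of the inexact local solve). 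Assumption~\ref{ass:GTPS} is tailored so that $\sum_\iclient\big(\frac{4\mu L_F^2}{3\Mx\tauM^2}\sqnorm{\lastlocittermk-\localsolmk}+\frac{L_F}{\tauM^2}\sqnorm{\nabla\localfuni(\lastlocittermk)}\big)\le\frac{\mu}{6\Mx}\sum_\iclient\sqnorm{\hat x^t-\localsolmk}=\frac{\mu}{6\Mx\tauM^2}\sum_\iclient\sqnorm{\nabla F_\iclient(\localsolmk)-u_\iclient^t}$, so all these terms are covered by the extra slack that the refined bound $\gammaM\le\frac{1}{\tauM\Mx}(1-\frac{4\mu}{3\Mx\tauM})$ leaves inside the $-\frac{1}{\tauM}\sum_\iclient\sqnorm{\cdot}$ reserve. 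Chaining the one-round inequality and taking total expectation then yields $\E[\Psi^{\Tx}]\le(1-\rho)^{\Tx}\Psi^0$ with $\rho=\min\{\frac{\gammaM\mu}{1+\gammaM\mu},\frac{\Cx}{\Mx}\frac{\tauM}{L_F+\tauM}\}$. I expect the main obstacle to be exactly this bookkeeping in the deterministic step: matching the cocoercivity output to the $\frac{1}{L_F}$ and $\frac{1}{\tauM}$ weights of $\Psi$, dominating the cohort-variance and $\gammaM^2\sqnorm{\hat v^{t+1}-v^t}$ contributions by the single $-\frac{1}{\tauM}\sum_\iclient\sqnorm{\cdot}$ reserve (which forces the $\gammaM\tauM\Mx\lesssim1$ scaling), and fitting the inexactness errors into the residual slack — this is what pins down the explicit constants in Assumption~\ref{ass:GTPS} and in the stepsize bounds.
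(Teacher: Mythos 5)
Your plan follows essentially the same route as the paper's own proof: one-round contraction of the same Lyapunov function, conditioning on the cohort, $\nicefrac{1}{L_F}$-cocoercivity of $\nabla F_\iclient$ to source the $\nicefrac{1}{L_F}$ weight, the prox first-order identity and a Young/three-point manipulation to source the $\nicefrac{1}{\tau}$ weight and the $-\frac{1}{\tau}\sum_\iclient\sqnorm{\cdot}$ reserve, and then Assumption~\ref{ass:GTPS} to absorb the inexactness defect. The optimality data (including $u_\iclient^\star=\nabla F_\iclient(x^\star)$, $v^\star=-\mu x^\star$, $\hat x^\star=x^\star$) and the exact dual-block accounting $\frac{\Mx}{\Cx}\E[\sqnorm{u^{t+1}-u^\star}\mid\cF^t]=\sum_\iclient\sqnorm{\nabla F_\iclient(\lastlocittermk)-u_\iclient^\star}+\frac{\Mx-\Cx}{\Cx}\sqnorm{u^t-u^\star}$ are also correct and match the paper.

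The one genuine gap is in the primal variance step. You invoke the crude bound $\Var_\set\bigl(\sum_{\iclient\in\set}a_\iclient\bigr)\le\frac{\Cx}{\Mx}\sum_\iclient\sqnorm{a_\iclient}$, which discards the negative part of the cohort-sampling variance. The paper instead uses the exact identity \myref{PP},
\[
\Exp{\sqnorm{\Koper^\top(\Rop(a)-a)}}=\oma\sum_\iclient\sqnorm{a_\iclient}-\zeta\sqnorm{\textstyle\sum_\iclient a_\iclient},
\qquad
\oma=\tfrac{\Mx(\Mx-\Cx)}{\Cx(\Mx-1)},\quad \zeta=\tfrac{\Mx-\Cx}{\Cx(\Mx-1)},
\]
and the $-\gammaM^2\zeta\sqnorm{\Koper^\top(\bar u^{t+1}-u^t)}$ term is then cancelled against the $+\gammaM^2\sqnorm{\Koper^\top(\bar u^{t+1}-u^t)}$ coming from expanding $\sqnorm{\hat x^t-x^\star-\gammaM\Koper^\top(\bar u^{t+1}-u^t)}$, giving a coefficient $\gammaM^2(1-\zeta)$. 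After bounding $\sqnorm{\Koper^\top(\bar u^{t+1}-u^t)}\le \Mx\sum_\iclient\sqnorm{a_\iclient}$, the total coefficient of $\sum_\iclient\sqnorm{a_\iclient}$ is $\gammaM\bigl((1-\zeta)\Mx+\oma\bigr)=\gammaM\Mx$, which is exactly what the $-\frac{1}{\tau}$ reserve must dominate and is what yields the stepsize condition $\gammaM\le\frac{1}{\tauM\Mx}\bigl(1-\frac{4\mu}{3\Mx\tauM}\bigr)$. Under your bookkeeping the coefficient is instead $\gammaM\Mx+\gammaM\frac{\Mx}{\Cx}=\gammaM\Mx\bigl(1+\frac{1}{\Cx}\bigr)$, so the leftover term can only be made nonpositive if $\gammaM\le\frac{1}{\tauM\Mx(1+1/\Cx)}\bigl(1-\frac{4\mu}{3\Mx\tauM}\bigr)$, which is strictly stronger than what the theorem states (a factor of two loss when $\Cx=1$). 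To reach the stated constants you must carry the $-\zeta\sqnorm{\sum}$ term through the expansion of $\sqnorm{\tilde x^{t+1}-x^\star}$ rather than dropping it in a one-sided variance bound.

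Two smaller remarks. First, be explicit that when you replace $\hat u^{t+1}_\iclient=\nabla F_\iclient(\localsolmk)$ by $\bar u^{t+1}_\iclient=\nabla F_\iclient(\lastlocittermk)$ in the term $\frac{2\mu}{3\Mx\tauM^2}\sqnorm{\hat u^{t+1}-u^t}$, the estimate $\sqnorm{\hat u^{t+1}-u^t}\le 2\sqnorm{\bar u^{t+1}-u^t}+2L_F^2\sqnorm{\lastlocitterk-\localsolk}$ is what produces both the extra $\frac{4\mu}{3\Mx\tauM^2}\sqnorm{\bar u^{t+1}-u^t}$ (which must also be absorbed by the reserve, hence the $1-\frac{4\mu}{3\Mx\tauM}$ factor) and the $\frac{4\mu L_F^2}{3\Mx\tauM^2}\sqnorm{\lastlocitterk-\localsolk}$ that Assumption~\ref{ass:GTPS} controls; your sketch is a bit vague about where the $L_F^2$ comes from. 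Second, the assumption $\tauM\ge\frac{8}{3}\sqrt{L\mu/(\Mx\Cx)}$ that appears inside Assumption~\ref{ass:GTPS} is not needed to make the reserve positive (that is the role of the hypothesis $\gammaM\le\frac{1}{\tauM\Mx}(1-\frac{4\mu}{3\Mx\tauM})$, which in particular forces $\tauM>\frac{4\mu}{3\Mx}$); it enters only later, when one derives concrete local-step counts for a given solver.
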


Note that the convergence rate in this result is identical to the convergence rate from Theorem~\ref{thm:5GCS}. Therefore, the same conclusions apply here as well.

\begin{figure*}[t]
	\centering
	\begin{tabular}{ccc}
		\includegraphics[width=0.31\linewidth]{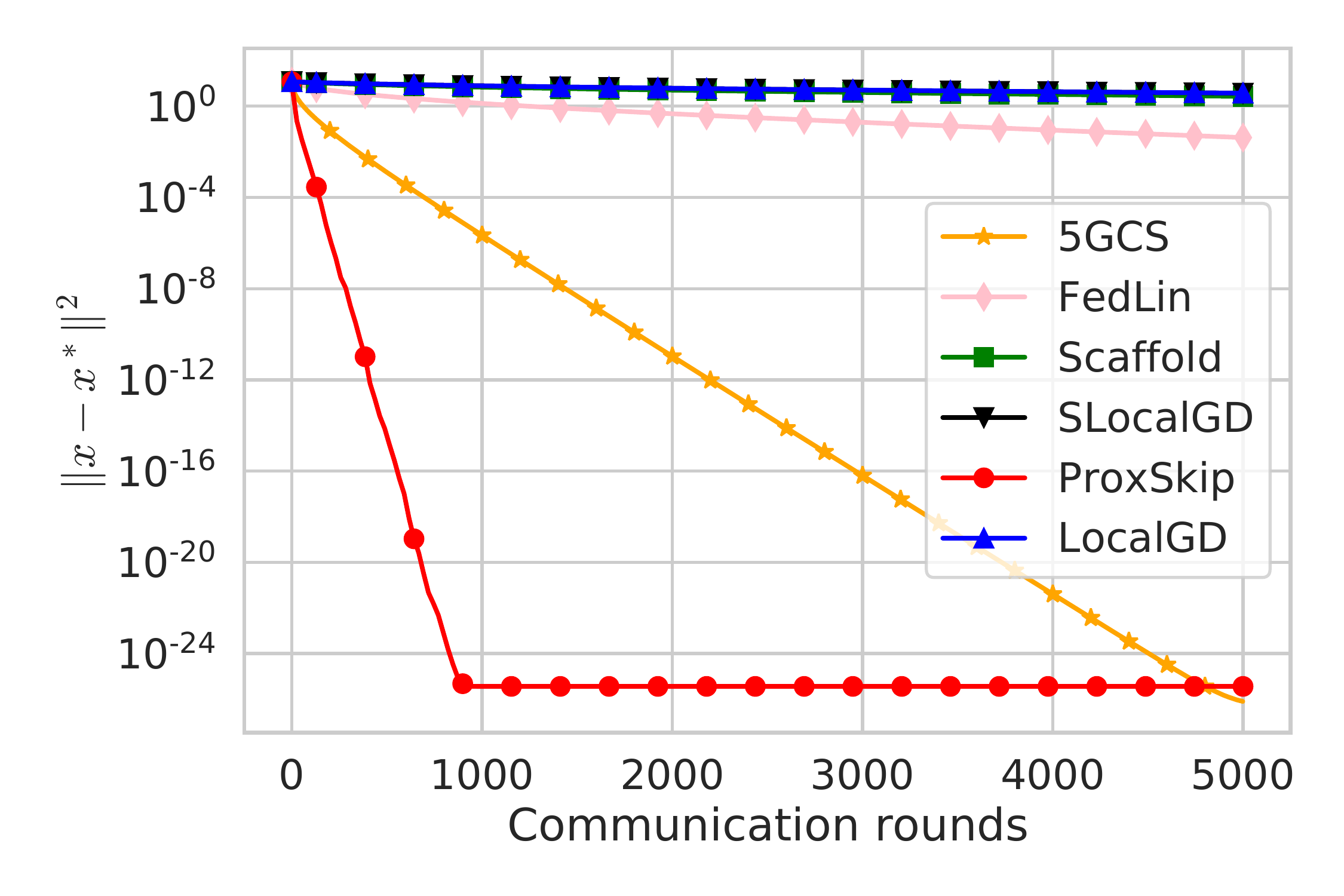}&	\includegraphics[width=0.31\linewidth]{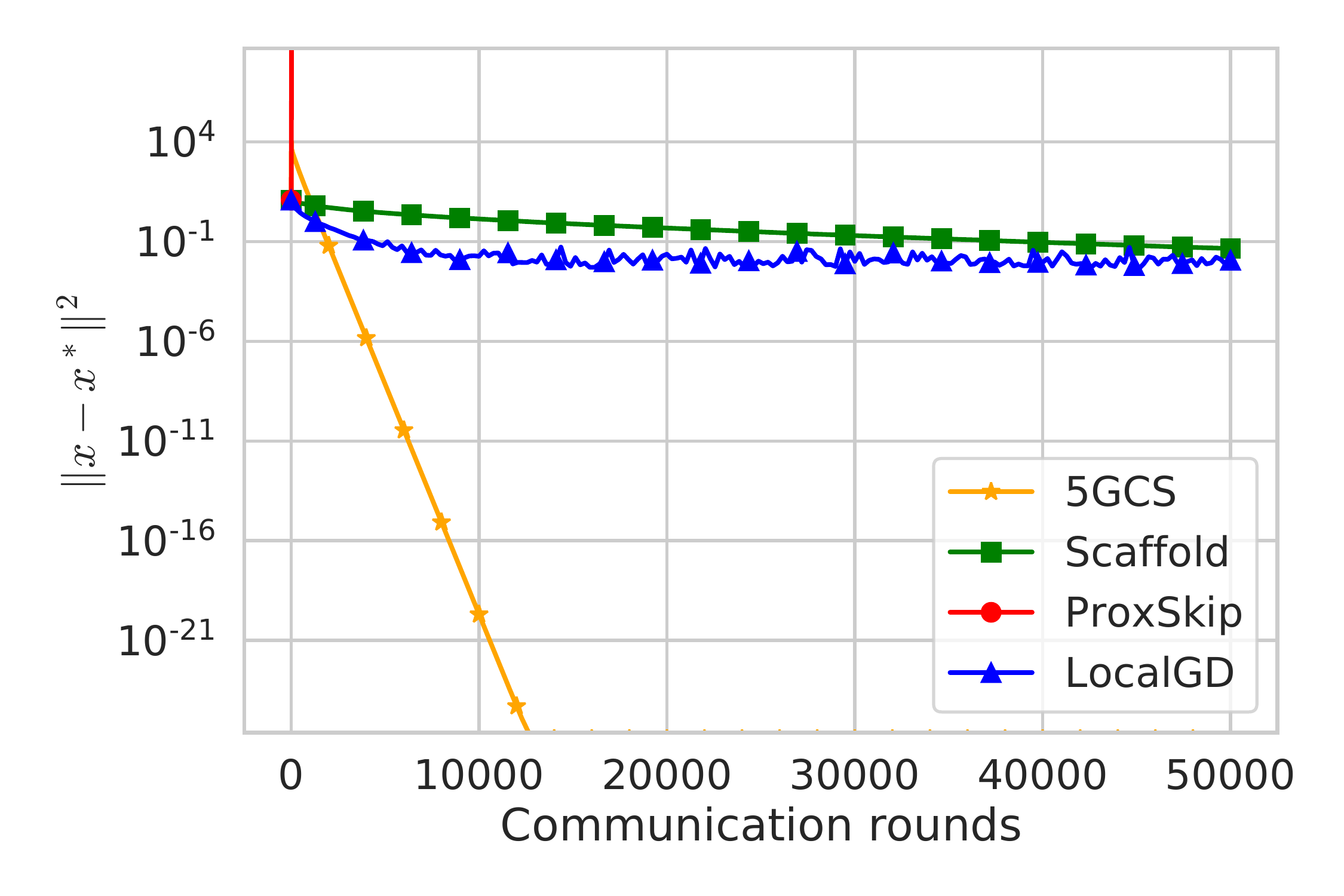}&
		\includegraphics[width=0.31\linewidth]{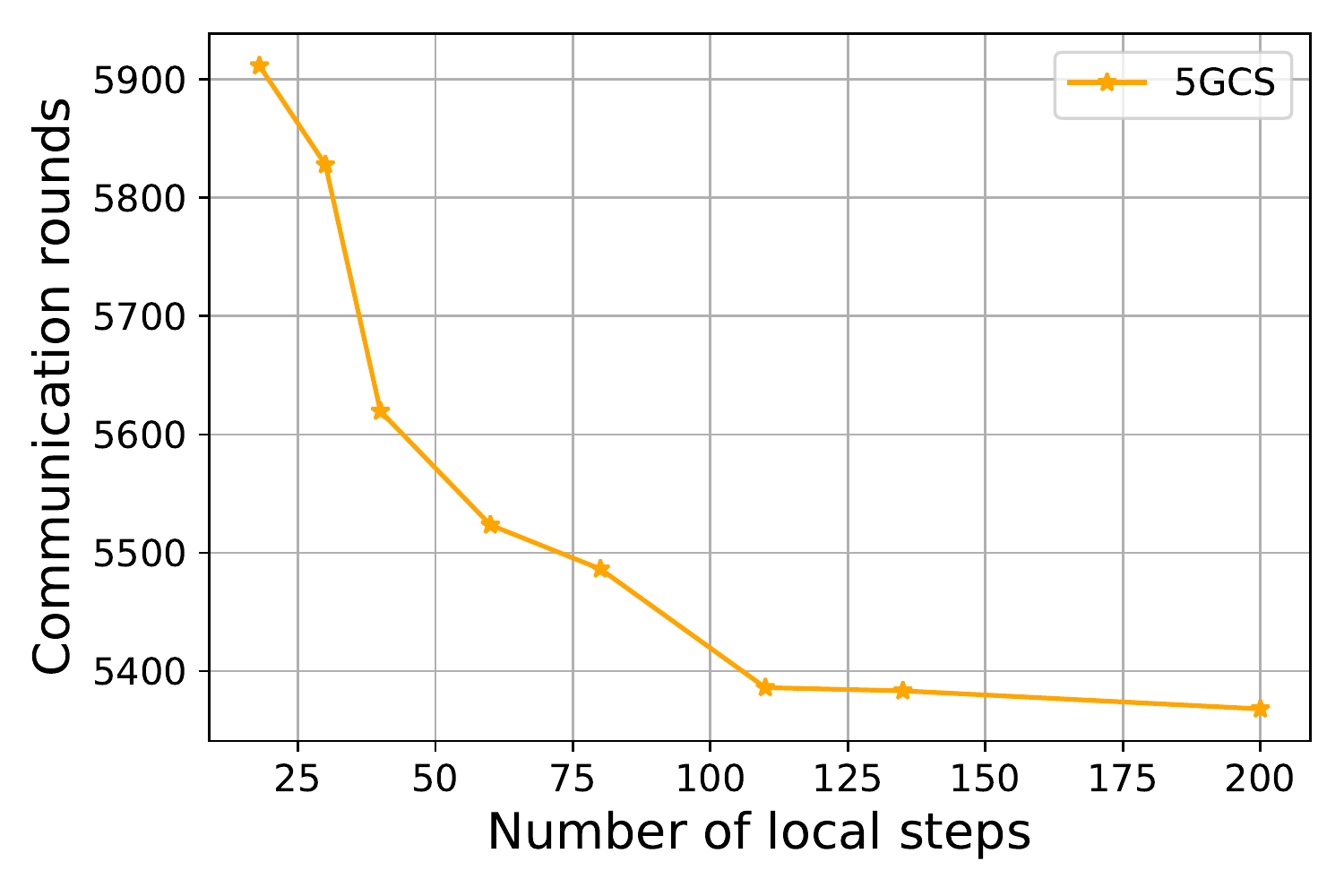}
	\end{tabular}
	\centering
	\caption{Performance of our \algname{5GCS} method without (left) and with (middle) CS. The plot on the right shows that \algname{5GCS} achieves optimal communication complexity with a (relatively) small number of local \algname{GD} steps, as predicted by  Theorem~\ref{thm:5GCS}.}
	\label{ris:image}
\end{figure*}

\subsection{Relation between the \# of communication rounds $\Tx$ on the \# of local steps $\Kx$}
We now study the dependence of the \# of communication rounds $\Tx$ on the \# of local steps $\Kx$ used by \algname{GD} as the LT subroutine. We first show in Theorem~\ref{thm:relation2} that with merely  $\Kx = \mathcal{O}\left(\log \frac{L}{\mu}\right)$ local \algname{GD} steps  we can improve the communication complexity from $\Tx = \tilde{O}\left(\frac{\Mx}{\Cx}\frac{L}{\mu}\right)$ (provided in Theorem~\ref{thm:5GCS-0}) to $\Tx=\tilde{O}\left(\frac{\Mx}{\Cx}+\frac{L}{\mu}\right)$. 
\begin{theorem}\label{thm:relation2}
	Consider Algorithm~\ref{alg:5GCS} (\algname{5GCS}) with the LT solver being \algname{GD}. Let $\gammaM=\frac{3}{16L}$  and $\tauM=\frac{8L}{3\Mx}.$
	With these stepsizes, if  LT is performed via  
	\begin{eqnarray*}
		\squeeze		\Kx\geq \left(2+\frac{3\Mx L_F}{4L}\right)\log\left(4\frac{L}{\mu}\right)=\mathcal{O}\left(\log\frac{L}{\mu}\right)
	\end{eqnarray*}
	steps of \algname{GD}, then  
	\begin{align*}
		\squeeze	T&\squeeze \geq\max\left\{1+\frac{16}{3}\frac{L}{\mu}, \frac{\Mx}{\Cx}+\frac{3\Mx}{8\Cx}\frac{\Mx L_F}{L}\right\}\log\frac{1}{\epsilon}\\
		&\squeeze =\tilde{\mathcal{O}}\left(\frac{\Mx}{\Cx}+\frac{L}{\mu}\right)
	\end{align*}
communication rounds suffice to find an $\varepsilon$-solution.	
\end{theorem}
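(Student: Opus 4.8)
The plan is to obtain Theorem~\ref{thm:relation2} as a corollary of the general result, Theorem~\ref{thm:INEXACTPPanyM}: it suffices to (i) check that $\Kx = \mathcal{O}(\log\tfrac{L}{\mu})$ steps of \algname{GD} on each local subproblem \myref{localprob} make Assumption~\ref{ass:GTPS} hold with the stepsizes $\gammaM=\tfrac{3}{16L}$, $\tauM=\tfrac{8L}{3\Mx}$, and (ii) substitute these stepsizes into the rate $\rho$ of Theorem~\ref{thm:INEXACTPPanyM}. The conceptual point is that $\tauM=\Theta(L/\Mx)$ is much larger than the ``accelerated'' choice $\tauM=\Theta(\sqrt{L\mu/(\Mx\Cx)})$ used in Theorem~\ref{thm:5GCS}; hence the local problem \myref{localprob}, which is $\tauM$-strongly convex and (uniformly) $(L_F+\tauM)$-smooth, has condition number $\kappa_\psi = 1+\tfrac{L_F}{\tauM}=1+\tfrac{3\Mx L_F}{8L}=\mathcal{O}(1)$, so \algname{GD} contracts at a constant rate and only logarithmically many local steps are needed --- at the cost of the slower, non-accelerated outer rate.

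\textbf{Verifying Assumption~\ref{ass:GTPS}.} Each $\localfuni$ is $\tauM$-strongly convex and $(L_{F,\iclient}+\tauM)$-smooth with $L_{F,\iclient}=\tfrac{1}{\Mx}(L_\iclient-\mu)\le L_F$, hence also $(L_F+\tauM)$-smooth. Running \algname{GD} with stepsize $\tfrac{1}{L_F+\tauM}$ from $y_\iclient^0=\hat{x}^t$ gives $\sqnorm{\lastlocittermk-\localsolmk}\le\big(1-\tfrac{\tauM}{L_F+\tauM}\big)^\Kx\sqnorm{\hat{x}^t-\localsolmk}$, and since $\localsolmk$ is the minimizer of $\localfuni$, $\sqnorm{\nabla\localfuni(\lastlocittermk)}\le(L_F+\tauM)^2\sqnorm{\lastlocittermk-\localsolmk}$. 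Plugging both bounds into the left-hand side of Assumption~\ref{ass:GTPS} and summing over $\iclient$, the inequality follows (coefficientwise) once
\[
\Big(1-\tfrac{\tauM}{L_F+\tauM}\Big)^{\Kx}\Big(\tfrac{4\mu L_F^2}{3\tauM^2}+\tfrac{\Mx L_F(L_F+\tauM)^2}{\tauM^2}\Big)\le\tfrac{\mu}{6}.
\]
With $\tauM=\tfrac{8L}{3\Mx}$ one has $\tfrac{L_F}{\tauM}\le\tfrac38$, so $L_F+\tauM\le\tfrac{11}{8}\tauM$ and $\Mx L_F=L-\mu\le L$; the second term dominates and the whole parenthesized constant is $\mathcal{O}(L/\mu)$, so after elementary simplification the condition becomes $\big(1-\tfrac{\tauM}{L_F+\tauM}\big)^\Kx\le\tfrac{\mu}{4L}$ (up to the precise numerical constant). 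Using $\log\tfrac{1}{1-x}\ge x$ with $x=\tfrac{\tauM}{L_F+\tauM}$, this holds as soon as $\Kx\ge\tfrac{L_F+\tauM}{\tauM}\log\tfrac{4L}{\mu}=\big(1+\tfrac{3\Mx L_F}{8L}\big)\log\tfrac{4L}{\mu}$, which is implied by the stated hypothesis $\Kx\ge\big(2+\tfrac{3\Mx L_F}{4L}\big)\log(4\tfrac{L}{\mu})$ --- the factor-$2$ slack absorbs the rounding of the constants. Since $\tfrac{3\Mx L_F}{4L}=\tfrac{3(L-\mu)}{4L}\le\tfrac34$, this is $\mathcal{O}(\log\tfrac{L}{\mu})$ steps. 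One also records the standing requirement $\tauM\ge\tfrac83\sqrt{L\mu/(\Mx\Cx)}$ of Assumption~\ref{ass:GTPS}, which with $\tauM=\tfrac{8L}{3\Mx}$ amounts to $\Cx L\ge\Mx\mu$.

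\textbf{Applying Theorem~\ref{thm:INEXACTPPanyM}.} It remains to check the stepsize constraint $\gammaM\le\tfrac{1}{\tauM\Mx}\big(1-\tfrac{4\mu}{3\Mx\tauM}\big)$: with $\gammaM=\tfrac{3}{16L}$, $\tauM=\tfrac{8L}{3\Mx}$ the right-hand side equals $\tfrac{3}{8L}\big(1-\tfrac{\mu}{2L}\big)\ge\tfrac{3}{16L}=\gammaM$ since $\mu\le L$. Theorem~\ref{thm:INEXACTPPanyM} then yields $\Exp{\Psi^{\Tx}}\le(1-\rho)^\Tx\Psi^0$ with $\rho=\min\big\{\tfrac{\gammaM\mu}{1+\gammaM\mu},\ \tfrac{\Cx}{\Mx}\tfrac{\tauM}{L_F+\tauM}\big\}$, so $\Tx\ge\tfrac1\rho\log\tfrac1\epsilon$ rounds suffice. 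Finally $\tfrac1\rho=\max\big\{1+\tfrac1{\gammaM\mu},\ \tfrac\Mx\Cx\tfrac{L_F+\tauM}{\tauM}\big\}$, and substituting the stepsizes gives $1+\tfrac1{\gammaM\mu}=1+\tfrac{16}{3}\tfrac L\mu$ and $\tfrac\Mx\Cx\tfrac{L_F+\tauM}{\tauM}=\tfrac\Mx\Cx\big(1+\tfrac{3\Mx L_F}{8L}\big)=\tfrac\Mx\Cx+\tfrac{3\Mx}{8\Cx}\tfrac{\Mx L_F}{L}$, which is exactly the asserted bound; since the first term is $\mathcal{O}(L/\mu)$ and the second is $\mathcal{O}(\Mx/\Cx)$ (using $\Mx L_F=L-\mu\le L$), the total is $\tilde{\mathcal{O}}\big(\tfrac\Mx\Cx+\tfrac L\mu\big)$.

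\textbf{Main obstacle.} Conceptually the proof is a plug-in to Theorem~\ref{thm:INEXACTPPanyM}; the only genuinely delicate part is the bookkeeping in verifying Assumption~\ref{ass:GTPS} --- one must track the constant multiplying the geometric \algname{GD} contraction factor carefully enough to see that it collapses to $\mathcal{O}(L/\mu)$ and produces precisely the advertised $\log(4L/\mu)$, and one must choose the \algname{GD} stepsize so that its per-step rate is consistent with the factor $\kappa_\psi=1+L_F/\tauM$ appearing in the final $\Kx$ bound (a sharper, per-client stepsize $\tfrac{1}{L_{F,\iclient}+\tauM}$ would give a correspondingly sharper per-client step count, but the uniform choice suffices here). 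Everything else is substitution and the use of $\log\tfrac{1}{1-x}\ge x$.
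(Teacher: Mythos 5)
Your proposal is correct and follows essentially the same route as the paper: apply Theorem~\ref{thm:INEXACTPPanyM}, verify that the stated $\gammaM,\tauM$ satisfy its stepsize constraint, observe that $\tauM=\Theta(L/\Mx)$ makes the local condition number $1+L_F/\tauM=\mathcal{O}(1)$ so that Assumption~\ref{ass:GTPS} holds after $\mathcal{O}(\log\frac{L}{\mu})$ \algname{GD} steps, and then read off $T \geq \frac{1}{\rho}\log\frac{1}{\varepsilon}$. The only difference is presentational --- you re-derive the local step count from the contraction of \algname{GD} on $\localfuni$ and the upper bound on the accuracy constant $1/\betaM$, whereas the paper simply quotes the formula $\Kx\geq\tfrac{L_F+\tauM}{\tauM}\log\tfrac1\betaM$ with $\log\tfrac1\betaM\leq 2\log\tfrac{4L}{\mu}$ established in its Section~\ref{mylabel}.
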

In Theorem~\ref{thm:5GCS} we showed that an accelerated communication complexity can be achieved with merely $\Kx = \cO\left(\sqrt{\frac{\Cx}{\Mx}\frac{L}{\mu}}\log\frac{L}{\mu}\right)$ local \algname{GD} steps. However, the behavior of $\Tx$  on the interval between $\Kx = \mathcal{O}\left(\log \frac{L}{\mu}\right)$ (studied in Theorem~\ref{thm:relation2}) and $\Kx = \cO\left(\sqrt{\frac{\Cx}{\Mx}\frac{L}{\mu}}\log\frac{L}{\mu}\right)$ was not studied there.  We shall do so now.
\begin{theorem}\label{thm:relation}
	Consider Algorithm~\ref{alg:5GCS} (\algname{5GCS}) with the LT solver being \algname{GD}, which we run for \begin{equation}\label{eq:h9y98fd8hfd}\squeeze \Kx \geq \Kx(\deltaM)\eqdef 2\deltaM \log\left(\frac{4L}{\mu}\right)\end{equation} iterations, where $\deltaM$ is any constant satisfying $$\squeeze 1<\deltaM< 1+\frac{3}{8}\sqrt{\frac{\Cx}{\Mx} \frac{L}{\mu}}.$$  Let $\gammaM=\frac{1}{2\Mx\tauM}$  and $\tauM=\max\left\{\frac{L}{\Mx (\deltaM-1)},\frac{8}{3}\sqrt{\frac{L\mu}{\Mx \Cx}}\right\}.$
	Then for the Lyapunov function
	\begin{equation*}
		\squeeze 	\Psi^{\kstep}\eqdef \frac{1}{\gammaM}\sqnorm{x^{\kstep}-x^\star}+\frac{\Mx}{\Cx}\left(\frac{1}{\tauM}+\frac{1}{L_F}\right)\sqnorm{u^{\kstep}-u^\star}\label{PPINEXACTpsianyM},
	\end{equation*}
	the iterates of the method satisfy
	$
	\Exp{\Psi^{\Tx}}\leq (1-\rho)^\Tx \Psi^0,
	$
	where 
	$	\rho\eqdef  \min\left\{\frac{\gammaM\mu}{1+\gammaM\mu},\frac{\Cx}{\Mx}\frac{\tauM}{(L_F+\tauM)}\right\}<1.
	$
	%	Also, $(x^\kstep)_{\kstep\in\mathbb{N}}$ and $(\hat{x}^\kstep)_{\kstep\in\mathbb{N}}$ both converge  to $x^\star$ and $(u^\kstep)_{\kstep\in\mathbb{N}}$ converges to $u^\star$, almost surely.
\end{theorem}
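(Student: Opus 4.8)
The plan is to derive Theorem~\ref{thm:relation} as an immediate corollary of our most general result, Theorem~\ref{thm:INEXACTPPanyM}. The Lyapunov function $\Psi^\kstep$ and the contraction factor $\rho$ in the two statements are literally the same, so all I need to do is verify, under the prescribed choices $\gammaM=\frac{1}{2\Mx\tauM}$, $\tauM=\max\{\frac{L}{\Mx(\deltaM-1)},\frac{8}{3}\sqrt{\frac{L\mu}{\Mx\Cx}}\}$ and $\Kx\geq 2\deltaM\log(4L/\mu)$, the two hypotheses of Theorem~\ref{thm:INEXACTPPanyM}: (i) the stepsize condition $\gammaM\leq\frac{1}{\tauM\Mx}\bigl(1-\frac{4\mu}{3\Mx\tauM}\bigr)$, and (ii) Assumption~\ref{ass:GTPS} with $\cA_1=\dots=\cA_\Mx=\algname{GD}$.

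I would start with two elementary observations about the range $1<\deltaM<1+\frac{3}{8}\sqrt{\frac{\Cx}{\Mx}\frac{L}{\mu}}$. Rearranging the upper bound shows $\frac{L}{\Mx(\deltaM-1)}>\frac{8}{3}\sqrt{\frac{L\mu}{\Mx\Cx}}$, so the maximum defining $\tauM$ is attained at the first branch, i.e. $\tauM=\frac{L}{\Mx(\deltaM-1)}$, and simultaneously $\tauM\geq\frac{8}{3}\sqrt{\frac{L\mu}{\Mx\Cx}}$, which is exactly the lower bound on $\tauM$ demanded by Assumption~\ref{ass:GTPS}. Since $L\geq\mu$ and $\Cx\leq\Mx$ give $\sqrt{\frac{L\mu}{\Mx\Cx}}\geq\frac{\mu}{\Mx}$, this yields $\tauM\geq\frac{8\mu}{3\Mx}$; plugging $\gammaM=\frac{1}{2\Mx\tauM}$ into (i) reduces it to $\frac12\leq1-\frac{4\mu}{3\Mx\tauM}$, i.e. $\tauM\geq\frac{8\mu}{3\Mx}$ — already established. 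So (i) comes for free.

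The substance is verifying Assumption~\ref{ass:GTPS} for \algname{GD}. The local objective $\localfuni$ from \myref{localprob} is $\tauM$-strongly convex and $(L_F+\tauM)$-smooth, so its condition number is $\kappa_\psi\eqdef 1+\frac{L_F}{\tauM}=1+(\deltaM-1)\frac{L-\mu}{L}<\deltaM$ (here I use $\tauM=\frac{L}{\Mx(\deltaM-1)}$ and $L_F=\frac{L-\mu}{\Mx}$). The standard \algname{GD} contraction (with stepsize $1/(L_F+\tauM)$) from $y_\iclient^{0,\kstep}=\hat x^\kstep$ gives $\sqnorm{\lastlocittermk-\localsolmk}\leq(1-\kappa_\psi^{-1})^{\Kx}\sqnorm{\hat x^\kstep-\localsolmk}$, and since $\Kx\geq 2\deltaM\log(4L/\mu)>2\kappa_\psi\log(4L/\mu)$ this forces $\sqnorm{\lastlocittermk-\localsolmk}\leq\frac{\mu^2}{16L^2}\sqnorm{\hat x^\kstep-\localsolmk}$. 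Combining with smoothness of $\localfuni$, i.e. $\sqnorm{\nabla\localfuni(\lastlocittermk)}\leq(L_F+\tauM)^2\sqnorm{\lastlocittermk-\localsolmk}$, bounds the $\iclient$-th summand on the left-hand side of Assumption~\ref{ass:GTPS} by $\bigl(\frac{4\mu L_F^2}{3\Mx\tauM^2}+L_F\kappa_\psi^2\bigr)\frac{\mu^2}{16L^2}\sqnorm{\hat x^\kstep-\localsolmk}$. It then remains to check the purely numerical inequality $\bigl(\frac{4\mu L_F^2}{3\Mx\tauM^2}+L_F\kappa_\psi^2\bigr)\frac{\mu^2}{16L^2}\leq\frac{\mu}{6\Mx}$, using $\tauM\geq\frac{8}{3}\sqrt{\frac{L\mu}{\Mx\Cx}}$, $L_F\leq\frac{L}{\Mx}$, $\kappa_\psi<\deltaM<1+\frac{3}{8}\sqrt{\frac{L}{\mu}}$, $\Cx\leq\Mx$ and $\mu\leq L$; summing over $\iclient$ recovers Assumption~\ref{ass:GTPS}, and invoking Theorem~\ref{thm:INEXACTPPanyM} finishes the argument.

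The part I expect to be fiddly is this last numerical inequality: after multiplying through one needs $\frac{4\mu L_F^2}{3\Mx\tauM^2}+L_F\kappa_\psi^2\leq\frac{8L^2}{3\Mx\mu}$, and since the dominant term behaves like $L_F\kappa_\psi^2\approx\frac{L}{\Mx}\deltaM^2$ with $\deltaM$ allowed up to $1+\frac{3}{8}\sqrt{\frac{\Cx}{\Mx}\frac{L}{\mu}}$, the constants must be handled with a bit of care — a careless bound such as $\deltaM^2\leq 2\frac{L}{\mu}$ together with a loose estimate of the first term already overshoots. The morally important point is that the $\frac{\mu^2}{16L^2}$ factor coming from the $\log(4L/\mu)$-length \algname{GD} run is exactly calibrated to swallow the $\kappa_\psi^2$ blow-up introduced by the conditioning of the local subproblem; everything else is routine bookkeeping.
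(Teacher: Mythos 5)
Your proposal is correct and matches the paper's own argument: both reduce to Theorem~\ref{thm:INEXACTPPanyM}, verify the stepsize condition via $\tauM\geq\frac{8\mu}{3\Mx}$, and verify Assumption~\ref{ass:GTPS} for \algname{GD} by noting that the choice of $\tauM$ caps the local condition number $\kappa_\psi=\frac{L_F+\tauM}{\tauM}$ by $\deltaM$, so that $2\deltaM\log(4L/\mu)$ \algname{GD} steps achieve the required $(4L/\mu)^{-2}$ contraction — exactly the quantity the paper shows bounds $\nicefrac{1}{\betaM}$ in its Section on local-step counts. You re-derive the $\nicefrac{1}{\betaM}\leq(4L/\mu)^2$ bound where the paper simply cites it, but otherwise the structure, the stepsize choices, and the role of the $(4L/\mu)^{-2}$ factor are identical.
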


%Insight into Theorem~\ref{thm:relation} is provided in the following corollary.
\begin{corollary}\label{cor:rela}
	Choose any $0<\varepsilon<1$. In order to guarantee $\Exp{\Psi^{\Tx}}\leq \varepsilon \Psi^0$, it suffices to take 
	\begin{equation*}
	\squeeze	T\geq	\max\left\{1+\frac{2L}{\left(\deltaM-1\right)\mu},\frac{\Mx}{\Cx}\deltaM\right\}\log\frac{1}{\epsilon}.
	\end{equation*}
	Note that if $\deltaM\leq \frac{\Mx+\Cx}{2\Mx}+\sqrt{\frac{2L\Cx}{\mu \Mx}+\left(\frac{\Mx-\Cx}{2\Mx}\right)^2}$, then
	\begin{equation*}
		\squeeze 	\Tx \geq	T(\alpha)\eqdef \left(1+\frac{2}{\deltaM-1} \frac{L}{\mu}\right)\log \frac{1}{\varepsilon}.
	\end{equation*}
\end{corollary}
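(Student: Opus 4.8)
\emph{Proof idea.} The plan is to derive this corollary from the linear rate of Theorem~\ref{thm:relation} by turning the per-round contraction factor $1-\rho$ into an iteration count. Since $1-\rho\le e^{-\rho}$ for $\rho\in(0,1)$, Theorem~\ref{thm:relation} yields $\Exp{\Psi^{\Tx}}\le e^{-\rho\Tx}\Psi^0$, so $\Tx\ge\frac1\rho\log\frac1\varepsilon$ already guarantees $\Exp{\Psi^{\Tx}}\le\varepsilon\Psi^0$. Everything then reduces to an upper bound on $1/\rho$ with the stepsizes $\gammaM=\frac{1}{2\Mx\tauM}$ and $\tauM=\max\{\frac{L}{\Mx(\deltaM-1)},\frac83\sqrt{\frac{L\mu}{\Mx\Cx}}\}$ fixed in the theorem.

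First I would note that, under the hypothesis $1<\deltaM<1+\frac38\sqrt{\frac{\Cx}{\Mx}\frac{L}{\mu}}$, the first term in the maximum defining $\tauM$ dominates, so $\tauM=\frac{L}{\Mx(\deltaM-1)}$ --- this is the same elementary inequality used to set up Theorem~\ref{thm:relation}. Since $\rho=\min\{\frac{\gammaM\mu}{1+\gammaM\mu},\frac{\Cx}{\Mx}\frac{\tauM}{L_F+\tauM}\}$, we have $\frac1\rho=\max\{1+\frac{1}{\gammaM\mu},\ \frac{\Mx}{\Cx}(1+\frac{L_F}{\tauM})\}$. Substituting the stepsizes gives $\frac{1}{\gammaM\mu}=\frac{2\Mx\tauM}{\mu}=\frac{2L}{(\deltaM-1)\mu}$, and, using $L_F=\frac{L-\mu}{\Mx}$, $1+\frac{L_F}{\tauM}=1+\frac{(L-\mu)(\deltaM-1)}{L}\le\deltaM$. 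Hence $\frac1\rho\le\max\{1+\frac{2L}{(\deltaM-1)\mu},\ \frac{\Mx}{\Cx}\deltaM\}$, which is exactly the first displayed bound on $\Tx$.

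For the refinement, I would compare the two terms inside this maximum: the first one, $1+\frac{2L}{(\deltaM-1)\mu}$, dominates iff $1+\frac{2L}{(\deltaM-1)\mu}\ge\frac{\Mx}{\Cx}\deltaM$. Multiplying by $(\deltaM-1)>0$ and rearranging turns this into the quadratic inequality $\deltaM^2-\frac{\Mx+\Cx}{\Mx}\deltaM+\frac{\Cx}{\Mx}-\frac{2L\Cx}{\mu\Mx}\le0$; completing the square shows the discriminant equals $\frac{(\Mx-\Cx)^2}{4\Mx^2}+\frac{2L\Cx}{\mu\Mx}$, and since the smaller root is $\le\frac{\Mx+\Cx}{2\Mx}\le1<\deltaM$, the inequality holds precisely for $\deltaM\le\frac{\Mx+\Cx}{2\Mx}+\sqrt{\frac{2L\Cx}{\mu\Mx}+(\frac{\Mx-\Cx}{2\Mx})^2}$, giving $\Tx\ge T(\deltaM)$ as claimed.

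There is no real obstacle here: the corollary is a computation sitting on top of Theorem~\ref{thm:relation}, and the only analytic input is $1-\rho\le e^{-\rho}$. The two points needing a little care are the identification $\tauM=\frac{L}{\Mx(\deltaM-1)}$ on the prescribed range of $\deltaM$, and the bookkeeping that resolves the quadratic comparison between $1+\frac{2L}{(\deltaM-1)\mu}$ and $\frac{\Mx}{\Cx}\deltaM$.
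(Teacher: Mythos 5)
Your proof is correct and takes essentially the same route as the paper: both derive the corollary from Theorem~\ref{thm:relation} by bounding $1/\rho$ after substituting $\gammaM=\frac{1}{2\Mx\tauM}$, $\tauM=\max\{\frac{L/\Mx}{\deltaM-1},\frac{8}{3}\sqrt{\frac{L\mu}{\Mx\Cx}}\}$, and then resolve the maximum via the quadratic comparison. The only cosmetic differences are that the paper carries both branches of $\tauM$ through the bound and simplifies at the end under $\deltaM\le 1+\frac{3}{8}\sqrt{\frac{L\Cx}{\mu\Mx}}$, whereas you identify up front that the first branch dominates on that range; and the paper asserts the quadratic comparison without spelling it out, whereas you solve it explicitly. (One terminological nit: what you call the ``discriminant,'' $\frac{(\Mx-\Cx)^2}{4\Mx^2}+\frac{2L\Cx}{\mu\Mx}$, is actually a quarter of the discriminant --- but since you then take its square root directly rather than dividing by $2$, the roots you obtain are correct.)
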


Theorem~\ref{thm:relation}
and Corollary~\ref{cor:rela} imply that as long as $K\geq  K(\alpha)$ and $T\geq T(\alpha)$, then ${\mathbb{E}}[\Psi^T] \leq \varepsilon \Psi^0$. By substituting  $\alpha = \frac{K(\alpha)}{2\log \frac{4L}{\mu}}$ (see \myref{eq:h9y98fd8hfd}) to the expression for $T(\alpha)$, we get $$\squeeze T(\alpha) = \left(1+ \frac{4\log \frac{4L}{\mu}}{K(\alpha)-2\log \frac{4L}{\mu}} \frac{L}{\mu}\right) \log \frac{1}{\varepsilon} = \cO(\frac{1}{K(\alpha)}) \log \frac{1}{\varepsilon} .$$  This inverse dependence of $T(\alpha)$ on $K(\alpha)$ can  be observed empirically; see Figure~\ref{ris:image} (right). 

%Looking at the same figure, note that the communication complexity $T$ does not (significantly) improve further for $K\geq K(\alpha^*)$, where $\deltaM^* \approx 1+\frac{3}{8}\sqrt{\frac{\Cx}{\Mx} \frac{L}{\mu}}$ and $$\squeeze K(\alpha^*)=2\deltaM^* \log\left(\frac{4L}{\mu}\right)= \cO\left(\sqrt{\frac{\Cx}{\Mx}\frac{L}{\mu}}\log\frac{L}{\mu}\right).$$ This is essentially the same communication complexity as that provided by Corollary~\ref{cor:5GCS-infty}. However, we do not need an infinity if local steps to achieve it.

\section{Experiments}

We consider $\ell_2$-regularized logistic regression,
\begin{equation*}
	\squeeze f(x) =	\frac{1}{\Mx N} \sum \limits_{m=1}^{\Mx}\sum \limits_{i=1}^{N}\log  \left(1+ e^{\left(-b_{m,i} a_{m,i}^{\top} x\right)}\right)+\frac{\lambda}{2}\|x\|^2,
\end{equation*}
where $a_{m,i}\in \R^d$ and $b_{m,i}\in \{-1, +1\}$ are the data samples and labels, $\Mx$ is the number of clients and $N$ is the number of data points per client.  Following \citet{ProxSkip-VR}, we set  $\lambda = 10^{-3}L$, where $L$ is as in  Assumption~\ref{ass:main}. We chose to highlight a representative  experiment on the \texttt{a1a} dataset from the LibSVM library~\citep{chang2011libsvm}. All algorithms were implemented in Python utilizing the RAY package to simulate parallelization.

\subsection{Full participation ($\Cx = \Mx$)}

As a sanity check, we first perform an experiment in the full participation regime $\Cx = \Mx =5$, comparing our method \algname{5GCS}  with \algname{LocalGD} (3${}^{\rm rd}$ generation), \algname{Scaffold}, \algname{SLocalGD} and \algname{FedLin} (4${}^{\rm th}$ generation) and \algname{ProxSkip}(5${}^{\rm th}$ generation). We used theoretical stepsizes. For \algname{ProxSkip} we used the optimal communication probability parameter $p = \nicefrac{1}{\sqrt{\kappa}}$, where $\kappa=\nicefrac{L}{\mu}$. In the case of all 4${}^{\rm th}$ generation LT methods and \algname{LocalGD}, the theoretical rate does {\em not} depend on number of local steps $\Kx$. In our experiments we used the same number of local steps $\Kx = \nicefrac{1}{p} = \sqrt{\kappa}$ for all competing methods.  Figure~\ref{ris:image} (left) clearly shows that \algname{5GCS} has accelerated communication complexity, outperforming all 4${}^{\rm th}$ and 3${}^{\rm rd}$ generation LT methods by a large margin. However, due to a small numerical constant for the stepsize in our theory $\left(\nicefrac{3}{16}\right)$, \algname{5GCS} converges more slowly than \algname{ProxSkip}, which shows excellent performance.

\subsection{Client sampling ($\Cx < \Mx$)}
Our key contribution is to bring client sampling (CS) to the world of 5${}^{\rm th}$ generation LT methods. Once CS is required, \algname{ProxSkip} and \algname{APDA-Inexact} fall out of the competition as they do not support CS. We therefore compare our method \algname{5GCS} with 4${}^{\rm th}$ and 3${}^{\rm rd}$ generation LT methods supporting CS: we have chosen \algname{Scaffold} and \algname{LocalGD}. We set $\Mx=15$ and $\Cx=3$ and used theoretical parameters. Figure~\ref{ris:image} (middle) shows that \algname{ProxSkip} diverges in the CS regime, as expected. Moreover,  \algname{5GCS}  significantly outperforms the competing methods.  

\subsection{Dependence of $\Tx$ on $\Kx$}
In this experiment we set $\lambda = 5\cdot 10^{-4} L$ and run our  method \algname{5GCS} (with \algname{GD} as the LT solver) with  $\Kx$ ranging from $2\log \left(4\frac{L}{\mu}\right)$ to $\left(\frac{3}{4} \sqrt{\frac{\Cx}{\Mx} \frac{L}{\mu}}\right) \log \left(4 \frac{L}{\mu}\right)$; plus an extra choice of $\Kx=200$ steps. Figure~\ref{ris:image} (right) shows that after  $\Kx\approx 110$ local \algname{GD} steps, the comm.\ complexity virtually stops improving. This corresponds with our theory which shows that to obtain the optimal complexity, it suffices to take $\Kx = \mathcal{O}\left(\left(\frac{3}{4} \sqrt{\frac{\Cx}{\Mx} \frac{L}{\mu}}\right) \log \left(4 \frac{L}{\mu}\right)\right)$ local \algname{GD} steps (see Theorem~\ref{thm:5GCS}).

\bibliography{5GCS}
\bibliographystyle{plainnat}

\clearpage
\appendix
\onecolumn
\part*{Appendix}

%\tableofcontents

%\clearpage
\section{Basic Inequalities}

\subsection{Young's inequalities} For all $x,y\in \mathbb{R}^d$ and all $a>0$, we have
	\begin{eqnarray}
		&&\langle x, y\rangle\leq \frac{a\sqnorm{x}}{2}+\frac{\sqnorm{y}}{2a},\label{yi1}\\
		&&\sqnorm{x+y}\leq2\sqnorm{x}+2\sqnorm{y}\label{yi2},\\
		&&\frac{1}{2}\sqnorm{x}-\sqnorm{y}\leq \sqnorm{x+y}.\label{yi3}
	\end{eqnarray}

\subsection{Variance decomposition} For a random vector $\mathrm{X}\in\mathbb{R}^d$ (with finite second moment) and any $c\in\mathbb{R}^d$, the variance of $X$ can be decomposed as
	\begin{eqnarray}
		\Exp{\sqnorm{\mathrm{X}-\Exp{\mathrm{X}}}}=\Exp{\sqnorm{\mathrm{X}-c}}-\sqnorm{\Exp{\mathrm{X}}-c}.\label{vardec}
	\end{eqnarray}

\subsection{Compressor variance}	An unbiased randomized mapping  $\cC: \R^d\to \R^d$ has conic variance if there exists $\omega\geq 0$ such that
	\begin{equation}
		\Exp{\sqnorm{\mathcal{C}(x)-x}}\leq \omega \sqnorm{x}\label{cvar}
	\end{equation}
	for all $x\in \R^d.$

\subsection{Convexity and $L$-smoothness}

	Suppose $\phi\colon\mathbb{R}^d\to\mathbb{R}$ is $L$-smooth and convex. Then
	\begin{equation}
		\frac{1}{L}\sqnorm{\nabla \phi(x)-\nabla \phi(y)}\leq \langle \nabla \phi(x)-\nabla \phi(y),x-y \rangle\label{strmono}
	\end{equation}
	 for all $x,y\in\mathbb{R}^d$.

\subsection{Client Sampling Operator}

\begin{definition}[Client Sampling Operator]
	The client sampling operator is the randomized mapping $\Rop:\R^{\Mx d}\to \R^{\Mx d}$ defined as follows. We choose a random subset $S\subseteq \left\{1,\dots,\Mx\right\}$ of size $\Cx \in \{1,\dots,\Mx\}$ uniformly at random, and for  $v=\left(v_1,\dots,v_\Mx\right) \in \R^{\Mx  d}$, where $v_\iclient\in\mathbb{R}^d$ for all $\iclient$, we define $$\Rop(v)\eqdef \left(\Rop_1(v_1),\dots,\Rop_\Mx(v_\Mx)\right),$$ where
	\begin{eqnarray*}
		\Rop_\iclient(v_\iclient)\eqdef \begin{cases}
			\frac{\Mx}{\Cx}v_\iclient \in \R^d & \quad\text{for}\quad \iclient\in S,\\
			 0  \in \R^d &\quad\text{otherwise.}
		\end{cases}
	\end{eqnarray*}
\end{definition}
The client sampling operatoradmits the following identity:
	\begin{equation}
		\Exp{\sqnorm{\Koper^\top\left(\Rop(v)-v\right)}}=\frac{\Mx}{\Cx}\frac{\Mx-\Cx}{\Mx-1}\sum_{\iclient=1}^{\Mx}\sqnorm{v_\iclient}-\frac{\Mx-\Cx}{\Cx\left(\Mx-1\right)}\sqnorm{\sum_{\iclient=1}^{\Mx}v_\iclient},\label{PP}
	\end{equation} 
	where $\Koper$ was defined in Section~\ref{sec:H}, and $v=\left(v_1,\dots,v_\Mx\right)\in \R^{\Mx  d}$ and $v_{\iclient}\in\mathbb{R}^d$.
\begin{proof}
	Let $\mathbb{E}_S$ denote expectation with respect to the random set $S$. We can write 
	\begin{eqnarray*}
		\Exp{\sqnorm{\Koper^\top\left(\Rop(v)-v\right)}}&=&\Exp{\sqnorm{\sum_{\iclient=1}^{\Mx}\left(\Rop_\iclient(v_\iclient)-v_\iclient\right)}}=\Exps{\sqnorm{\sum_{\iclient\in S}\frac{\Mx}{\Cx}v_\iclient-\sum_{\iclient=1}^{\Mx}v_\iclient}}\\
		&=&\frac{\Mx^2}{\Cx^2}\Exps{\sqnorm{\sum_{\iclient\in S}v_\iclient}}+\sqnorm{\sum_{\iclient=1}^{\Mx}v_\iclient}-\frac{2\Mx}{\Cx}\Exps{\left\langle \sum_{\iclient\in S}v_\iclient, \sum_{\iclient=1}^{\Mx}v_\iclient\right\rangle}\\
		&=&\frac{\Mx^2}{\Cx^2}\Exps{\sum_{\iclient\in S}\sqnorm{v_\iclient}}+\frac{\Mx^2}{\Cx^2}\Exps{\sum_{\iclient\in S}\sum_{\iclient'\in S,\neq\iclient} \left\langle v_\iclient,v_{\iclient'}\right\rangle}-\sqnorm{\sum_{\iclient=1}^{\Mx}v_\iclient}.
	\end{eqnarray*}
	By computing the expectation on the right hand side, we get
	\begin{eqnarray*}
		\Exp{\sqnorm{\sum_{\iclient=1}^{\Mx}\left(\Rop_\iclient(v_\iclient)-v_\iclient\right)}}
		&=&\frac{\Mx}{\Cx}\sqnorm{\sum_{\iclient=1}^{\Mx}v_\iclient}+\frac{\Mx}{\Cx}\frac{\Cx-1}{\Mx-1}\sum_{\iclient=1}^{\Mx}\sum_{\iclient'=1,\neq\iclient}^{\Mx} \left\langle v_\iclient,v_{\iclient'}\right\rangle-\sqnorm{\sum_{\iclient=1}^{\Mx}v_\iclient}\\
		&=&\frac{\Mx}{\Cx}\left(1-\frac{\Cx-1}{\Mx-1}\right)\sqnorm{\sum_{\iclient=1}^{\Mx}v_\iclient}+\left(\frac{\Mx\left(\Cx-1\right)}{\Cx\left(\Mx-1\right)}-1\right)\sqnorm{\sum_{\iclient=1}^{\Mx}v_\iclient}\\
		&=&\frac{\Mx}{\Cx}\left(\frac{\Mx-\Cx}{\Mx-1}\right)\sqnorm{\sum_{\iclient=1}^{\Mx}v_\iclient}-\frac{\Mx-\Cx}{\Cx\left(\Mx-1\right)}\sqnorm{\sum_{\iclient=1}^{\Mx}v_\iclient}.
	\end{eqnarray*}
	
\end{proof}

\subsection{Dual Problem and Saddle-Point Reformulation}

Then the saddle function reformulation of \myref{eq:main-new} is:
\begin{equation}
		\mathrm{Find} \ (x^\star,(u_\iclient^\star)_{\iclient=1}^\Mx) \in \arg\min_{x\in\R^d}\max_{u\in\R^{\Mx d}} \, \left( \frac{\mu}{2}\sqnorm{x}+\sum_{\iclient=1}^\Mx \left\langle  x,u_\iclient \right\rangle -\sum_{\iclient=1}^\Mx F_\iclient^*(u_\iclient)\right).
\label{saddlenew}
\end{equation}
To ensure well-posedness of these problems, we need to assume that there exists $x^\star\in\mathbb{R}^d$ s.t.:
\begin{align}
	0=\mu x^\star+\sum_{\iclient=1}^{\Mx} \nabla F_\iclient(x^\star).
\end{align}
Which is equivalent to \myref{eq:main}, having a solution, which it does (unique in fact) as each $f_\iclient$ is $\mu$-strongly convex.
By first order optimality condition $x^\star$ and $u^\star$ that are solution to \myref{saddlenew}, satisfy:
\begin{equation}
	\left\{ \begin{array}{l}
		0= \mu x^\star + \sum_{\iclient=1}^\Mx u_\iclient^{\star}\\
		\Koper x^\star\in\partial F^* (u^{\star})
	\end{array}\right..\label{fooc}
\end{equation}
Where the latter in \myref{fooc} is equivalent to:
\begin{equation}
	\nabla F(\Koper x^\star)=u^\star.
\end{equation}
Throughout, this section we will denote by $\mathcal{F}_t$ for all $ t\geq 0$ the $\sigma$-algebra generated by the collection of $\left(\R^d\times\R^{d \Mx}\right)$-valued random variables $\left(x^0,u^0\right),\dots,\left(x^t,u^t\right).$

\clearpage
\section{Analysis of 5GCS${}_\infty$}

\begin{algorithm*}
	\caption{\algn{5GCS} with $\infty$ local \algname{GD} steps (a.k.a.\ \algn{Minibatch Point-SAGA})}
	\begin{algorithmic}[1]\label{alg:5GCS-infty}
		\STATE  \textbf{input:} initial points $x^0\in\mathbb{R}^d$, $u_\iclient^0\in\mathbb{R}^d$ for all $\iclient=\{1,\dots,\Mx\}$; 
		\STATE stepsize $\gammaM>0$, $\tauM>0$; $\Cx\in \{1,\dots,\Mx\}$
		\STATE $v^0\eqdef \sum_{\iclient=1}^\Mx u_\iclient^0$
		\FOR{$\kstep=0, 1, \ldots$}
		\STATE $\hat{x}^{\kstep} \eqdef \frac{1}{1+\gammaM\mu} \left(x^\kstep - \gammaM v^\kstep\right)  $
		\STATE Pick $\set \subset \{1,\ldots,\Mx\}$ of size $\Cx$ uniformly at random
		\FOR{$\iclient\in\set$}
		\STATE $u_{\iclient}^{\kstep+1}\eqdef  u_{\iclient}^\kstep + \tauM  \hat{x}^{\kstep} - \tauM \mathrm{prox}_{\frac{1}{\tauM} F_\iclient} \left(\hat{x}^{\kstep}+ \frac{1}{\tauM} u_{\iclient}^\kstep  \right)$
		\ENDFOR
		\FOR{$\iclient\in\{1, \ldots,\Mx\}\backslash \set$}
		\STATE $u_{\iclient}^{\kstep+1}\eqdef u_{\iclient}^\kstep $
		\ENDFOR
		\STATE $v^{\kstep+1}\eqdef \sum_{\iclient=1}^\Mx u_\iclient^{\kstep+1}$
		\STATE  $x^{\kstep+1} \eqdef \hat{x}^{\kstep}- \gammaM \frac{\Mx}{\Cx} (v^{\kstep+1}-v^\kstep)$
		\ENDFOR
	\end{algorithmic}
\end{algorithm*}
\begin{theorem} Consider Algorithm~\ref{alg:5GCS} (\algname{5GCS}) with the LT solver being \algname{GD} run for $K=+\infty$ iterations (this is equivalent to Algorithm~\ref{alg:5GCS-infty}; we shall also call the method \algname{5GCS${}_\infty$}).
	Let $\gammaM>0$, $\tauM>0$ and $\gammaM \tauM \leq \frac{1}{\Mx}$. Then for the Lyapunov function
	\begin{equation*}
%		\squeeze	
		\Psi^{\kstep}\eqdef  \frac{1}{\gammaM}\sqnorm{x^{\kstep}-x^\star}+\frac{\Mx}{\Cx}\left(\frac{1}{\tauM}+2\frac{1}{L_F}\right)\sqnorm{u^{\kstep}-u^\star},
	\end{equation*}
	the iterates of the method satisfy
	$
	\Exp{\Psi^{\Tx}}\leq (1-\rho)^\Tx \Psi^0,
	$
	where 
	$ 	\rho \eqdef  \min\left(\frac{\gammaM\mu}{1+\gammaM\mu}, \frac{\Cx}{\Mx}\frac{2\tauM }{L_F+2\tauM }\right)<1.
	$

\end{theorem}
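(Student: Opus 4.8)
The plan is to read \algname{5GCS${}_\infty$} (equivalently Algorithm~\ref{alg:5GCS-infty}) as a randomized primal--dual iteration for the saddle-point problem \myref{saddlenew}, whose unique solution $(x^\star,u^\star)$ satisfies \myref{fooc}, i.e. $\sum_\iclient u_\iclient^\star=-\mu x^\star$ and $u_\iclient^\star=\nabla F_\iclient(x^\star)$. The first move is to rewrite the dual update: by first-order optimality of the proximal step, $p_\iclient\eqdef\mathrm{prox}_{\frac{1}{\tauM}F_\iclient}(\hat x^\kstep+\tfrac{1}{\tauM}u_\iclient^\kstep)$ satisfies $p_\iclient=\hat x^\kstep-\tfrac{1}{\tauM}(\tilde u_\iclient-u_\iclient^\kstep)$ and $\tilde u_\iclient=\nabla F_\iclient(p_\iclient)$, where $\tilde u_\iclient$ denotes the would-be new dual iterate. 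Conditioned on $\mathcal F_\kstep$ the vector $\tilde u=(\tilde u_1,\dots,\tilde u_\Mx)$ is deterministic, $u_\iclient^{\kstep+1}=\tilde u_\iclient$ for $\iclient\in\set$ and $u_\iclient^{\kstep+1}=u_\iclient^\kstep$ otherwise, and $x^{\kstep+1}=\hat x^\kstep-\gammaM\Koper^\top\Rop(\tilde u-u^\kstep)$.

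I would then handle the dual and primal distances separately. For the dual, uniform sampling of $\set$ gives $\EE{\set}{\sqnorm{u^{\kstep+1}-u^\star}}=\tfrac{\Cx}{\Mx}\widetilde B+(1-\tfrac{\Cx}{\Mx})\sqnorm{u^\kstep-u^\star}$ with $\widetilde B\eqdef\sum_\iclient\sqnorm{\tilde u_\iclient-u_\iclient^\star}$. For the primal, set $D\eqdef\tilde u-u^\kstep$ and $\bar x^{\kstep+1}\eqdef\EE{\set}{x^{\kstep+1}}=\hat x^\kstep-\gammaM\Koper^\top D$; by the variance decomposition \myref{vardec}, $\EE{\set}{\sqnorm{x^{\kstep+1}-x^\star}}=\sqnorm{\bar x^{\kstep+1}-x^\star}+\gammaM^2\EE{\set}{\sqnorm{\Koper^\top(\Rop(D)-D)}}$, and the last term is exactly the client-sampling identity \myref{PP}. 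A key observation is that the $\gammaM^2\sqnorm{\Koper^\top D}$ term coming from expanding $\sqnorm{\bar x^{\kstep+1}-x^\star}$, combined with the two terms of \myref{PP}, collapses cleanly to $\gammaM^2\Mx\sqnorm D$.

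The descent estimate is the heart of the argument. Expanding $\sqnorm{\bar x^{\kstep+1}-x^\star}$ and applying cocoercivity \myref{strmono} to each $F_\iclient$ at $p_\iclient$ and $x^\star$ (using $p_\iclient-x^\star=(\hat x^\kstep-x^\star)-\tfrac{1}{\tauM}D_\iclient$ and $\nabla F_\iclient(x^\star)=u_\iclient^\star$) yields $\langle\hat x^\kstep-x^\star,\sum_\iclient(\tilde u_\iclient-u_\iclient^\star)\rangle\ge(\tfrac{1}{L_F}+\tfrac{1}{\tauM})\widetilde B-\tfrac{1}{\tauM}\langle\tilde u-u^\star,u^\kstep-u^\star\rangle$. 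The remaining cross term $\langle\hat x^\kstep-x^\star,\sum_\iclient(u_\iclient^\kstep-u_\iclient^\star)\rangle$ is eliminated via $\hat x^\kstep-x^\star=\tfrac{1}{1+\gammaM\mu}\big((x^\kstep-x^\star)-\gammaM\sum_\iclient(u_\iclient^\kstep-u_\iclient^\star)\big)$ (here $\sum_\iclient u_\iclient^\star=-\mu x^\star$ enters), turning it into $\sqnorm{x^\kstep-x^\star}$, $\sqnorm{\hat x^\kstep-x^\star}$ and $\sqnorm{\hat x^\kstep-x^\kstep}$. Then $\sqnorm D=\widetilde B+\sqnorm{u^\kstep-u^\star}-2\langle\tilde u-u^\star,u^\kstep-u^\star\rangle$ is substituted; under $\gammaM\tauM\le\tfrac1\Mx$ the residual inner product enters with a nonnegative coefficient and is bounded by Young's inequality \myref{yi1}. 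After this the coefficient of $\widetilde B$ collapses to exactly $-\gammaM(\tfrac{2}{L_F}+\tfrac{1}{\tauM})$, so that once the dual bound is multiplied by $\tfrac{\Mx}{\Cx}(\tfrac{1}{\tauM}+\tfrac{2}{L_F})$ and added, the $\widetilde B$ contributions cancel exactly --- this is precisely the role of the weight $\tfrac{1}{\tauM}+\tfrac{2}{L_F}$ in $\Psi^\kstep$. What remains is an inequality of the form $\EE{\set}{\Psi^{\kstep+1}}\le\tfrac{1}{\gammaM}\sqnorm{x^\kstep-x^\star}-2\mu\sqnorm{\hat x^\kstep-x^\star}-\tfrac{1}{\gammaM}\sqnorm{\hat x^\kstep-x^\kstep}+\big(\tfrac{\Mx}{\Cx}(\tfrac{1}{\tauM}+\tfrac{2}{L_F})-\tfrac{2}{L_F}\big)\sqnorm{u^\kstep-u^\star}$. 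Matching this against $(1-\rho)\Psi^\kstep$: the $\sqnorm{u^\kstep-u^\star}$-part holds with equality at $\rho=\tfrac{\Cx}{\Mx}\tfrac{2\tauM}{L_F+2\tauM}$ (because $\tfrac{2\tauM}{L_F+2\tauM}(\tfrac{1}{\tauM}+\tfrac{2}{L_F})=\tfrac{2}{L_F}$), and the $\sqnorm{x^\kstep-x^\star}$-part reduces, after writing $x^\kstep-x^\star=(x^\kstep-\hat x^\kstep)+(\hat x^\kstep-x^\star)$ and applying \myref{yi1} with a suitable parameter and using the retained $-\tfrac{1}{\gammaM}\sqnorm{\hat x^\kstep-x^\kstep}$, to $\rho\le\tfrac{\gammaM\mu}{1+\gammaM\mu}$; then taking total expectation and unrolling gives $\Exp{\Psi^\Tx}\le(1-\rho)^\Tx\Psi^0$ with $\rho=\min(\tfrac{\gammaM\mu}{1+\gammaM\mu},\tfrac{\Cx}{\Mx}\tfrac{2\tauM}{L_F+2\tauM})<1$.

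The main obstacle I anticipate is the sign reconciliation inside the descent step: the term $\langle\tilde u-u^\star,u^\kstep-u^\star\rangle$ appears with opposite signs from the cocoercivity bound and from expanding $\sqnorm D$, while $\sqnorm{\Koper^\top D}$ appears both in the primal-update expansion and (with a minus sign) in \myref{PP}; one must verify that under $\gammaM\tauM\le\tfrac1\Mx$ all of these combine so that the $\widetilde B$ coefficient is exactly what the cancellation with the dual term requires and no uncontrolled positive multiple of $\sqnorm{u^\kstep-u^\star}$ survives beyond what $\tfrac{\Mx}{\Cx}(\tfrac1\tauM+\tfrac{2}{L_F})$ absorbs. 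I would do that computation --- in particular the collapse of $\gammaM^2\sqnorm{\Koper^\top D}$ plus the two \myref{PP} terms to $\gammaM^2\Mx\sqnorm D$ --- first and most carefully, as everything afterwards is routine.
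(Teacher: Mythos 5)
Your proposal is correct and follows essentially the same route as the paper's proof: variance decomposition of the primal update combined with the client-sampling identity \myref{PP}, the exact dual-averaging formula $\EE{\set}{\sqnorm{u^{t+1}-u^\star}}=\tfrac{\Cx}{\Mx}\widetilde B+(1-\tfrac{\Cx}{\Mx})\sqnorm{u^t-u^\star}$, a cocoercivity/strong-monotonicity bound giving the $\widetilde B$-cancellation that determines the weight $\tfrac{1}{\tauM}+\tfrac{2}{L_F}$, and the stepsize condition $\gammaM\tauM\le\tfrac1\Mx$ to kill the $\sqnorm{D}$ residual.

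One small correction and one small route difference. The statement that $\gammaM^2\sqnorm{\Koper^\top D}$ plus the two \myref{PP} terms ``collapses cleanly to $\gammaM^2\Mx\sqnorm D$'' is an \emph{upper bound}, not an identity: you get $\gammaM^2(1-\zeta)\sqnorm{\Koper^\top D}+\gammaM^2\oma\sqnorm D$, and only after invoking $\sqnorm{\Koper^\top D}\le\Mx\sqnorm D$ (Jensen/Cauchy--Schwarz) together with $(1-\zeta)\Mx+\oma=\Mx$ does the bound $\gammaM^2\Mx\sqnorm D$ emerge. On the main estimate: the paper keeps the primal cross term $-2\gammaM\langle\hat x^t-x^\star,\Koper^\top(\hat u^{t+1}-u^\star)\rangle$ unbounded, expands $\sqnorm{\hat u^{t+1}-u^\star}$ via the prox optimality $\hat u_\iclient^{t+1}=u_\iclient^t+\tauM\hat x^t-\tauM s_\iclient^{t+1}$ with $s_\iclient^{t+1}\in\partial F_\iclient^*(\hat u_\iclient^{t+1})$, observes that the resulting dual cross term $+2\tauM\langle\Koper^\top(\hat u^{t+1}-u^\star),\hat x^t-x^\star\rangle$ cancels the primal one \emph{exactly} under the Lyapunov weights, and then applies $\tfrac1{L_F}$-strong monotonicity of $\partial F^*$ to the remaining $\langle\hat u^{t+1}-u^\star,s^{t+1}-s^\star\rangle$. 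You instead apply cocoercivity of $\nabla F_\iclient$ to the primal cross term directly, which gives the desired $\widetilde B$ gain but introduces a residual $\langle\tilde u-u^\star,u^t-u^\star\rangle$ that must then be absorbed by Young's inequality. I checked that this works — with $a=1$ the $\widetilde B$ coefficient becomes exactly $0$ and the $\sqnorm{u^t-u^\star}$ coefficient matches $(1-\rho)\cdot\tfrac{\Mx}{\Cx}(\tfrac1\tauM+\tfrac2{L_F})$ with $\rho=\tfrac{\Cx}{\Mx}\tfrac{2\tauM}{L_F+2\tauM}$ — but the paper's exact cancellation avoids this extra Young step. Likewise, your intermediate primal form $\tfrac1\gammaM\sqnorm{x^t-x^\star}-2\mu\sqnorm{\hat x^t-x^\star}-\tfrac1\gammaM\sqnorm{\hat x^t-x^t}$ is a valid unfolding of the cross term and can indeed be closed with a Young split of $x^t-x^\star$ (any $s\in[\gammaM\mu,1+2\gammaM\mu]$ works); the paper bypasses this by adding $\gammaM\mu\sqnorm{\hat x^t-x^\star}$, substituting $(1+\gammaM\mu)(\hat x^t-x^\star)=(x^t-x^\star)-\gammaM\Koper^\top(u^t-u^\star)$, and reading off $\tfrac1{(1+\gammaM\mu)\gammaM}\sqnorm{x^t-x^\star}$ directly. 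Both routes are valid; the paper's is slightly tighter algebraically.
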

\begin{proof}
	Noting that updates for $u^{t+1}$ and $x^{t+1}$ can be written as
	\begin{eqnarray}
		&u^{t+1}\eqdef u^t + \tfrac{1}{1+\omega} \Rop^t\left(\hat{u}^{t+1}-u^t\right),&\\
		&x^{t+1}=\hat{x}^t-\gammaM\frac{\Mx}{\Cx}\Koper^\top\left(u^{t+1}-u^t \right)&\label{xupdate}
	\end{eqnarray}
	where $\Rop^t$ is the client sampling operator, $\omega=\frac{\Mx}{\Cx}-1$ and  $\hat{u}^{t+1} = \mathrm{prox}_{\tauM F^*}\left(u^{t}+\tauM \Koper \hat{x}^{t}\right)$. 
	We can use variance decomposition and Proposition 1 from \cite{condat2021murana} to write
	\begin{eqnarray}
		\Exp{\sqnorm{x^{t+1}-x^\star}\;|\;\mathcal{F}_t}&\overset{(\ref{vardec})}{=}&\sqnorm{\Exp{x^{t+1}\;|\;\mathcal{F}_t}-x^\star}+\Exp{\sqnorm{x^{t+1}-\Exp{x^{t+1}\;|\;\mathcal{F}_t}}\;|\;\mathcal{F}_t}\notag\\
		&\overset{(\ref{xupdate})}{=}& \sqnorm{\Exp{\hat{x}^{t}-\gammaM\frac{\Mx}{\Cx} (v^{t+1}-v^t)\;|\;\mathcal{F}_t}-x^\star}+\Exp{\sqnorm{x^{t+1}-\Exp{x^{t+1}\;|\;\mathcal{F}_t}}\;|\;\mathcal{F}_t}\notag\\
		&=& \sqnorm{\hat{x}^{t}-x^\star-\gammaM\frac{\Mx}{\Cx} \Exp{\Koper^\top\left(u^{t+1}-u^t \right)\;|\;\mathcal{F}_t}}+\Exp{\sqnorm{x^{t+1}-\Exp{x^{t+1}\;|\;\mathcal{F}_t}}\;|\;\mathcal{F}_t}\notag\\
		&=& \sqnorm{\hat{x}^{t}-x^\star-\gammaM\Koper^\top\left(\hat{u}^{t+1}-u^t \right)}+\Exp{\sqnorm{x^{t+1}-\Exp{x^{t+1}\;|\;\mathcal{F}_t}}\;|\;\mathcal{F}_t}\notag\\
		&\overset{(\ref{PP})}{=}& \underbrace{ \sqnorm{\hat{x}^{t}-x^\star-\gammaM \Koper^\top\left(\hat{u}^{t+1}-u^t \right)}}_X+\gammaM^2\oma\sqnorm{\hat{u}^{t+1}-u^t}\notag\\
		&\quad &- \gammaM^2\zeta\sqnorm{\Koper^\top\left(\hat{u}^{t+1}-u^t \right)}.\label{xbound31}
	\end{eqnarray}
	where
	\begin{equation*}
		\oma=\frac{\Mx(\Mx-\Cx)}{\Cx(\Mx-1)},\quad \zeta=\frac{\Mx-\Cx}{\Cx(\Mx-1)}.
	\end{equation*}
	Moreover, using \myref{fooc} and the definition of $\hat{x}^t$, we have
	\begin{align}
		&(1+\gammaM\mu)\hat{x}^t=x^t-\gammaM \Koper^\top u^{t},\label{opt13}\\
		&(1+\gammaM\mu)x^\star= x^\star -\gammaM\Koper^\top u^\star.\label{opt23}
	\end{align}
	Using \myref{opt13} and \myref{opt23} we obtain
	\begin{eqnarray}
		X&=&\sqnorm{\hat{x}^{t}-x^\star} +\gammaM^2\sqnorm{\Koper^\top\left(\hat{u}^{t+1}-u^t \right)}-2\gammaM \left\langle 
		\hat{x}^{t}-x^\star,\Koper^\top\left(\hat{u}^{t+1}-u^t \right) \right\rangle  \notag\\
		&\leq &(1+\gammaM\mu) \sqnorm{\hat{x}^{t}-x^\star} +\gammaM^2\sqnorm{\Koper^\top\left(\hat{u}^{t+1}-u^t \right)}\notag\\
		&\quad&-2\gammaM  \left\langle 
		\hat{x}^{t}-x^\star,\Koper^\top\left(\hat{u}^{t+1}-u^\star \right) \right\rangle  +2\gammaM  \left\langle 
		\hat{x}^{t}-x^\star,\Koper^\top\left(u^{t}-u^\star \right) \right\rangle  \notag\\
		&\overset{(\ref{opt13})+(\ref{opt23})}{=} &  \left\langle  x^t-x^\star-\gammaM \Koper^\top\left(u^{t}-u^\star \right),\hat{x}^{t}-x^\star \right\rangle  \notag+\gammaM^2\sqnorm{\Koper^\top\left(\hat{u}^{t+1}-u^t \right)}\notag\\
		&\quad&-2\gammaM  \left\langle 
		\hat{x}^{t}-x^\star,\Koper^\top\left(\hat{u}^{t+1}-u^\star \right) \right\rangle  +  \left\langle 
		\hat{x}^{t}-x^\star,2\gammaM\Koper^\top\left(u^{t}-u^\star \right)\right\rangle  \notag\\
		&= &\left\langle  x^t-x^\star+\gammaM \Koper^\top\left(u^{t}-u^\star \right),\hat{x}^{t}-x^\star \right\rangle  +\gammaM^2\sqnorm{\Koper^\top\left(\hat{u}^{t+1}-u^t \right)}\notag\\
		&\quad&-2\gammaM  \left\langle 
		\hat{x}^{t}-x^\star,\Koper^\top\left(\hat{u}^{t+1}-u^\star \right) \right\rangle  \notag\\
		&\overset{(\ref{opt13})+(\ref{opt23})}{=}&\frac{1}{1+\gammaM\mu} \left\langle  x^t-x^\star+\gammaM \Koper^\top\left(u^{t}-u^\star \right),x^t-x^\star-\gammaM \Koper^\top\left(u^{t}-u^\star \right) \right\rangle  \notag\\
		&\quad&+\gammaM^2\sqnorm{\Koper^\top\left(\hat{u}^{t+1}-u^t \right)}-2\gammaM  \left\langle 
		\hat{x}^{t}-x^\star,\Koper^\top\left(\hat{u}^{t+1}-u^\star \right) \right\rangle  \notag\\
		&= &\frac{1}{1+\gammaM\mu}\sqnorm{x^t-x^\star}-\frac{\gammaM^2}{1+\gammaM\mu}\sqnorm{\Koper^\top\left(u^{t}-u^\star \right)}\notag\\
		&\quad&+\gammaM^2\sqnorm{\Koper^\top\left(\hat{u}^{t+1}-u^t \right)}-2\gammaM  \left\langle 
		\hat{x}^{t}-x^\star,\Koper^\top\left(\hat{u}^{t+1}-u^\star \right) \right\rangle  . \label{long31} 
	\end{eqnarray}
	Combining \myref{xbound31} and \myref{long31}
	\begin{eqnarray}
		\Exp{\sqnorm{x^{t+1}-x^\star}\;|\;\mathcal{F}_t}&\leq&  \frac{1}{1+\gammaM\mu}\sqnorm{x^t-x^\star}-\frac{\gammaM^2}{1+\gammaM\mu}\sqnorm{\Koper^\top\left(u^{t}-u^\star \right)}\notag\\
		&\quad&+\gammaM^2(1-\zeta)\sqnorm{\Koper^\top\left(\hat{u}^{t+1}-u^t \right)}-2\gammaM  \left\langle 
		\hat{x}^{t}-x^\star,\Koper^\top\left(\hat{u}^{t+1}-u^\star \right) \right\rangle \notag\\
		&\quad&+\gammaM^2\oma\sqnorm{\hat{u}^{t+1}-u^t}.\label{xbnd31}
	\end{eqnarray}
	On the other hand using the variance decomposition and conic variance of $\Rop^t$ 
	\begin{eqnarray}
		\Exp{\sqnorm{u^{t+1}-u^\star}\;|\;\mathcal{F}_t}&\overset{(\ref{vardec})+(\ref{cvar})}{\leq}& \sqnorm{u^{t}-u^\star+\frac{1}{1+\omega}\left(\hat{u}^{t+1} -u^t\right)}
		+\frac{\omega}{(1+\omega)^2}\sqnorm{\hat{u}^{t+1} -u^t }\notag\\
		&= &\sqnorm{\frac{\omega}{1+\omega}(u^{t}-u^\star)+\frac{1}{1+\omega}\left(\hat{u}^{t+1} -u^\star\right)}
		+\frac{\omega}{(1+\omega)^2}\sqnorm{\hat{u}^{t+1} - u^\star - (u^t - u^\star) }\notag\\
		&=&\frac{\omega^2}{(1+\omega)^2}\sqnorm{u^{t}-u^\star}+\frac{1}{(1+\omega)^2}\sqnorm{\hat{u}^{t+1}-u^\star}\notag\\
		&\quad&+\frac{2\omega}{(1+\omega)^2} \left\langle  u^{t}-u^\star,
		\hat{u}^{t+1}-u^\star \right\rangle +\frac{\omega}{(1+\omega)^2}\sqnorm{\hat{u}^{t+1} -u^\star }\notag\\
		&\quad&+\frac{\omega}{(1+\omega)^2}\sqnorm{u^{t} -u^\star }-\frac{2\omega}{(1+\omega)^2} \left\langle  u^{t}-u^\star,
		\hat{u}^{t+1}-u^\star \right\rangle \notag\\
		&=&\frac{1}{1+\omega}\sqnorm{\hat{u}^{t+1} -u^\star }+\frac{\omega}{1+\omega}\sqnorm{u^{t} -u^\star }. \label{ubound}
	\end{eqnarray}
	Let $(s_\iclient^{t+1})_{\iclient=1}^\Mx\in \partial F^*(\hat{u}^{t+1})$ be such that $\hat{u}_\iclient^{t+1}=u_\iclient^t + \tauM \hat{x}^{t}-\tauM s_\iclient^{t+1}$;  $s^{t+1}$ exists and is unique. We also define $s_\iclient^\star\eqdef  x^\star$; we have $s^\star\in \partial F^*(u^\star)$. 
	Therefore,
	\begin{eqnarray}
		\sqnorm{\hat{u}^{t+1}-u^\star}&=&\sqnorm{(u^t-u^\star)+(\hat{u}^{t+1}-u^t)}\notag\\
		&=&\sqnorm{u^t-u^\star}+\sqnorm{\hat{u}^{t+1}-u^t}+2 \left\langle  u^t-u^\star,\hat{u}^{t+1}-u^t \right\rangle \notag \\
		&=&\sqnorm{u^t-u^\star}+2  \left\langle  \hat{u}^{t+1}-u^\star,\hat{u}^{t+1}-u^t \right\rangle  - \sqnorm{\hat{u}^{t+1}-u^t} \notag\\
		&=&\sqnorm{u^t-u^\star} - \sqnorm{\hat{u}^{t+1}-u^t}+2\tauM  \left\langle  \Koper^\top\left(\hat{u}^{t+1}-u^\star \right), \hat{x}^{t}-x^\star \right\rangle \notag \\
		&\quad& -2\tauM  \left\langle  \hat{u}^{t+1}-u^\star,s^{t+1}-s^\star \right\rangle .\label{ubound31}
	\end{eqnarray}
	Combining \myref{ubound}, \myref{ubound31} and \myref{xbnd31} gives
	\begin{eqnarray*}
		\frac{1}{\gammaM}\Exp{\sqnorm{x^{t+1}-x^\star}\;|\;\mathcal{F}_t}&+&\frac{1+\omega}{\tauM}\Exp{\sqnorm{u^{t+1}-u^\star}\;|\;\mathcal{F}_t}\\
		&\leq&  \frac{1}{\gammaM(1+\gammaM\mu)}\sqnorm{x^t-x^\star}-\frac{\gammaM}{1+\gammaM\mu}\sqnorm{\Koper^\top\left(u^{t}-u^\star \right)}\\
		&\quad&+\gammaM(1-\zeta)\sqnorm{\Koper^\top\left(\hat{u}^{t+1}-u^t \right)}-2  \left\langle 
		\hat{x}^{t}-x^\star,\Koper^\top\left(\hat{u}^{t+1}-u^\star \right) \right\rangle \\
		&\quad&+\gammaM\oma\sqnorm{\hat{u}^{t+1}-u^t}+ \frac{1}{\tauM}\sqnorm{u^t-u^\star} - \frac{1}{\tauM}\sqnorm{\hat{u}^{t+1}-u^t}\\
		&\quad&+2  \left\langle  \Koper^\top\left(\hat{u}^{t+1}-u^\star \right), \hat{x}^{t}-x^\star \right\rangle  -2  \left\langle  \hat{u}^{t+1}-u^\star,s^{t+1}-s^\star \right\rangle  \\
		&\quad&+\frac{\omega}{\tauM}\sqnorm{u^{t} -u^\star }\\
		&\leq& \frac{1}{\gammaM(1+\gammaM\mu)}\sqnorm{x^t-x^\star}-\frac{\gammaM}{1+\gammaM\mu}\sqnorm{\Koper^\top\left(u^{t}-u^\star \right)}\\
		&\quad&+\frac{1+\omega}{\tauM}\sqnorm{u^{t} -u^\star }+ \left(\gammaM \left((1-\zeta)\Mx+\oma\right) - \frac{1}{\tauM}\right) \sqnorm{\hat{u}^{t+1}-u^t} \\
		&\quad&-2  \left\langle  \hat{u}^{t+1}-u^\star,s^{t+1}-s^\star \right\rangle \\
		&\leq& \frac{1}{\gammaM(1+\gammaM\mu)}\sqnorm{x^t-x^\star}-\frac{\gammaM}{1+\gammaM\mu}\sqnorm{\Koper^\top\left(u^{t}-u^\star \right)}\\
		&\quad&+\frac{1+\omega}{\tauM}\sqnorm{u^{t} -u^\star }-2  \left\langle  \hat{u}^{t+1}-u^\star,s^{t+1}-s^\star \right\rangle .
	\end{eqnarray*}
	By $\frac{1}{L_{F}}$-strong monotonicity of $\partial F^*$, $ \left\langle  \hat{u}^{t+1}-u^\star,s^{t+1}-s^\star \right\rangle  \geq \frac{1}{L_{F}} \sqnorm{\hat{u}^{t+1}-u^\star}$, and using \myref{ubound},
	\begin{equation*}
		\left\langle  \hat{u}^{t+1}-u^\star,s^{t+1}-s^\star \right\rangle  \geq \frac{1}{L_{F}} \left((1+\omega)\Exp{\sqnorm{u^{t+1}-u^\star}\;|\;\mathcal{F}_t}-\omega \sqnorm{u^{t} -u^\star }\right).
	\end{equation*}
	Hence, 
	\begin{eqnarray}
		\frac{1}{\gammaM}\Exp{\sqnorm{x^{t+1}-x^\star}\;|\;\mathcal{F}_t}&+&(1+\omega)\left(\frac{1}{\tauM}+2\frac{1}{L_{F}}\right)\Exp{\sqnorm{u^{t+1}-u^\star}\;|\;\mathcal{F}_t}\notag\\
		%\Exp{\Psi^{t+1}}&
		& \leq& \frac{1}{\gammaM(1+\gammaM\mu)}\sqnorm{x^t-x^\star}+\left(\frac{1+\omega}{\tauM}+2\omega \frac{1}{L_{F}} \right)
		\sqnorm{u^{t} -u^\star }\notag\\
		&\quad &-\frac{\gammaM}{1+\gammaM\mu}\sqnorm{\Koper^\top\left(u^{t}-u^\star \right)}.\label{kkkk}
	\end{eqnarray}
	Ignoring the last term in \myref{kkkk}, we obtain
	\begin{equation}
		\Exp{\Psi^{t+1}}\leq \max\left(\frac{1}{1+\gammaM\mu},1-\frac{2\tauM}{(1+\omega)(L_{F}+2\tauM)}\right)\Exp{\Psi^{t}}.\label{finresult}
	\end{equation}
\end{proof}

\subsection{Proof of Corollary~\ref{cor:5GCS-infty}}
\begin{corollary}
	Choose any $0<\varepsilon<1$. If
	we choose $\gammaM=\sqrt{\frac{2\Cx}{L_F\mu \Mx^2}}$ and $\tauM=\sqrt{\frac{L_F\mu}{2\Cx}}$, then  in order to guarantee $\Exp{\Psi^{\Tx}}\leq \varepsilon \Psi^0$, it suffices to take
	\[
	%\squeeze
		\Tx \geq	\left(\frac{\Mx}{\Cx}+\sqrt{\frac{\Mx}{ \Cx} \frac{L-\mu}{2\mu}}\right)\log \frac{1}{\varepsilon} =\tilde{\cO}\left(\frac{\Mx}{\Cx}+\sqrt{\frac{\Mx}{\Cx}\frac{L}{\mu}}\right)
	\]
	communication rounds.
\end{corollary}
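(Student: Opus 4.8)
The plan is to apply Theorem~\ref{thm:5GCS-infty} verbatim and then convert the geometric decay estimate $\Exp{\Psi^{\Tx}}\le(1-\rho)^{\Tx}\Psi^0$ into the claimed iteration complexity by the elementary bound $\log(1-\rho)\le-\rho$; all that remains is to check that the proposed stepsizes are admissible and to compute $\rho^{-1}$ explicitly. There is no deep obstacle here — the content of the corollary is that the two stepsizes are chosen precisely so as to balance the two competing terms in the expression for $\rho$.

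First I would verify admissibility. With $\gammaM=\sqrt{\tfrac{2\Cx}{L_F\mu \Mx^2}}$ and $\tauM=\sqrt{\tfrac{L_F\mu}{2\Cx}}$ one has $\gammaM\tauM=\sqrt{\tfrac{1}{\Mx^2}}=\tfrac1{\Mx}$, so the hypothesis $\gammaM\tauM\le\tfrac1{\Mx}$ of Theorem~\ref{thm:5GCS-infty} is met (with equality, which is exactly what one expects from an optimally tuned pair of stepsizes). Hence the theorem applies and $\Exp{\Psi^{\Tx}}\le(1-\rho)^{\Tx}\Psi^0$ with $\rho=\min\!\left(\tfrac{\gammaM\mu}{1+\gammaM\mu},\tfrac{\Cx}{\Mx}\tfrac{2\tauM}{L_F+2\tauM}\right)$.

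Next, since $\log(1-\rho)\le-\rho$, the inequality $(1-\rho)^{\Tx}\le\varepsilon$ — and therefore $\Exp{\Psi^{\Tx}}\le\varepsilon\Psi^0$ — is guaranteed as soon as $\Tx\ge\rho^{-1}\log\tfrac1\varepsilon$. So it suffices to bound $\rho^{-1}=\max\!\left(1+\tfrac1{\gammaM\mu},\ \tfrac{\Mx}{\Cx}\bigl(1+\tfrac{L_F}{2\tauM}\bigr)\right)$. Plugging in the stepsizes and using $L_F=\tfrac{L-\mu}{\Mx}$ gives $\tfrac1{\gammaM\mu}=\Mx\sqrt{\tfrac{L_F}{2\Cx\mu}}=\sqrt{\tfrac{\Mx}{\Cx}\tfrac{L-\mu}{2\mu}}$ and $\tfrac{L_F}{2\tauM}=\sqrt{\tfrac{\Cx L_F}{2\mu}}$, so that $\tfrac{\Mx}{\Cx}\bigl(1+\tfrac{L_F}{2\tauM}\bigr)=\tfrac{\Mx}{\Cx}+\sqrt{\tfrac{\Mx}{\Cx}\tfrac{L-\mu}{2\mu}}$. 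Since $\tfrac{\Mx}{\Cx}\ge1$, this second quantity dominates the first, whence $\rho^{-1}=\tfrac{\Mx}{\Cx}+\sqrt{\tfrac{\Mx}{\Cx}\tfrac{L-\mu}{2\mu}}$; taking $\Tx$ at least this many times $\log\tfrac1\varepsilon$ proves the claim, and the $\tilde{\cO}\!\left(\tfrac{\Mx}{\Cx}+\sqrt{\tfrac{\Mx}{\Cx}\tfrac{L}{\mu}}\right)$ form is immediate from $L-\mu\le L$.

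The only point requiring care is the arithmetic of the $\min/\max$: one must check both arguments of $\rho^{-1}$ and see that the dual term is the binding one. In fact the heuristic behind the stepsize choice is to equate the primal contraction $\tfrac{\gammaM\mu}{1+\gammaM\mu}$ with the dual contraction $\tfrac{\Cx}{\Mx}\tfrac{2\tauM}{L_F+2\tauM}$ up to lower-order terms; imposing this balance is what pins down $\gammaM\tauM=\tfrac1{\Mx}$ and the specific values above, after which the computation collapses to the two lines displayed. No analytic difficulty arises beyond this bookkeeping.
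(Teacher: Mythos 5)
Your proof is correct and follows essentially the same route as the paper: both verify $\gamma\tau=\tfrac1{\Mx}$, plug the stepsizes into the two arguments of the rate $\rho$ from Theorem~\ref{thm:5GCS-infty}, and observe that the dual term dominates; your presentation through $\rho^{-1}$ and the explicit use of $\log(1-\rho)\le-\rho$ is marginally cleaner than the paper's, which works with the contraction factor $1-\rho$ directly, but the content is identical.
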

\begin{proof}
	Firstly, note that choosing $\gammaM=\sqrt{\frac{2\Cx}{L_{F}\mu \Mx^2}}$ and $\tauM=\sqrt{\frac{L_{F}\mu}{2\Cx}}$ we satisfy $\gammaM\tauM=\frac{1}{\Mx}$, than that we get the contraction constant from the proof to be equal to:
	\begin{eqnarray} \max\left\{1-\frac{\sqrt{\frac{2\Cx\mu}{L_{F}\Mx^2}}}{1+\sqrt{\frac{2\Cx\mu}{L_{F}\Mx^2}}},1-\frac{\sqrt{\frac{2L_{h}\mu}{\Cx}}}{\frac{\Mx}{\Cx}\left(L_{F}+\sqrt{\frac{2L_{F}\mu}{\Cx}}\right)}\right\}\notag & = &
		\max\left\{1-\frac{\sqrt{2\Cx\mu}}{\Mx\sqrt{L_{F}}+\sqrt{2\Cx\mu}},1-\frac{\sqrt{2\Cx\mu}}{\Mx\sqrt{L_{F}}+\sqrt{\frac{2\mu \Mx^2}{\Cx}}}\right\}\notag\\
		& =&1-\frac{\sqrt{2\Cx\mu}}{\Mx\sqrt{L_{F}}+\sqrt{\frac{2\mu \Mx^2}{\Cx}}}\notag.
	\end{eqnarray}
	This  gives a rate of 
	\begin{align*}
		T = \mathcal{O}\left(\frac{\Mx\sqrt{L_{F}}+\sqrt{\frac{2\mu \Mx^2}{\Cx}}}{\sqrt{2\Cx\mu}}\log\frac{1}{\varepsilon}\right)=	\mathcal{O}\left(\left(\frac{\Mx}{\Cx}+\sqrt{\frac{(L-\mu)\Mx}{2\mu \Cx}}\right)\log \frac{1}{\varepsilon} \right) .
	\end{align*}
\end{proof}

\clearpage

\section{Analysis of 5GCS}\label{proof35}
\begin{theorem} Consider Algorithm~\ref{alg:5GCS} (\algname{5GCS}) with the LT solver being \algname{GD} run for $$\Kx \geq \left(\frac{3}{4}\sqrt{\frac{\Cx}{\Mx}\frac{L}{\mu}}+2\right)\log\left(4\frac{L}{\mu}\right)$$ iterations.
	Let $0<\gammaM\leq \frac{3}{16}\sqrt{\frac{\Cx}{L\mu \Mx}}$ and $\tauM=\frac{1}{2\gammaM \Mx}$. Then for the Lyapunov function
	\begin{equation*}
%		\squeeze 	
		\Psi^{\kstep}\eqdef \frac{1}{\gammaM}\sqnorm{x^{\kstep}-x^\star}+\frac{\Mx}{\Cx}\left(\frac{1}{\tauM}+\frac{1}{L_F}\right)\sqnorm{u^{\kstep}-u^\star},
	\end{equation*}
	the iterates of  the method satisfy
	$$		\Exp{\Psi^{\Tx}}\leq (1-\rho)^\Tx \Psi^0,
	$$	where 
	$
	\rho\eqdef  \max\left\{\frac{\gammaM\mu}{1+\gammaM\mu},\frac{\Cx}{\Mx}\frac{\tauM}{(L_F+\tauM)}\right\}<1.
	$

\end{theorem}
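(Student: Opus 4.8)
The plan is to obtain this theorem as a corollary of Theorem~\ref{thm:INEXACTPPanyM}: it suffices to check that the stepsize choice $\tauM=\frac{1}{2\gammaM\Mx}$ with $0<\gammaM\le\frac{3}{16}\sqrt{\frac{\Cx}{L\mu\Mx}}$ meets the hypotheses there, and that running \algname{GD} for the prescribed number of steps on each $\localfuni$ makes Assumption~\ref{ass:GTPS} hold. The stepsize check is routine algebra. Since $\gammaM\tauM=\frac{1}{2\Mx}$, one has $\frac{1}{\tauM\Mx}=2\gammaM$ and $\frac{4\mu}{3\Mx\tauM}=\frac{8\gammaM\mu}{3}$, so the hypothesis $\gammaM\le\frac{1}{\tauM\Mx}\bigl(1-\frac{4\mu}{3\Mx\tauM}\bigr)$ of Theorem~\ref{thm:INEXACTPPanyM} becomes $\gammaM\le 2\gammaM-\frac{16\mu}{3}\gammaM^2$, i.e. $\gammaM\le\frac{3}{16\mu}$, which is implied by the assumed bound on $\gammaM$ because $\Cx\mu\le\Mx L$. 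The same bound gives $\tauM=\frac{1}{2\gammaM\Mx}\ge\frac{8}{3}\sqrt{\frac{L\mu}{\Mx\Cx}}$, so the lower bound on $\tauM$ required in Assumption~\ref{ass:GTPS} also holds.

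The substance is verifying Assumption~\ref{ass:GTPS}. The key structural fact is that $\localfuni$ from \myref{localprob} is $\tauM$-strongly convex and $(L_F+\tauM)$-smooth, because $F_\iclient$ is convex and $L_F$-smooth. Hence \algname{GD} with stepsize $1/(L_F+\tauM)$, started from $y_\iclient^{0,\kstep}=\hat{x}^\kstep$, contracts the squared distance to the unique minimizer $\localsolmk$ by a factor at most $1-1/\kappa_\psi$ per step, where $\kappa_\psi\eqdef 1+\frac{L_F}{\tauM}$; consequently $\sqnorm{\lastlocittermk-\localsolmk}\le(1-1/\kappa_\psi)^{\Kx}\sqnorm{\hat{x}^\kstep-\localsolmk}$, and by $(L_F+\tauM)$-smoothness together with $\nabla\localfuni(\localsolmk)=0$ also $\sqnorm{\nabla\localfuni(\lastlocittermk)}\le(L_F+\tauM)^2(1-1/\kappa_\psi)^{\Kx}\sqnorm{\hat{x}^\kstep-\localsolmk}$. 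Substituting $\tauM=\frac{1}{2\gammaM\Mx}$ gives $\frac{L_F}{\tauM}=2\gammaM(L-\mu)\le\frac{3}{8}s$ with $s\eqdef\sqrt{\frac{\Cx}{\Mx}\frac{L}{\mu}}$, so $\kappa_\psi\le 1+\frac{3}{8}s$. The number of steps is tuned precisely to the identity $\frac{3}{4}s+2=2\bigl(1+\frac{3}{8}s\bigr)$: it forces $\Kx\ge\bigl(\frac{3}{4}s+2\bigr)\log\bigl(4\frac{L}{\mu}\bigr)\ge 2\kappa_\psi\log\bigl(4\frac{L}{\mu}\bigr)$, whence $(1-1/\kappa_\psi)^{\Kx}\le e^{-\Kx/\kappa_\psi}\le e^{-2\log(4L/\mu)}=\frac{\mu^2}{16L^2}$.

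Feeding these two bounds (which hold termwise in $\iclient$) into the left-hand side of the inequality of Assumption~\ref{ass:GTPS} and summing over $\iclient$, the inequality reduces to the deterministic estimate $\Bigl(\frac{4\mu}{3\Mx}\bigl(\frac{L_F}{\tauM}\bigr)^2+L_F\bigl(1+\frac{L_F}{\tauM}\bigr)^2\Bigr)\frac{\mu^2}{16L^2}\le\frac{\mu}{6\Mx}$. Using $\frac{L_F}{\tauM}\le\frac{3}{8}s\le\frac{3}{8}\sqrt{L/\mu}$, $L_F=\frac{L-\mu}{\Mx}\le\frac{L}{\Mx}$, and $s^2\le L/\mu$, dividing through by $\frac{L}{\Mx}$ turns this into a polynomial inequality in $\sqrt{L/\mu}$ that holds for every $L/\mu\ge 1$ with room to spare. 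This establishes Assumption~\ref{ass:GTPS}, and Theorem~\ref{thm:INEXACTPPanyM} then delivers $\Exp{\Psi^\Tx}\le(1-\rho)^\Tx\Psi^0$ with exactly the claimed $\rho$.

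The genuine obstacle lies not in this reduction but in the one-step descent estimate underlying Theorem~\ref{thm:INEXACTPPanyM}, which must be proved by essentially replaying the exact analysis of Theorem~\ref{thm:5GCS-infty} ($\Kx=+\infty$, i.e. \algname{Minibatch-Point-SAGA}) while carrying two families of perturbation terms: $\nabla F_\iclient(\lastlocittermk)-\nabla F_\iclient(\localsolmk)$, which perturbs the dual update away from the exact prox step, and $\nabla\localfuni(\lastlocittermk)$, which measures the failure of the exact optimality relation $\tauM(\hat{x}^\kstep-\localsolmk)=\nabla F_\iclient(\localsolmk)-u_\iclient^t$. Both perturbations are absorbed, via Young's inequality \myref{yi1}, into the strictly negative coefficient of $\sqnorm{\sum_{\iclient}(\nabla F_\iclient(\localsolmk)-u_\iclient^t)}$ that survives in the inexact descent estimate precisely because $\gammaM\tauM=\frac{1}{2\Mx}$ lies strictly below the threshold $\frac{1}{\Mx}$ permitted in the exact case; the slack is exactly what the tolerances $\frac{4}{\tauM^2}\frac{\mu L_F^2}{3\Mx}$ and $\frac{L_F}{\tauM^2}$ in Assumption~\ref{ass:GTPS} are calibrated against. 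Once that estimate is secured, the remaining work is the constant-chasing above, arranged so that $\kappa_\psi\le 1+\frac{3}{8}s$, $2\kappa_\psi\log(4L/\mu)\le\Kx$, and the residual-coefficient inequality all hold simultaneously.
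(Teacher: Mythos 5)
Your proposal matches the paper's proof in structure and in all the key quantities: the paper likewise establishes a one-step contraction that holds whenever the local iterates satisfy the inequality of Assumption~\ref{ass:GTPS} (this is exactly the appendix argument for Theorem~\ref{thm:INEXACTPPanyM}, which is repeated verbatim for Theorem~\ref{thm:5GCS}), and then separately shows, via the linear rate of \algname{GD} on the $\tauM$-strongly-convex, $(L_F+\tauM)$-smooth function $\localfuni$ and the bounds $\nicefrac{(L_F+\tauM)}{\tauM}\le 1+\frac{3}{8}\sqrt{\nicefrac{\Cx L}{\Mx\mu}}$ and $\nicefrac{1}{\betaM}\le(\nicefrac{4L}{\mu})^2$, that $\Kx\ge(\frac{3}{4}\sqrt{\nicefrac{\Cx L}{\Mx\mu}}+2)\log(\nicefrac{4L}{\mu})$ steps suffice. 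The only slight looseness is in your final paragraph's account of \emph{which} negative term absorbs the perturbations (in the paper it is the coefficient of $\sqnorm{\bar u^{t+1}-u^t}$ together with $-\frac{\mu}{\Mx}\sqnorm{\Koper\hat x^t-\Koper x^\star}$, not $\sqnorm{\sum_\iclient(\nabla F_\iclient(\localsolmk)-u_\iclient^t)}$), but the mechanism and the role of the slack $\gammaM\tauM=\frac{1}{2\Mx}<\frac{1}{\Mx}$ are correctly identified.
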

\begin{proof}
	Noting that updates for $u^{t+1}$ and $x^{t+1}$ can be written as
	\begin{eqnarray}
		&u^{t+1}\eqdef u^t + \frac{1}{1+\omega} \Rop^t\left(\bar{u}^{t+1}-u^t\right),&\\
		&x^{t+1}=\hat{x}^t-\gammaM\left(\omega+1\right)\Koper^\top\left(u^{t+1}-u^t\right)&\label{xupdate35}
	\end{eqnarray}
	where $\Rop^t$ is the client sampling operator, $\omega=\frac{\Mx}{\Cx}-1$ and  $\bar{u}^{t+1} = \nabla F(\lastlocitterk)$. 
	Then using variance decomposition and Proposition 1 from \citep{condat2021murana}, we obtain
	\begin{eqnarray}
		\label{eq:starting_eq3.5}
		\Exp{\sqnorm{x^{t+1}-x^\star}\;|\;\mathcal{F}_t}&\overset{(\ref{vardec})}{=}&\sqnorm{\Exp{x^{t+1}\;|\;\mathcal{F}_t}-x^\star}+\Exp{\sqnorm{x^{t+1}-\Exp{x^{t+1}\;|\;\mathcal{F}_t}}\;|\;\mathcal{F}_t}\notag\\
		&\overset{(\ref{xupdate35})+(\ref{PP})}{=}& \underbrace{\sqnorm{\hat{x}^{t}-x^\star-\gammaM \Koper^\top(\bar{u}^{t+1}-u^t)}}_{X}+\gammaM^2\oma\sqnorm{\bar{u}^{t+1}-u^t}\notag\\
		&\quad &- \gammaM^2\zeta\sqnorm{\Koper^\top(\bar{u}^{t+1}-u^t)},
	\end{eqnarray}
	where
	\begin{equation*}
		\oma=\frac{\Mx(\Mx-\Cx)}{\Cx(\Mx-1)},\quad \zeta=\frac{\Mx-\Cx}{\Cx(\Mx-1)}.
	\end{equation*}
	Moreover, using \myref{fooc} and the definition of $\hat{x}^t$, we have
	\begin{align}
		&(1+\gammaM\mu)\hat{x}^t=x^t-\gammaM \Koper^\top u^{t},\label{opt1}\\
		&(1+\gammaM\mu)x^\star= x^\star -\gammaM \Koper^\top u^\star.\label{opt2}
	\end{align}
	Using \myref{opt1} and \myref{opt2} we obtain
	\begin{eqnarray}
		\label{eq:long3.5}
		X &=&	\sqnorm{\hat{x}^{t}-x^\star-\gammaM \Koper^\top(\bar{u}^{t+1}-u^t)}\notag\\
		&=&\sqnorm{\hat{x}^{t}-x^\star} +\gammaM^2\sqnorm{\Koper^\top (\bar{u}^{t+1}-u^t)}-2\gammaM \left\langle 
		\hat{x}^{t}-x^\star,\Koper^\top(\bar{u}^{t+1}-u^t) \right\rangle  \notag\\
		&=& (1+\gammaM\mu) \sqnorm{\hat{x}^{t}-x^\star} +\gammaM^2\sqnorm{\Koper^\top(\bar{u}^{t+1}-u^t)}\notag\\
		&\quad&-2\gammaM  \left\langle 
		\hat{x}^{t}-x^\star,\Koper^\top (\bar{u}^{t+1}-u^\star) \right\rangle  +2\gammaM  \left\langle 
		\hat{x}^{t}-x^\star,\Koper^\top (u^{t}-u^\star) \right\rangle  -\gammaM\mu\sqnorm{\hat{x}^{t}-x^\star} \notag\\
		&\overset{(\ref{opt1})+(\ref{opt2})}{=}&   \left\langle  x^t-x^\star-\gammaM \Koper^\top (u^t-u^\star),\hat{x}^{t}-x^\star \right\rangle +\gammaM^2\sqnorm{\Koper^\top (\bar{u}^{t+1}-u^t)}\notag\\
		&\quad&-2\gammaM  \left\langle 
		\hat{x}^{t}-x^\star,\Koper^\top (\bar{u}^{t+1}-u^\star) \right\rangle  +  \left\langle 
		\hat{x}^{t}-x^\star,2\gammaM \Koper^\top (u^{t}-u^\star) \right\rangle  -\gammaM\mu\sqnorm{\hat{x}^{t}-x^\star} \notag.
			\end{eqnarray}
This leads to	
			\begin{eqnarray}
	X	&= &\left\langle  x^t-x^\star+\gammaM \Koper^\top (u^t-u^\star),\hat{x}^{t}-x^\star \right\rangle  \notag\\
		&\quad&+\gammaM^2\sqnorm{\Koper^\top (\bar{u}^{t+1}-u^t)}-2\gammaM  \left\langle 
		\hat{x}^{t}-x^\star, \Koper^\top(\bar{u}^{t+1}-u^\star) \right\rangle -\gammaM\mu\sqnorm{\hat{x}^{t}-x^\star}\notag \\
		&\overset{(\ref{opt1})+(\ref{opt2})}{=}&\frac{1}{1+\gammaM\mu} \left\langle  x^t-x^\star+\gammaM \Koper^\top (u^t-u^\star),x^t-x^\star-\gammaM \Koper^\top (u^t-u^\star) \right\rangle  \notag\\
		&\quad&+\gammaM^2\sqnorm{\Koper^\top (\bar{u}^{t+1}-u^t)}-2\gammaM  \left\langle 
		\hat{x}^{t}-x^\star, \Koper^\top (\bar{u}^{t+1}-u^\star) \right\rangle -\gammaM\mu\sqnorm{\hat{x}^{t}-x^\star} \notag\\
		&=& \frac{1}{1+\gammaM\mu}\sqnorm{x^t-x^\star}-\frac{\gammaM^2}{1+\gammaM\mu}\sqnorm{\Koper^\top (u^t-u^\star)}\notag\\
		&\quad&+\gammaM^2\sqnorm{\Koper^\top (\bar{u}^{t+1}-u^t)}-2\gammaM  \left\langle 
		\hat{x}^{t}-x^\star,\Koper^\top (\bar{u}^{t+1}-u^\star) \right\rangle -\gammaM\mu\sqnorm{\hat{x}^{t}-x^\star}   . 
	\end{eqnarray}
	Combining \myref{eq:starting_eq3.5} and \myref{eq:long3.5}, we get
	\begin{eqnarray*}
		\Exp{\sqnorm{x^{t+1}-x^\star}\;|\;\mathcal{F}_t}&\leq&  \frac{1}{1+\gammaM\mu}\sqnorm{x^t-x^\star}-\frac{\gammaM^2}{1+\gammaM\mu}\sqnorm{\Koper^\top (u^t-u^\star)}\\
		&\quad&+\gammaM^2(1-\zeta)\sqnorm{\Koper^\top (\bar{u}^{t+1}-u^t)}-2\gammaM  \left\langle 
		\hat{x}^{t}-x^\star,\Koper^\top (\bar{u}^{t+1}-u^\star) \right\rangle \\
		&\quad&+\gammaM^2\oma\sqnorm{\bar{u}^{t+1}-u^t}-\frac{\gammaM\mu}{\Mx}\sqnorm{\Koper\hat{x}^t-\Koper x^\star}.
	\end{eqnarray*}
	Note that we can have the update rule for $u$ as: 
	\begin{equation*}
		u^{t+1}\eqdef u^t + \frac{1}{1+\omega} \Rop^t\left(\bar{u}^{t+1}-u^t\right),
	\end{equation*}
	where $\Rop^t$ is client sampling operator with parameter  $\omega=\frac{\Mx}{\Cx}-1$. Using conic variance formula \myref{cvar} of $\Rop^t$, we obtain
	\begin{eqnarray}
		\Exp{\sqnorm{u^{t+1}-u^\star}\;|\;\mathcal{F}_t}&\overset{(\ref{vardec})+(\ref{cvar})}{\leq}& \sqnorm{u^{t}-u^\star+\frac{1}{1+\omega}\left(\bar{u}^{t+1} -u^t\right)}
		+\frac{\omega}{(1+\omega)^2}\sqnorm{\bar{u}^{t+1} -u^t }\notag\\
		&=&\frac{\omega^2}{(1+\omega)^2}\sqnorm{u^{t}-u^\star}+\frac{1}{(1+\omega)^2}\sqnorm{\bar{u}^{t+1}-u^\star}\notag\\
		&\quad&+\frac{2\omega}{(1+\omega)^2} \left\langle  u^{t}-u^\star,
		\bar{u}^{t+1}-u^\star \right\rangle +\frac{\omega}{(1+\omega)^2}\sqnorm{\bar{u}^{t+1} -u^\star }\notag\\
		&\quad&+\frac{\omega}{(1+\omega)^2}\sqnorm{u^{t} -u^\star }-\frac{2\omega}{(1+\omega)^2} \left\langle  u^{t}-u^\star,
		\bar{u}^{t+1}-u^\star \right\rangle \notag\\
		&=&\frac{1}{1+\omega}\sqnorm{\bar{u}^{t+1} -u^\star }+\frac{\omega}{1+\omega}\sqnorm{u^{t} -u^\star }.\label{expu}
	\end{eqnarray}
	Let us consider the first term in \myref{expu}:
	\begin{eqnarray*}
		\sqnorm{\bar{u}^{t+1}-u^\star}&=&\sqnorm{(u^t-u^\star)+(\bar{u}^{t+1}-u^t)}\\
		&=&\sqnorm{u^t-u^\star}+\sqnorm{\bar{u}^{t+1}-u^t}+2 \left\langle  u^t-u^\star,\bar{u}^{t+1}-u^t \right\rangle \\
		&=&\sqnorm{u^t-u^\star}+2  \left\langle  \bar{u}^{t+1}-u^\star,\bar{u}^{t+1}-u^t \right\rangle  - \sqnorm{\bar{u}^{t+1}-u^t}.
	\end{eqnarray*}
	Combining the terms together, we get
	\begin{equation*}
		\Exp{\sqnorm{u^{t+1}-u^\star}\;|\;\mathcal{F}_t}\leq \sqnorm{u^{t} -u^\star }+\frac{1}{1+\omega}\left(2  \left\langle  \bar{u}^{t+1}-u^\star,\bar{u}^{t+1}-u^t \right\rangle  - \sqnorm{\bar{u}^{t+1}-u^t}\right).
	\end{equation*}
	Finally, we obtain
	\begin{eqnarray*}
		\frac{1}{\gammaM}\Exp{\sqnorm{x^{t+1}-x^\star}\;|\;\mathcal{F}_t}&+&\frac{1+\omega}{\tauM}\Exp{\sqnorm{u^{t+1}-u^\star}\;|\;\mathcal{F}_t}\\
		&\leq&  \frac{1}{\gammaM(1+\gammaM\mu)}\sqnorm{x^t-x^\star}-\frac{\gammaM}{1+\gammaM\mu}\sqnorm{\Koper^\top (u^t-u^\star)}\\
		&\quad&+\gammaM(1-\zeta)\sqnorm{\Koper^\top (\bar{u}^{t+1}-u^t)}\\
		&\quad&+\gammaM\oma\sqnorm{\bar{u}^{t+1}-u^t}-\frac{\mu}{\Mx}\sqnorm{\Koper \hat{x}^t- \Koper x^\star}\\
		&\quad&+\frac{1+\omega}{\tauM}\sqnorm{u^{t} -u^\star }-2  \left\langle 
		\hat{x}^{t}-x^\star,\Koper^\top (\bar{u}^{t+1}-u^\star) \right\rangle \\ &\quad&+\frac{1}{\tauM}\left(2  \left\langle  \bar{u}^{t+1}-u^\star,\bar{u}^{t+1}-u^t \right\rangle  - \sqnorm{\bar{u}^{t+1}-u^t}\right).
	\end{eqnarray*}
	Ignoring $-\frac{\gammaM}{1+\gammaM\mu}\sqnorm{\Koper^\top (u^t-u^\star)}$ and noting that
	\begin{eqnarray*}
		-\left\langle \hat{x}^{t}-x^\star,\Koper^\top (\bar{u}^{t+1}-u^\star) \right\rangle &+&\frac{1}{\tauM}\left\langle  \bar{u}^{t+1}-u^\star,\bar{u}^{t+1}-u^t \right\rangle  \\
		&=&-\left\langle \lastlocitterk-\Koper x^\star,\bar{u}^{t+1}-u^\star \right\rangle +\frac{1}{\tauM}\left\langle  \nabla\localfun (\lastlocitterk),\bar{u}^{t+1}-u^\star \right\rangle\\
		&\overset{(\ref{yi1})+(\ref{strmono})}{\leq}& -\frac{1}{L_F}\sqnorm{\bar{u}^{t+1}-u^\star}+\frac{a}{2\tauM}\sqnorm{\nabla\localfun (\lastlocitterk)}+\frac{1}{2a\tauM}\sqnorm{\bar{u}^{t+1}-u^\star }\\
		&=& -\left(\frac{1}{L_F}-\frac{1}{2a\tauM}\right)\sqnorm{\bar{u}^{t+1}-u^\star}+\frac{a}{2\tauM}\sqnorm{\nabla\localfun (\lastlocitterk)}\\
		&\overset{(\ref{expu})}{\leq}& -\left(\frac{1}{L_F}-\frac{1}{2a\tauM}\right)\left((1+\omega)\Exp{\sqnorm{u^{t+1}-u^\star}\;|\;\mathcal{F}_t}-\omega\sqnorm{u^t-u^\star}\right)\\
		&\quad&+\frac{a}{2\tauM}\sqnorm{\nabla\localfun (\lastlocitterk)},
	\end{eqnarray*}
	we get
	\begin{eqnarray*}
		\frac{1}{\gammaM}\Exp{\sqnorm{x^{t+1}-x^\star}\;|\;\mathcal{F}_t}&+&\left(1+\omega\right)\left(\frac{1}{\tauM}+\frac{1}{L_F}\right)\Exp{\sqnorm{u^{t+1}-u^\star}\;|\;\mathcal{F}_t}\\
		&\leq&  \frac{1}{\gammaM(1+\gammaM\mu)}\sqnorm{x^t-x^\star}\\
		&\quad&+\left(1+\omega\right)\left(\frac{1}{\tauM}+\frac{\omega}{1+\omega}\frac{1}{L_F}\right)\sqnorm{u^{t} -u^\star } \\
		&\quad&+\left(\gammaM\left(1-\zeta\right)\Mx+\gammaM\oma-\frac{1}{\tauM}\right)\sqnorm{\bar{u}^{t+1}-u^t}\\
		&\quad& +\frac{L_F}{\tauM^2}\sqnorm{\nabla\localfun (\lastlocitterk)}-\frac{\mu}{\Mx}\sqnorm{\Koper\hat{x}^t-\Koper x^\star}.
	\end{eqnarray*}
	Where we made the choice $a=\frac{L_F}{\tau}$.
	Using Young's inequality we have
	\begin{equation*}
		-\frac{\mu}{3\Mx}\sqnorm{\Koper \hat{x}^t-\localsolk+\localsolk-\Koper x^\star}\overset{(\ref{yi3})}{\leq}\frac{\mu}{3\Mx}\sqnorm{\localsolk-\Koper x^\star}-\frac{\mu}{6\Mx}\sqnorm{\Koper\hat{x}^t-\localsolk}.	\end{equation*}
	Noting the fact that $\localsolk=\Koper\hat{x}^t-\frac{1}{\tauM}(\hat{u}^{t+1}-u^t)$, we have
	$$\frac{\mu}{3\Mx}\sqnorm{\localsolk-\Koper x^\star}\overset{(\ref{yi2})}{\leq} 2\frac{\mu}{3\Mx}\sqnorm{\Koper\hat{x}^t-\Koper x^\star}+\frac{2}{\tauM^2}\frac{\mu}{3\Mx}\sqnorm{\hat{u}^{t+1}-u^t}.$$
	Combining those inequalities, we get
	\begin{eqnarray*}
		\frac{1}{\gammaM}\Exp{\sqnorm{x^{t+1}-x^\star}\;|\;\mathcal{F}_t}&+&\left(1+\omega\right)\left(\frac{1}{\tauM}+\frac{1}{L_F}\right)\Exp{\sqnorm{u^{t+1}-u^\star}\;|\;\mathcal{F}_t}\\
		&\leq&  \frac{1}{\gammaM(1+\gammaM\mu)}\sqnorm{x^t-x^\star}\\
		&\quad&+\left(1+\omega\right)\left(\frac{1}{\tauM}+\frac{\omega}{1+\omega}\frac{1}{L_F}\right)\sqnorm{u^{t} -u^\star } \\
		&\quad&+\frac{2}{\tauM^2}\frac{\mu}{3\Mx}\sqnorm{\hat{u}^{t+1}-u^t}\\
		&\quad&-\left(\frac{1}{\tauM}-\left(\gammaM\left(1-\zeta\right)\Mx+\gammaM\oma\right)\right)\sqnorm{\bar{u}^{t+1}-u^t}\\
		&\quad&+\frac{L_F}{\tauM^2}\sqnorm{\nabla\localfun (\lastlocitterk)}-\frac{\mu}{6\Mx}\sqnorm{\Koper\hat{x}^t-\localsolk}.
	\end{eqnarray*}
	Assuming $\gammaM$ and $\tauM$ can be chosen so that $\frac{1}{\tauM}-(\gammaM(1-\zeta)\Mx+\gammaM\oma))\geq \frac{4}{\tauM^2}\frac{\mu}{3\Mx}$ we obtain
	\begin{eqnarray*}
		\frac{1}{\gammaM}\Exp{\sqnorm{x^{t+1}-x^\star}\;|\;\mathcal{F}_t}&+&\left(1+\omega\right)\left(\frac{1}{\tauM}+\frac{1}{L_F}\right)\Exp{\sqnorm{u^{t+1}-u^\star}\;|\;\mathcal{F}_t}\\
		&\leq&  \frac{1}{\gammaM(1+\gammaM\mu)}\sqnorm{x^t-x^\star}\\
		&\quad&+\left(1+\omega\right)\left(\frac{1}{\tauM}+\frac{\omega}{1+\omega}\frac{1}{L_F}\right)\sqnorm{u^{t} -u^\star } \\
		&\quad&+\frac{4}{\tauM^2}\frac{\mu L_F^2}{3\Mx}\sqnorm{\lastlocitterk-\localsolk}+\frac{L_F}{\tauM^2}\sqnorm{\nabla\localfun (\lastlocitterk)}\\
		&\quad&-\frac{\mu}{6\Mx}\sqnorm{\Koper\hat{x}^t-\localsolk}.
	\end{eqnarray*}
	Where the point $\lastlocitterk$ is assumed to satisfy 
	\begin{eqnarray*}
		\frac{4}{\tauM^2}\frac{\mu L_F^2}{3\Mx}\sqnorm{\lastlocitterk-\localsolk}+\frac{L_F}{\tauM^2}\sqnorm{\nabla\localfun (\lastlocitterk)}\leq\frac{\mu}{6\Mx}\sqnorm{\Koper\hat{x}^t-\localsolk}.
	\end{eqnarray*}
	Thus
	\begin{eqnarray*}
		\frac{1}{\gammaM}\Exp{\sqnorm{x^{t+1}-x^\star}\;|\;\mathcal{F}_t}&+&\left(1+\omega\right)\left(\frac{1}{\tauM}+\frac{1}{L_F}\right)\Exp{\sqnorm{u^{t+1}-u^\star}\;|\;\mathcal{F}_t}\\
		&\leq&  \frac{1}{\gammaM(1+\gammaM\mu)}\sqnorm{x^t-x^\star}\\
		&\quad&+\left(1+\omega\right)\left(\frac{1}{\tauM}+\frac{\omega}{1+\omega}\frac{1}{L_F}\right)\sqnorm{u^{t} -u^\star }.
	\end{eqnarray*}
	
	By taking the expectation on both sides we get
	\begin{equation*}
		\Exp{\Psi^{t+1}}\leq  \max\left\{\frac{1}{1+\gammaM\mu} ,\frac{ L_F+\frac{\Mx-\Cx}{\Cx}\tauM}{L_F+\tauM} \right\}\Exp{\Psi^{t}},
	\end{equation*}
	which finishes the proof. 
	Note that our standard choice of constants is $$\omega=\frac{\Mx}{\Cx}-1,\quad \oma=\frac{\Mx(\Mx-\Cx)}{\Cx(\Mx-1)},\quad \zeta=\frac{\Mx-\Cx}{\Cx(\Mx-1)}.$$ Using these parameters the requirement for stepsizes becomes:$$\frac{1}{\tauM}-\gammaM \Mx\geq \frac{4\mu}{3\Mx\tauM^2}.$$
	This inequality is satisfied, when $0<\gammaM\leq \frac{3}{16}\sqrt{\frac{\Cx}{L\mu \Mx}}$ and $\tauM=\frac{1}{2\Mx\gammaM}.$
\end{proof}
\subsection{Proof of Corollary \ref{cor:5GCS}}

\begin{corollary}
Choose any $0<\varepsilon<1$ and $\gammaM = \frac{3}{16}\sqrt{\frac{\Cx}{L\mu \Mx}}$. In order to guarantee $\Exp{\Psi^{\Tx}}\leq \varepsilon \Psi^0$, it suffices to take 
\begin{eqnarray}
%	\squeeze 	
	T &\geq	&
	% \squeeze 
	\max\left\{1+\frac{16}{3}\sqrt{\frac{\Mx}{\Cx}\frac{L}{\mu}},\frac{\Mx}{\Cx}+\frac{3}{8}\sqrt{\frac{\Mx}{\Cx}\frac{L}{\mu} }\right\}\log\frac{1}{\varepsilon} =
	\tilde{\cO}\left(\frac{\Mx}{\Cx}+\sqrt{\frac{\Mx}{\Cx}\frac{L}{\mu }}\right) \notag
\end{eqnarray}
communication rounds.
\end{corollary}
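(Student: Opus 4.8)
\textbf{Proof proposal for Corollary~\ref{cor:5GCS}.} This is a routine specialization of Theorem~\ref{thm:5GCS}: the task is only to substitute the prescribed stepsizes into the contraction factor $\rho$, lower-bound $\rho$ in closed form, and then apply the standard conversion of a linear rate into an iteration count. First I would take $\gammaM=\frac{3}{16}\sqrt{\frac{\Cx}{L\mu\Mx}}$ (the largest value Theorem~\ref{thm:5GCS} permits), which forces $\tauM=\frac{1}{2\gammaM\Mx}=\frac{8}{3}\sqrt{\frac{L\mu}{\Mx\Cx}}$, and I would record the elementary bound $L_F=\frac{1}{\Mx}(L-\mu)\le\frac{L}{\Mx}$.

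Next I would estimate the reciprocals of the two quantities in $\rho=\min\bigl\{\frac{\gammaM\mu}{1+\gammaM\mu},\ \frac{\Cx}{\Mx}\frac{\tauM}{L_F+\tauM}\bigr\}$. For the primal term, $\frac{1+\gammaM\mu}{\gammaM\mu}=1+\frac{1}{\gammaM\mu}=1+\frac{16}{3}\sqrt{\frac{\Mx}{\Cx}\frac{L}{\mu}}$. For the dual term, $\frac{\Mx}{\Cx}\cdot\frac{L_F+\tauM}{\tauM}=\frac{\Mx}{\Cx}\bigl(1+\frac{L_F}{\tauM}\bigr)\le\frac{\Mx}{\Cx}\bigl(1+\frac{3}{8}\sqrt{\frac{\Cx}{\Mx}\frac{L}{\mu}}\bigr)=\frac{\Mx}{\Cx}+\frac{3}{8}\sqrt{\frac{\Mx}{\Cx}\frac{L}{\mu}}$, where I used $\frac{L_F}{\tauM}\le\frac{L/\Mx}{\tauM}=\frac{3}{8}\sqrt{\frac{\Cx}{\Mx}\frac{L}{\mu}}$ together with $\frac{\Mx}{\Cx}\sqrt{\frac{\Cx}{\Mx}\frac{L}{\mu}}=\sqrt{\frac{\Mx}{\Cx}\frac{L}{\mu}}$. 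Hence $\frac{1}{\rho}$, being the maximum of these two reciprocals, is at most $\max\bigl\{1+\frac{16}{3}\sqrt{\frac{\Mx}{\Cx}\frac{L}{\mu}},\ \frac{\Mx}{\Cx}+\frac{3}{8}\sqrt{\frac{\Mx}{\Cx}\frac{L}{\mu}}\bigr\}$.

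Finally, Theorem~\ref{thm:5GCS} gives $\Exp{\Psi^{T}}\le(1-\rho)^{T}\Psi^0\le e^{-\rho T}\Psi^0$ (using $1-\rho\le e^{-\rho}$), so $\Exp{\Psi^{T}}\le\varepsilon\Psi^0$ holds whenever $\rho T\ge\log\frac{1}{\varepsilon}$, i.e.\ whenever $T\ge\frac{1}{\rho}\log\frac{1}{\varepsilon}$. Substituting the upper bound on $1/\rho$ gives precisely the stated sufficient condition, and the $\tilde{\cO}$ form follows from $\max\{a,b\}\le a+b$ and $1\le\frac{\Mx}{\Cx}$, so that the leading behaviour is $\frac{\Mx}{\Cx}+\sqrt{\frac{\Mx}{\Cx}\frac{L}{\mu}}$. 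There is no genuine obstacle here; the only care needed is the elementary algebra in simplifying $\tauM$, $L_F/\tauM$, and $1/(\gammaM\mu)$, and confirming that with $\gammaM$ taken at its upper bound the numerical constants $\tfrac{16}{3}$ and $\tfrac{3}{8}$ come out exactly as advertised.
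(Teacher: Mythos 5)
Your proposal is correct and follows essentially the same route as the paper's proof: substitute $\gammaM$ and $\tauM$ into the contraction factor from Theorem~\ref{thm:5GCS}, use $L_F\le\nicefrac{L}{\Mx}$ to simplify the dual term, and then translate the linear rate into an iteration count via $1-\rho\le e^{-\rho}$. The only cosmetic difference is that you work with the reciprocals $\nicefrac{1}{\rho}$ from the outset, which makes the algebra marginally more transparent, whereas the paper keeps the contraction factor in the form $\max\left\{\frac{1}{1+\gammaM\mu},\,1-\frac{\Cx}{\Mx}\frac{\tauM}{L_F+\tauM}\right\}$ before converting; the resulting constants $\frac{16}{3}$ and $\frac{3}{8}$ match exactly.
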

\begin{proof}
	Choosing the maximal $\gammaM=\frac{3}{16}\sqrt{\frac{\Cx}{L\mu \Mx}}$ and $a=\frac{L_F}{\tauM}$ we have 
	\begin{eqnarray*}
		\max\left\{\frac{1}{1+\gammaM\mu},\frac{\frac{1}{\tauM}+\frac{\Mx-\Cx}{\Mx}\frac{1}{L_F}}{\frac{1}{\tauM}+\frac{1}{L_F}}\right\}&=&\max\left\{\frac{1}{1+\frac{3}{16}\sqrt{\frac{\mu \Cx}{L \Mx}}},\frac{\frac{1}{\tauM}+\frac{\Mx-\Cx}{\Mx}\frac{1}{L_F}}{\frac{1}{\tauM}+\frac{1}{L_F}}\right\}\\
		&=&\max\left\{\frac{1}{1+\frac{3}{16}\sqrt{\frac{\mu \Cx}{L \Mx}}},1-\frac{\frac{8\Cx}{3\Mx L_F}\sqrt{\frac{L\mu}{\Mx \Cx}}}{1+\frac{8\Mx}{3\Mx L_F}\sqrt{\frac{L\mu}{\Mx \Cx}}}\right\}\\
		&\leq& \max\left\{\frac{1}{1+\frac{3}{16}\sqrt{\frac{\mu \Cx}{L \Mx}}},1-\frac{\frac{8}{3}\sqrt{\frac{\Cx\mu}{\Mx L}}}{1+\frac{8}{3}\sqrt{\frac{\Mx\mu}{L\Cx}}}\right\}.
	\end{eqnarray*}
	Thus Algorithm~\ref{alg:5GCS} finds $\varepsilon$-solution in:
	\begin{equation*}
		T\geq\mathcal{O}\left(\max\left\{1+\frac{16}{3}\sqrt{\frac{L\Mx}{\mu \Cx}},\frac{\Mx}{\Cx}+\frac{3}{8}\sqrt{\frac{L\Mx}{\mu \Cx}}\right\}\log\frac{1}{\varepsilon}\right)
	\end{equation*}
	communications.	
\end{proof}

\clearpage
\section{Analysis of 5GCS${}_0$}
\subsection{Proof of Theorem~\ref{thm:5GCS-0}}
\begin{algorithm*}
	\caption{\algn{5GCS} with 0 local \algname{GD} steps }
	\begin{algorithmic}[1]\label{alg:5GCS-0}
		\STATE  \textbf{input:} initial points $x^0\in\mathbb{R}^d$, $u_\iclient^0\in\mathbb{R}^d$ for all $\iclient=\{1,\dots,\Mx\}$; 
		\STATE stepsize $\gammaM>0$, $\tauM>0$
		\STATE $v^0\eqdef \sum_{\iclient=1}^\Mx u_\iclient^0$
		\FOR{$\kstep=0, 1, \ldots$}
		\STATE $\hat{x}^{\kstep} \eqdef \frac{1}{1+\gammaM\mu} \left(x^\kstep - \gammaM v^\kstep\right)$
		\STATE Pick $\set\subset \{1,\ldots,\Mx\}$ of size $\Cx$ uniform at random
		\FOR{$\iclient\in \set$}
		\STATE $u_\iclient^{\kstep+1}\eqdef \nabla F_\iclient(\hat{x}^\kstep)=\frac{1}{\Mx}\left(\nabla f_\iclient(\hat{x}^\kstep)-\mu\hat{x}^\kstep\right)$
		\ENDFOR
		\FOR{$\iclient\in\{1, \ldots,\Mx\}\backslash \set$}
		\STATE $u_{\iclient}^{\kstep+1}\eqdef u_{\iclient}^\kstep $
		\ENDFOR
		\STATE $v^{\kstep+1}\eqdef \sum_{\iclient=1}^\Mx u_\iclient^{\kstep+1}$
		\STATE  $x^{\kstep+1} \eqdef \hat{x}^{\kstep}- \gammaM \frac{\Mx}{\Cx}(v^{\kstep+1}-v^\kstep)$
		\ENDFOR
	\end{algorithmic}
\end{algorithm*}

\begin{theorem}
	Consider Algorithm~\ref{alg:5GCS} (\algname{5GCS}) with the LT solver being \algname{GD} run for $\Kx=0$ iterations (this is equivalent to Algorithm~\ref{alg:5GCS-0}; we shall also call the method \algname{5GCS${}_0$}).	Let $0<\gammaM\leq \frac{\Cx}{4L\Mx}$. Then for the Lyapunov function
	\begin{equation*}
%		\squeeze 	
		\Psi^{\kstep}\eqdef  \frac{\Cx}{\Mx^2\gammaM^2}\left(1-\sqrt{\frac{\gammaM \Mx L_F}{2}}\right)\sqnorm{x^{\kstep}-x^\star}+\sqnorm{u^{\kstep}-u^\star},
	\end{equation*}
	the iterates of the method satisfy
	$$		\Exp{\Psi^{\Tx}}\leq \left(1-\rho\right)^\Tx \Psi^0,
	$$	
	where 
	$
	\rho\eqdef  \min\left(\frac{\gammaM\mu}{1+\gammaM\mu},\frac{\Cx}{\Mx+2\gammaM  L_F\Mx^2}\right)<1.
	$

\end{theorem}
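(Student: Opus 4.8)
The plan is to reuse the Lyapunov-descent skeleton from the proofs of Theorems~\ref{thm:5GCS-infty} and~\ref{thm:5GCS}, specialized to the degenerate inner loop $\Kx=0$. When $\Kx=0$ the local optimization of $\psi_\iclient^\kstep$ in~\eqref{localprob} returns its initial point $\hat{x}^\kstep$ unchanged, so the sampled clients simply set $u_\iclient^{\kstep+1}=\nabla F_\iclient(\hat{x}^\kstep)$ (this is Algorithm~\ref{alg:5GCS-0}). Writing $\bar{u}^{\kstep+1}\eqdef\nabla F(\Koper\hat{x}^\kstep)$, there is no inexactness to track: Assumption~\ref{ass:GTPS} and the inner \algname{GD} estimate are not needed, and the role they play in the other proofs is played here by the \emph{exact} cocoercivity of $\nabla F$, which is convex and $L_F$-smooth as a separable operator on $\R^{\Mx d}$ because each $F_\iclient$ is. Note also that $\tauM$ never enters the method when $\Kx=0$; this is why the Lyapunov function carries only the weight $1$ on $\sqnorm{u^\kstep-u^\star}$, at the price of the unusual scaling $\tfrac{\Cx}{\Mx^2\gammaM^2}(1-\sqrt{\gammaM\Mx L_F/2})$ on the primal term, whose recovery is part of the work.

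First I would rewrite the iteration as $u^{\kstep+1}=u^\kstep+\tfrac{1}{1+\omega}\Rop^\kstep(\bar{u}^{\kstep+1}-u^\kstep)$ and $x^{\kstep+1}=\hat{x}^\kstep-\gammaM(1+\omega)\Koper^\top(u^{\kstep+1}-u^\kstep)$, with $\omega=\tfrac{\Mx}{\Cx}-1$ and $\Rop^\kstep$ the client sampling operator. Conditioning on $\mathcal{F}_\kstep$, I would apply the variance decomposition~\eqref{vardec} and the client-sampling identity~\eqref{PP}, then expand $\sqnorm{\hat{x}^\kstep-x^\star-\gammaM\Koper^\top(\bar{u}^{\kstep+1}-u^\kstep)}$ via the optimality identities $(1+\gammaM\mu)\hat{x}^\kstep=x^\kstep-\gammaM\Koper^\top u^\kstep$ and $(1+\gammaM\mu)x^\star=x^\star-\gammaM\Koper^\top u^\star$ (consequences of~\eqref{fooc}). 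Just as in the proof of Theorem~\ref{thm:5GCS}, this produces an $x$-recursion bounding $\Exp{\sqnorm{x^{\kstep+1}-x^\star}\mid\mathcal{F}_\kstep}$ by $\tfrac{1}{1+\gammaM\mu}\sqnorm{x^\kstep-x^\star}$, a negative term $-\tfrac{\gammaM^2}{1+\gammaM\mu}\sqnorm{\Koper^\top(u^\kstep-u^\star)}$, the positive variance terms $\gammaM^2(1-\zeta)\sqnorm{\Koper^\top(\bar{u}^{\kstep+1}-u^\kstep)}+\gammaM^2\oma\sqnorm{\bar{u}^{\kstep+1}-u^\kstep}$, the cross term $-2\gammaM\langle\Koper\hat{x}^\kstep-\Koper x^\star,\bar{u}^{\kstep+1}-u^\star\rangle$, and (if convenient) an extra negative $-\gammaM\mu\sqnorm{\hat{x}^\kstep-x^\star}$. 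For the dual, $\Exp{\sqnorm{u^{\kstep+1}-u^\star}\mid\mathcal{F}_\kstep}=\tfrac{\Cx}{\Mx}\sqnorm{\bar{u}^{\kstep+1}-u^\star}+(1-\tfrac{\Cx}{\Mx})\sqnorm{u^\kstep-u^\star}$ is an exact identity for this estimator. The $\Kx=0$-specific step is to bound the cross term: since $\bar{u}^{\kstep+1}=\nabla F(\Koper\hat{x}^\kstep)$ and $u^\star=\nabla F(\Koper x^\star)$, cocoercivity~\eqref{strmono} gives both $\langle\Koper\hat{x}^\kstep-\Koper x^\star,\bar{u}^{\kstep+1}-u^\star\rangle\ge\tfrac{1}{L_F}\sqnorm{\bar{u}^{\kstep+1}-u^\star}$ and $\sqnorm{\bar{u}^{\kstep+1}-u^\star}\le L_F^2\Mx\sqnorm{\hat{x}^\kstep-x^\star}$. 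Using these together with Young's inequality~\eqref{yi1}, and with $\oma=\Mx\zeta$ so that $(1-\zeta)\Mx+\oma=\Mx$ collapses the two $\bar{u}^{\kstep+1}-u^\kstep$ variance terms, I would convert every remaining $\bar{u}^{\kstep+1}$-dependence into controllable multiples of $\sqnorm{\hat{x}^\kstep-x^\star}$ (hence of $\sqnorm{x^\kstep-x^\star}$ and $\sqnorm{\Koper^\top(u^\kstep-u^\star)}$, via the identities above) and of $\sqnorm{u^\kstep-u^\star}$. Finally I would form $\Psi^\kstep=A\sqnorm{x^\kstep-x^\star}+\sqnorm{u^\kstep-u^\star}$, pick $A=\tfrac{\Cx}{\Mx^2\gammaM^2}(1-\sqrt{\gammaM\Mx L_F/2})$ and impose $0<\gammaM\le\tfrac{\Cx}{4L\Mx}$ (which forces $\gammaM\Mx L_F=\gammaM(L-\mu)\le\tfrac14$, so $A>0$ and every ``positive variance $\le$ negative cocoercivity/optimality'' inequality closes), read off $\Exp{\Psi^{\kstep+1}\mid\mathcal{F}_\kstep}\le\max\{\tfrac{1}{1+\gammaM\mu},\,1-\tfrac{\Cx}{\Mx+2\gammaM L_F\Mx^2}\}\Psi^\kstep$, take total expectations and iterate for $\kstep=0,\dots,\Tx-1$. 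Since $1-\tfrac{1}{1+\gammaM\mu}=\tfrac{\gammaM\mu}{1+\gammaM\mu}$, the per-step factor is $1-\rho$ with the stated $\rho$, and both arguments of the $\min$ lie in $(0,1)$, so $\rho<1$.

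The hard part will be the last two steps. With $\Kx=0$ there is no GTPS slack, so the crude single-gradient (``\algname{SAGA}-type'') estimator $\bar{u}^{\kstep+1}$ must be tamed using only the \emph{tight} cocoercivity bound, and the genuinely positive variance contributions $\gammaM^2\oma\sqnorm{\bar{u}^{\kstep+1}-u^\kstep}$ and $\gammaM^2(1-\zeta)\sqnorm{\Koper^\top(\bar{u}^{\kstep+1}-u^\kstep)}$ have to be absorbed into the negative terms $-\tfrac{2A\gammaM}{L_F}\sqnorm{\bar{u}^{\kstep+1}-u^\star}$ and $-\tfrac{A\gammaM^2}{1+\gammaM\mu}\sqnorm{\Koper^\top(u^\kstep-u^\star)}$ without overspending the $\sqnorm{u^\kstep-u^\star}$ budget $\tfrac{\Cx}{\Mx}$. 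Closing this balance is exactly what pins down the small stepsize $\gammaM\le\Cx/(4L\Mx)$, the unusual primal weight $A\propto 1-\sqrt{\gammaM\Mx L_F/2}$ (the square root being the signature of a Young/AM--GM balance between primal and dual terms), and the merely linear---not accelerated---rate $\Cx/(\Mx+2\gammaM L_F\Mx^2)$, which, unlike Theorems~\ref{thm:5GCS-infty}--\ref{thm:5GCS}, does not sharpen to $\sqrt{L/\mu}$ at $\Cx=\Mx$. Once the bookkeeping of absolute constants is carried through, the statement follows.
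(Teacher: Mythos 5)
You have the correct skeleton (variance decomposition plus the client--sampling identity, expansion via the optimality identities, collapsing $(1-\zeta)\Mx+\oma=\Mx$, cocoercivity of $\nabla F$, and the choice $a=\sqrt{2/(\gammaM\Mx L_F)}$ that produces the weight $A=\tfrac{\Cx}{\Mx^2\gammaM^2}(1-\sqrt{\gammaM\Mx L_F/2})$), and your observation that $\gammaM\le\Cx/(4L\Mx)$ forces $\gammaM\Mx L_F\le 1/4$ is right. But there is a genuine gap in the way you propose to close the dual side.

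After you pick $c$ to kill the $\sqnorm{\bar{u}^{t+1}-u^t}$ coefficient, the combined recursion has the form
\begin{equation*}
c\,\Exp{\sqnorm{x^{t+1}-x^\star}\mid\mathcal{F}_t}+\Exp{\sqnorm{u^{t+1}-u^\star}\mid\mathcal{F}_t}
\le \frac{c}{1+\gammaM\mu}\sqnorm{x^t-x^\star}+\sqnorm{u^t-u^\star}
-\left(\frac{2c\gammaM}{L_F}-\frac{a}{1+\omega}\right)\sqnorm{\bar{u}^{t+1}-u^\star},
\end{equation*}
where the leftover $\sqnorm{\bar{u}^{t+1}-u^\star}$ coefficient is \emph{negative}. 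The coefficient of $\sqnorm{u^t-u^\star}$ on the right is exactly $1$ at this point: the $1-\tfrac{\Cx}{\Mx}$ that you call a ``budget'' is not a contraction, because the dual identity also contributes $+\tfrac{\Cx}{\Mx}\sqnorm{\bar{u}^{t+1}-u^\star}$, and those two add up to $1$. So if you discard the negative $\sqnorm{\bar{u}^{t+1}-u^\star}$ term, or attempt to bound it, you get no descent on the dual half of $\Psi$. The bound $\sqnorm{\bar{u}^{t+1}-u^\star}\le L_F^2\Mx\sqnorm{\hat{x}^t-x^\star}$ is useless here: it is an \emph{upper} bound, so applying it to a term with a negative coefficient makes the inequality go the wrong way, and applying it to the positive $\tfrac{\Cx}{\Mx}\sqnorm{\bar{u}^{t+1}-u^\star}$ leaves the $\sqnorm{u^t-u^\star}$ coefficient at $1$ while adding an uncompensated positive multiple of $\sqnorm{\hat{x}^t-x^\star}$.

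What the paper actually does at this juncture is to re-insert the exact dual update identity $\sqnorm{\bar{u}^{t+1}-u^\star}=(1+\omega)\Exp{\sqnorm{u^{t+1}-u^\star}\mid\mathcal{F}_t}-\omega\sqnorm{u^t-u^\star}$ into the leftover negative $\sqnorm{\bar{u}^{t+1}-u^\star}$ term. This moves weight onto $\Exp{\sqnorm{u^{t+1}-u^\star}\mid\mathcal{F}_t}$ on the left and onto $\sqnorm{u^t-u^\star}$ on the right, producing LHS/RHS dual coefficients $1+(1+\omega)\delta$ and $1+\omega\delta$ with $\delta=\tfrac{2c\gammaM}{L_F}-\tfrac{a}{1+\omega}\ge 0$; the ratio $\tfrac{1+\omega\delta}{1+(1+\omega)\delta}<1$ is what gives the $\tfrac{\Cx}{\Mx+2\gammaM L_F\Mx^2}$ rate, and the choice $a=\sqrt{2/(\gammaM\Mx L_F)}$ is what maximizes that ratio's gap from $1$. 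Your plan never describes this step; without it, the argument does not establish a contraction on $\sqnorm{u^t-u^\star}$ and the Lyapunov descent cannot be read off.
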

\begin{proof}
	Noting that updates for $u^{t+1}$ and $x^{t+1}$ can be written as
	\begin{eqnarray}
		&u^{t+1}\eqdef u^t + \frac{1}{1+\omega} \Rop^t\left(\bar{u}^{t+1}-u^t\right),&\\
		&x^{t+1}=\hat{x}^t-\gammaM\left(\omega+1\right)\Koper^\top\left(u^{t+1}-u^t\right)&\label{xupdate33}
	\end{eqnarray}
	where $\Rop^t$ is a client sampling operator, $\omega=\frac{\Mx}{\Cx}-1$ and  $\bar{u}^{t+1} = \nabla F(\Koper\hat{x}^t)$. 
	Then using variance decomposition and Proposition 1 from \citep{condat2021murana}, we obtain
	\begin{eqnarray}
		\label{eq:starting_eq3.3}
		\Exp{\sqnorm{x^{t+1}-x^\star}\;|\;\mathcal{F}_t}&\overset{(\ref{vardec})}{=}&\sqnorm{\Exp{x^{t+1}\;|\;\mathcal{F}_t}-x^\star}+\Exp{\sqnorm{x^{t+1}-\Exp{x^{t+1}\;|\;\mathcal{F}_t}}\;|\;\mathcal{F}_t}\notag\\
		&\overset{(\ref{xupdate33})+(\ref{PP})}{=}& \underbrace{\sqnorm{\hat{x}^{t}-x^\star-\gammaM \Koper^\top(\bar{u}^{t+1}-u^t)}}_{X}+\gammaM^2\oma\sqnorm{\bar{u}^{t+1}-u^t}\notag\\
		&\quad &- \gammaM^2\zeta\sqnorm{\Koper^\top(\bar{u}^{t+1}-u^t)},
	\end{eqnarray}
	where
	\begin{equation*}
		\oma=\frac{\Mx(\Mx-\Cx)}{\Cx(\Mx-1)},\quad \zeta=\frac{\Mx-\Cx}{\Cx(\Mx-1)}.
	\end{equation*}
	Moreover, using \myref{fooc} and the definition of $\hat{x}^t$, we have
	\begin{align}
		&(1+\gammaM\mu)\hat{x}^t=x^t-\gammaM \Koper^\top u^{t},\label{opt12}\\
		&(1+\gammaM\mu)x^\star= x^\star -\gammaM \Koper^\top u^\star.\label{opt22}
	\end{align}
	Using \myref{opt12} and \myref{opt22} we obtain
	\begin{eqnarray}\label{eq:long3.3}
		X &=&	\sqnorm{\hat{x}^{t}-x^\star-\gammaM \Koper^\top(\bar{u}^{t+1}-u^t)}\notag\\
		&=&\sqnorm{\hat{x}^{t}-x^\star} +\gammaM^2\sqnorm{\Koper^\top (\bar{u}^{t+1}-u^t)}\notag\\
		&\quad&-2\gammaM \left\langle 
		\hat{x}^{t}-x^\star,\Koper^\top(\bar{u}^{t+1}-u^t) \right\rangle  \notag\\
		&\leq& (1+\gammaM\mu) \sqnorm{\hat{x}^{t}-x^\star} +\gammaM^2\sqnorm{\Koper^\top(\bar{u}^{t+1}-u^t)}\notag\\
		&\quad&-2\gammaM  \left\langle 
		\hat{x}^{t}-x^\star,\Koper^\top (\bar{u}^{t+1}-u^\star) \right\rangle  +2\gammaM  \left\langle 
		\hat{x}^{t}-x^\star,\Koper^\top (u^{t}-u^\star) \right\rangle  \notag\\
		&\overset{(\ref{opt12})+(\ref{opt22})}{=}&   \left\langle  x^t-x^\star-\gammaM \Koper^\top (u^t-u^\star),\hat{x}^{t}-x^\star \right\rangle +\gammaM^2\sqnorm{\Koper^\top (\bar{u}^{t+1}-u^t)}\notag\\
		&\quad&-2\gammaM  \left\langle 
		\hat{x}^{t}-x^\star,\Koper^\top (\bar{u}^{t+1}-u^\star) \right\rangle  +  \left\langle 
		\hat{x}^{t}-x^\star,2\gammaM \Koper^\top (u^{t}-u^\star) \right\rangle  \notag\\
		&= &\left\langle  x^t-x^\star+\gammaM \Koper^\top (u^t-u^\star),\hat{x}^{t}-x^\star \right\rangle  \notag\\
		&\quad&+\gammaM^2\sqnorm{\Koper^\top (\bar{u}^{t+1}-u^t)}-2\gammaM  \left\langle 
		\hat{x}^{t}-x^\star, \Koper^\top(\bar{u}^{t+1}-u^\star) \right\rangle  \notag\\
		&\overset{(\ref{opt12})+(\ref{opt22})}{=}&\frac{1}{1+\gammaM\mu} \left\langle  x^t-x^\star+\gammaM \Koper^\top (u^t-u^\star),x^t-x^\star-\gammaM \Koper^\top (u^t-u^\star) \right\rangle  \notag\\
		&\quad&+\gammaM^2\sqnorm{\Koper^\top (\bar{u}^{t+1}-u^t)}-2\gammaM  \left\langle 
		\hat{x}^{t}-x^\star, \Koper^\top (\bar{u}^{t+1}-u^\star) \right\rangle  \notag\\
		&=& \frac{1}{1+\gammaM\mu}\sqnorm{x^t-x^\star}-\frac{\gammaM^2}{1+\gammaM\mu}\sqnorm{\Koper^\top (u^t-u^\star)}\notag\\
		&\quad&+\gammaM^2\sqnorm{\Koper^\top (\bar{u}^{t+1}-u^t)}-2\gammaM  \left\langle 
		\hat{x}^{t}-x^\star,\Koper^\top (\bar{u}^{t+1}-u^\star) \right\rangle.
	\end{eqnarray}
	Combining \myref{eq:starting_eq3.3} and \myref{eq:long3.3} we have
	\begin{eqnarray}
		\Exp{\sqnorm{x^{t+1}-x^\star}\;|\;\mathcal{F}_t}&\leq&  \frac{1}{1+\gammaM\mu}\sqnorm{x^t-x^\star}-\frac{\gammaM^2}{1+\gammaM\mu}\sqnorm{\Koper^\top (u^t-u^\star)}\notag\\
		&\quad&+\gammaM^2(1-\zeta)\sqnorm{\Koper^\top (\bar{u}^{t+1}-u^t)}-2\gammaM  \left\langle 
		\hat{x}^{t}-x^\star,\Koper^\top (\bar{u}^{t+1}-u^\star) \right\rangle \notag\\
		&\quad&+\gammaM^2\oma\sqnorm{\bar{u}^{t+1}-u^t}.\label{xbnd35}
	\end{eqnarray}
	On the other hand using the variance decomposition and conic variance of $\Rop^t$ 
	\begin{eqnarray}
		\Exp{\sqnorm{u^{t+1}-u^\star}\;|\;\mathcal{F}_t}&\overset{(\ref{vardec})+(\ref{cvar})}{\leq}& \sqnorm{u^{t}-u^\star+\frac{1}{1+\omega}\left(\bar{u}^{t+1} -u^t\right)}
		+\frac{\omega}{(1+\omega)^2}\sqnorm{\bar{u}^{t+1} -u^t }\notag\\ 
		&=& \sqnorm{\frac{\omega}{1+\omega}(u^{t}-u^\star)+\frac{1}{1+\omega}\left(\bar{u}^{t+1} -u^\star\right)}
		+\frac{\omega}{(1+\omega)^2}\sqnorm{\bar{u}^{t+1} - u^\star - (u^t - u^\star) }\notag\\
		&=&\frac{\omega^2}{(1+\omega)^2}\sqnorm{u^{t}-u^\star}+\frac{1}{(1+\omega)^2}\sqnorm{\bar{u}^{t+1}-u^\star}\notag\\
		&\quad&+\frac{2\omega}{(1+\omega)^2} \left\langle  u^{t}-u^\star,
		\bar{u}^{t+1}-u^\star \right\rangle +\frac{\omega}{(1+\omega)^2}\sqnorm{\bar{u}^{t+1} -u^\star }\notag\\
		&\quad&+\frac{\omega}{(1+\omega)^2}\sqnorm{u^{t} -u^\star }-\frac{2\omega}{(1+\omega)^2} \left\langle  u^{t}-u^\star,
		\bar{u}^{t+1}-u^\star \right\rangle\notag \\
		&=&\frac{1}{1+\omega}\sqnorm{\bar{u}^{t+1} -u^\star }+\frac{\omega}{1+\omega}\sqnorm{u^{t} -u^\star }.\label{ubound33}
	\end{eqnarray}
	Where
	\begin{eqnarray}
		\sqnorm{\bar{u}^{t+1}-u^\star}&=&\sqnorm{(u^t-u^\star)+(\bar{u}^{t+1}-u^t)}\notag\\
		&=&\sqnorm{u^t-u^\star}+\sqnorm{\bar{u}^{t+1}-u^t}+2 \left\langle  u^t-u^\star,\bar{u}^{t+1}-u^t \right\rangle \notag\\
		&=&\sqnorm{u^t-u^\star}+2  \left\langle  \bar{u}^{t+1}-u^\star,\bar{u}^{t+1}-u^t \right\rangle  - \sqnorm{\bar{u}^{t+1}-u^t}.\label{ueq}
	\end{eqnarray}
	Combining \myref{ubound33} and \myref{ueq}, we get
	\begin{equation}
		\Exp{\sqnorm{u^{t+1}-u^\star}\;|\;\mathcal{F}_t}\leq \sqnorm{u^{t} -u^\star }+\frac{1}{1+\omega}\left(2  \left\langle  \bar{u}^{t+1}-u^\star,\bar{u}^{t+1}-u^t \right\rangle  - \sqnorm{\bar{u}^{t+1}-u^t}\right).\label{ubound35}
	\end{equation}
	Now let $c>0$ and combine \myref{xbnd35} with \myref{ubound35} to get
	\begin{eqnarray*}
		c\Exp{\sqnorm{x^{t+1}-x^\star}\;|\;\mathcal{F}_t}&+&\Exp{\sqnorm{u^{t+1}-u^\star}\;|\;\mathcal{F}_t}\\
		&\leq&  \frac{c}{1+\gammaM\mu}\sqnorm{x^t-x^\star}+c\gammaM^2(1-\zeta)\sqnorm{\Koper^\top(\bar{u}^{t+1}-u^t)}\\
		&\quad& -2c\gammaM  \left\langle \Koper(\hat{x}^{t}-x^\star),\bar{u}^{t+1}-u^\star \right\rangle+c\gammaM^2\oma\sqnorm{\bar{u}^{t+1}-u^t}\\
		&\quad&+ \sqnorm{u^{t} -u^\star }+\frac{1}{1+\omega}\left(2  \left\langle  \bar{u}^{t+1}-u^\star,\bar{u}^{t+1}-u^t \right\rangle  - \sqnorm{\bar{u}^{t+1}-u^t}\right)\\
		&\overset{(\ref{yi1})}{\leq}& \frac{c}{1+\gammaM\mu}\sqnorm{x^t-x^\star}+c\gammaM^2(1-\zeta)\sqnorm{\Koper^\top(\bar{u}^{t+1}-u^t)}\\
		&\quad& -\frac{2c\gammaM}{L_F}  \sqnorm{\bar{u}^{t+1}-u^\star }+c\gammaM^2\oma\sqnorm{\bar{u}^{t+1}-u^t}\\
		&\quad&+ \sqnorm{u^{t} -u^\star }+\frac{1}{1+\omega}\left(a\sqnorm{ \bar{u}^{t+1}-u^\star}+\frac{1}{a}\sqnorm{\bar{u}^{t+1}-u^t} - \sqnorm{\bar{u}^{t+1}-u^t}\right)\\
		&\leq &\frac{c}{1+\gammaM\mu}\sqnorm{x^t-x^\star}+ \sqnorm{u^{t} -u^\star }\\
		&\quad& +\left(c\gammaM^2\left(1-\zeta\right)\Mx+c\gammaM^2\oma+\frac{1}{1+\omega}\left(\frac{1}{a}-1\right)\right)\sqnorm{\bar{u}^{t+1}-u^t}\\
		&\quad&-\left(\frac{2c\gammaM}{L_F}-\frac{1}{1+\omega}a\right)  \sqnorm{\bar{u}^{t+1}-u^\star }.
	\end{eqnarray*}
	Using \myref{ubound33} and assuming $a$,$c$ and $\gammaM$ can be chosen so that $\frac{2c\gammaM}{L_F}-\frac{1}{1+\omega}a\geq 0 $
	\begin{eqnarray*}
		c\Exp{\sqnorm{x^{t+1}-x^\star}\;|\;\mathcal{F}_t}&+&\left(1+\left(1+\omega\right)\frac{2c\gammaM}{L_F}-a\right)\Exp{\sqnorm{u^{t+1}-u^\star}\;|\;\mathcal{F}_t}\\
		&\leq &\frac{c}{1+\gammaM\mu}\sqnorm{x^t-x^\star}+\left(1+\omega\left(\frac{2c\gammaM}{L_F}-\frac{1}{1+\omega}a\right)\right) \sqnorm{u^{t} -u^\star }\\
		&\quad &+\left(c\gammaM^2\left(1-\zeta\right)\Mx+c\gammaM^2\oma+\frac{1}{1+\omega}\left(\frac{1}{a}-1\right)\right)\sqnorm{\bar{u}^{t+1}-u^t}.
	\end{eqnarray*}
	In our case we have
	
	\begin{equation}
		\omega=\frac{\Mx}{\Cx}-1,\quad \oma=\frac{\Mx(\Mx-\Cx)}{\Cx(\Mx-1)},\quad \zeta=\frac{\Mx-\Cx}{\Cx(\Mx-1)},
	\end{equation}

	the term next to $\sqnorm{\hat{u}^{t+1}-u^t}$ becomes
	\begin{equation*}
		c\gammaM^2\Mx+\frac{\Cx}{\Mx}\left(\frac{1}{a}-1\right),
	\end{equation*}
	to get rid of it, we set
	\begin{equation*}
		c=\frac{\frac{\Cx}{\Mx}\left(1-\frac{1}{a}\right)}{\gammaM^2\Mx}\quad ,\quad a\geq1.
	\end{equation*}
	An $a$ that maximizes the contration on $\Exp{\sqnorm{u^{t+1}-u^\star}\;|\;\mathcal{F}_t}$ is given by $a=\sqrt{\frac{2}{\gammaM \Mx L_F}}$, thus we need $\gammaM\leq\frac{2}{\Mx L_F}$ and
	\begin{equation*}
		\frac{1}{\gammaM L_F \Mx}-\sqrt{\frac{2}{L_F\gammaM \Mx}}>0.
	\end{equation*} 
	Thus we need $\gammaM<\frac{1}{2\Mx L_F}$
	and we can write a contraction constant of Lyapunov function as
	\begin{eqnarray*}
		&\max&\left\{\frac{1}{1+\gammaM\mu},\frac{1+\omega(\frac{2c\gammaM}{L_F}-\frac{1}{1+\omega}a)}{1+(1+\omega)(\frac{2c\gammaM}{L_F}-\frac{1}{1+\omega}a)}\right\}=\max\left\{\frac{1}{1+\gammaM\mu},\frac{1+\frac{\Mx-\Cx}{\Cx}\left(\frac{2\Cx}{\gammaM L_F \Mx^2}-\frac{2\Cx}{\Mx}\sqrt{\frac{2}{L_F\gammaM \Mx}}\right)}{1+\frac{\Mx}{\Cx}\left(\frac{2\Cx}{\gammaM L_F \Mx^2}-\frac{2\Cx}{\Mx}\sqrt{\frac{2}{L_F\gammaM \Mx}}\right)}\right\}.
	\end{eqnarray*}
\end{proof}
\subsection{Proof of Corollary~\ref{cor:5GCS-0}}

\begin{corollary}
	Choose any $0<\varepsilon<1$ and
	$\gammaM=\frac{\Cx}{4 L \Mx}$. In order to guarantee $\Exp{\Psi^{\Tx}}\leq \varepsilon \Psi^0$, it suffices to take
	\[
%	\squeeze 
		\Tx \geq 	\max\left\{1+\frac{4\Mx}{\Cx}\frac{L}{\mu},\frac{\Mx}{\Cx}+\frac{L_F\Mx}{L}\right\}\log \frac{1}{\varepsilon}= \tilde{O}\left(\frac{\Mx}{\Cx}\frac{L}{\mu}\right)
	\]
	communication rounds.
\end{corollary}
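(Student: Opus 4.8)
The plan is to derive the corollary directly from Theorem~\ref{thm:5GCS-0} by substituting the prescribed stepsize $\gammaM = \frac{\Cx}{4L\Mx}$ and converting the per-round contraction $\Exp{\Psi^{\Tx}} \le (1-\rho)^{\Tx}\Psi^0$ into a bound on $\Tx$. First I would confirm that the stepsize is admissible: it satisfies $0<\gammaM\le\frac{\Cx}{4L\Mx}$ with equality, and since $L_F=\frac{L-\mu}{\Mx}$ we get $\gammaM\Mx L_F=\frac{\Cx(L-\mu)}{4L\Mx}\le\frac14$, so the coefficient $1-\sqrt{\gammaM\Mx L_F/2}$ in the Lyapunov function is strictly positive and $\Psi^{\kstep}\ge 0$ is a legitimate potential, making ``$\Exp{\Psi^{\Tx}}\le\varepsilon\Psi^0$'' a meaningful target.

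Next I would evaluate $\rho=\min\!\left(\frac{\gammaM\mu}{1+\gammaM\mu},\ \frac{\Cx}{\Mx+2\gammaM L_F\Mx^2}\right)$ at this stepsize. For the first term, $\frac{1}{\gammaM\mu}=\frac{4L\Mx}{\Cx\mu}$, so its reciprocal is $\frac{1+\gammaM\mu}{\gammaM\mu}=1+\frac{4\Mx}{\Cx}\frac{L}{\mu}$. For the second term, $2\gammaM L_F\Mx^2=\frac{\Cx L_F\Mx}{2L}$, hence its reciprocal is $\frac{\Mx}{\Cx}+\frac{L_F\Mx}{2L}\le\frac{\Mx}{\Cx}+\frac{L_F\Mx}{L}$. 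Using $\frac{1}{\min(a,b)}=\max(\frac1a,\frac1b)$ this gives $\frac{1}{\rho}\le\max\!\left\{1+\frac{4\Mx}{\Cx}\frac{L}{\mu},\ \frac{\Mx}{\Cx}+\frac{L_F\Mx}{L}\right\}$.

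Then I would invoke the elementary inequality $\log\frac{1}{1-\rho}\ge\rho$ for $\rho\in(0,1)$: since $(1-\rho)^{\Tx}\le\varepsilon$ whenever $\Tx\ge\frac{1}{\rho}\log\frac{1}{\varepsilon}$, and $\frac1\rho$ is bounded above by the displayed maximum, the stated choice of $\Tx$ guarantees $\Exp{\Psi^{\Tx}}\le\varepsilon\Psi^0$. The final $\tilde{O}\!\left(\frac{\Mx}{\Cx}\frac{L}{\mu}\right)$ simplification follows because $L_F\le\frac{L}{\Mx}$ forces $\frac{L_F\Mx}{L}\le1$, so the second entry of the maximum is $O(\frac{\Mx}{\Cx})$ and is dominated by the first entry $1+\frac{4\Mx}{\Cx}\frac{L}{\mu}$.

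I do not expect a genuine obstacle here: this is a routine corollary of Theorem~\ref{thm:5GCS-0}, and the only care needed is in the constant bookkeeping (in particular the harmless relaxation $\frac{L_F\Mx}{2L}\le\frac{L_F\Mx}{L}$ used to obtain a clean closed form) and in the standard passage from a geometric contraction factor to a logarithmic round count.
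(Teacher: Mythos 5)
Your proof is correct, and in fact it is more direct than the paper's own proof of this corollary. You plug $\gammaM=\frac{\Cx}{4L\Mx}$ into the rate $\rho$ \emph{as stated} in Theorem~\ref{thm:5GCS-0} and read off $1/\rho$, which yields exactly the $\max\{1+\frac{4\Mx}{\Cx}\frac{L}{\mu},\,\frac{\Mx}{\Cx}+\frac{L_F\Mx}{2L}\}$ form (relaxed to $\frac{L_F\Mx}{L}$ in the statement). The paper instead re-enters the \emph{proof} of the theorem: it substitutes $\gammaM=\frac{\Cx}{4L_F\Mx^2}\theta$ with $\theta=\frac{\Mx L_F}{L}$ into the pre-simplified contraction factor (the one still containing the $8/\theta - 8\sqrt{\Cx/(2\Mx\theta)}$ terms), bounds that expression from above, and only concludes $T=\mathcal{O}\!\left(\frac{\Mx L}{\Cx\mu}\log\frac{1}{\varepsilon}\right)$ without recovering the explicit maximum. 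Your route is more modular (it trusts the theorem's clean interface rather than its internals) and actually delivers the precise bound asserted in the corollary; the minor bookkeeping points you flag---admissibility of $\gammaM$, nonnegativity of the Lyapunov coefficient, and the inequality $-\log(1-\rho)\ge\rho$---are exactly the right things to check, and all hold.
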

\begin{proof}
	If we let $\gammaM=\frac{\Cx}{4 L_F \Mx^2}\theta$, where $\theta=\frac{\Mx L_F}{L}$ then
	\begin{eqnarray*}
		&\max&\left\{\frac{1}{1+\gammaM\mu},\frac{1+\frac{\Mx-\Cx}{\Cx}\left(\frac{2\Cx}{\gammaM L_F \Mx^2}-\frac{2\Cx}{\Mx}\sqrt{\frac{2}{L_F\gammaM \Mx}}\right)}{1+\frac{\Mx}{\Cx}\left(\frac{2\Cx}{\gammaM L_F \Mx^2}-\frac{2\Cx}{\Mx}\sqrt{\frac{2}{L_F\gammaM \Mx}}\right)}\right\}
		=\max\left\{\frac{1}{1+\frac{\Cx}{4 L_F \Mx^2}\mu},\frac{1+\frac{\Mx-\Cx}{\Cx}\left(8\frac{1}{\theta}-8\sqrt{\frac{\Cx}{2\Mx\theta}}\right)}{1+\frac{\Mx}{\Cx}\left(8\frac{1}{\theta}-8\sqrt{\frac{\Cx}{2\Mx\theta}}\right)}\right\}\\
		&\leq&\max\left\{\frac{1}{1+\frac{\Cx}{4 L_F \Mx^2}\mu},1-\frac{8-8\sqrt{\frac{1}{2}}}{\theta+\frac{\Mx}{\Cx}\left(8-8\sqrt{\frac{1}{2}}\right)}\right\}
		\leq \max\left\{\frac{1}{1+\frac{\mu \Cx}{4 L\Mx}},1-\frac{2\Cx}{2\Mx+2\Cx\frac{\Mx L_F}{L}}\right\} 
		\leq\frac{1}{1+\mathcal{O}\left(\frac{\mu \Cx}{L\Mx}\right)}.
	\end{eqnarray*}
	Thus Algorithm~\ref{alg:5GCS-0} finds $\varepsilon$-solution in 
	$$T = \mathcal{O}\left(\frac{\Mx L}{\Cx\mu}\log\frac{1}{\varepsilon}\right)$$
	iterations.
\end{proof}

\clearpage
\section{Analysis of 5GCS for arbitary solvers $\cA_m$}

In real-life applications we might be in a situation where we would want local solvers to be personalized to each client, one such reason might be the amount of data or the type of software on a local machine. Thanks to the structure of the lifted space, the inner problem is separable which allows us to use arbitrary solvers to minimize the local function.
The general local problem 
\begin{equation*}
%	\squeeze 	
	\argmin \limits_{y\in\mathbb{R}^{dn}}\left\{\localfun(y) \eqdef F(y)+\frac{\tauM}{2} \sqnorm{y-\left(\Koper\hat{x}^\kstep+\frac{1}{\tauM}u^t\right)}\right\},
\end{equation*}
can be separated into
\begin{eqnarray*}
%	\squeeze 	
	\argmin \limits_{y\in\mathbb{R}^d}\left\{\localfuni(y) \eqdef  F_\iclient(y)+\frac{\tauM}{2} \sqnorm{y-\left(\hat{x}^\kstep+\frac{1}{\tauM}u_\iclient^t\right)}\right\},
\end{eqnarray*}
for $\iclient\in\left\{1,\dots,\Mx\right\}$ as the vector components are independent. This means that the Algorithm $\localsolver$ can be interpreted as concatenation of solutions that Algorithms $\localsolver_m$ find to respective local problems $\localfuni$. Noting that Assumption~\ref{ass:TPS} implies Assumption~\ref{ass:GTPS}, we can note that since local problems are independent there is no constraint on what local solver each client uses nor on a shared number of local steps that each method uses. 

\subsection{Proof of Theorem~\ref{thm:INEXACTPPanyM}}
\begin{theorem}
	Consider Algorithm~\ref{alg:5GCS} (\algname{5GCS}) with the LT solvers $\{\cA_1,\dots,\cA_\Mx\}$ satisfying Assumption~\ref{ass:GTPS}. Let $0<\gammaM\leq \frac{3}{16}\sqrt{\frac{\Cx}{L\mu \Mx}}$ and $\tauM=\frac{1}{2\gammaM \Mx}$. 
	Then for the Lyapunov function
	\begin{equation*}
%		\squeeze 
			\Psi^{\kstep}\eqdef \frac{1}{\gammaM}\sqnorm{x^{\kstep}-x^\star}+\frac{\Mx}{\Cx}\left(\frac{1}{\tauM}+\frac{1}{L_F}\right)\sqnorm{u^{\kstep}-u^\star}\label{PPINEXACTpsianyM},
	\end{equation*}
	the iterates of the method satisfy
	$
	\Exp{\Psi^{\Tx}}\leq (1-\rho)^\Tx \Psi^0,
	$
	where 
	$	\rho\eqdef  \max\left\{\frac{\gammaM\mu}{1+\gammaM\mu},\frac{\Cx}{\Mx}\frac{\tauM}{(L_F+\tauM)}\right\}<1.
	$

\end{theorem}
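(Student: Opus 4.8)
The plan is to mirror the proof of Theorem~\ref{thm:5GCS} almost line for line, the two differences being that (i) the inexactness inequality which there is established by running \algname{GD} for enough steps is now supplied directly by Assumption~\ref{ass:GTPS}, and (ii) the stepsize restriction is kept in the general form $\gammaM\le\tfrac{1}{\tauM\Mx}(1-\tfrac{4\mu}{3\Mx\tauM})$ instead of the concrete choice $\tauM=\tfrac{1}{2\gammaM\Mx}$. First I would rewrite the two updates of Algorithm~\ref{alg:5GCS} in operator form, $u^{t+1}=u^t+\tfrac{1}{1+\omega}\Rop^t(\bar u^{t+1}-u^t)$ and $x^{t+1}=\hat x^t-\gammaM(1+\omega)\Koper^\top(u^{t+1}-u^t)$, where $\omega=\tfrac{\Mx}{\Cx}-1$, $\Rop^t$ is the client sampling operator, and $\bar u^{t+1}=\nabla F(\lastlocitterk)$ collects the inexact dual updates $u_\iclient^{t+1}=\nabla F_\iclient(\lastlocittermk)$. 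Conditioning on $\mathcal{F}_t$, I would apply the variance decomposition \myref{vardec} together with identity \myref{PP} to split $\Exp{\sqnorm{x^{t+1}-x^\star}\;|\;\mathcal{F}_t}$ into the bias square $\sqnorm{\hat x^t-x^\star-\gammaM\Koper^\top(\bar u^{t+1}-u^t)}$ plus the explicit sampling-variance terms $\gammaM^2\oma\sqnorm{\bar u^{t+1}-u^t}-\gammaM^2\zeta\sqnorm{\Koper^\top(\bar u^{t+1}-u^t)}$ with $\oma=\tfrac{\Mx(\Mx-\Cx)}{\Cx(\Mx-1)}$ and $\zeta=\tfrac{\Mx-\Cx}{\Cx(\Mx-1)}$. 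Expanding the bias square and substituting the two first-order identities $(1+\gammaM\mu)\hat x^t=x^t-\gammaM\Koper^\top u^t$ and $(1+\gammaM\mu)x^\star=x^\star-\gammaM\Koper^\top u^\star$ (both consequences of \myref{fooc} and the definition of $\hat x^t$) collapses the bias telescopically to $\tfrac{1}{1+\gammaM\mu}\sqnorm{x^t-x^\star}$ minus the crucial negative strong-convexity term $\tfrac{\gammaM\mu}{\Mx}\sqnorm{\Koper\hat x^t-\Koper x^\star}$, together with the cross term $-2\gammaM\langle\hat x^t-x^\star,\Koper^\top(\bar u^{t+1}-u^\star)\rangle$ and a nonpositive multiple of $\sqnorm{\Koper^\top(u^t-u^\star)}$ that is simply discarded.

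In parallel I would bound $\Exp{\sqnorm{u^{t+1}-u^\star}\;|\;\mathcal{F}_t}$ by the variance decomposition and the conic variance \myref{cvar} of $\Rop^t$ (parameter $\omega$), getting $\sqnorm{u^t-u^\star}+\tfrac{1}{1+\omega}(2\langle\bar u^{t+1}-u^\star,\bar u^{t+1}-u^t\rangle-\sqnorm{\bar u^{t+1}-u^t})$, and then form the Lyapunov-weighted sum $\tfrac{1}{\gammaM}\Exp{\sqnorm{x^{t+1}-x^\star}\;|\;\mathcal{F}_t}+\tfrac{1+\omega}{\tauM}\Exp{\sqnorm{u^{t+1}-u^\star}\;|\;\mathcal{F}_t}$. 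The two surviving cross terms, $-\langle\hat x^t-x^\star,\Koper^\top(\bar u^{t+1}-u^\star)\rangle$ and $\tfrac{1}{\tauM}\langle\bar u^{t+1}-u^\star,\bar u^{t+1}-u^t\rangle$, combine, through the first-order identity $\bar u^{t+1}-u^t=\tauM(\Koper\hat x^t-\lastlocitterk)+\nabla\localfun(\lastlocitterk)$, into $-\langle\lastlocitterk-\Koper x^\star,\bar u^{t+1}-u^\star\rangle+\tfrac{1}{\tauM}\langle\nabla\localfun(\lastlocitterk),\bar u^{t+1}-u^\star\rangle$; the first inner product I would lower-bound by $\tfrac{1}{L_F}$-strong monotonicity of $\partial F^*$ (i.e.\ \myref{strmono} applied to the $L_F$-smooth convex $F$), and the second by Young's inequality \myref{yi1} with $a=L_F/\tauM$, which after re-expressing $\sqnorm{\bar u^{t+1}-u^\star}$ via the $u$-recursion upgrades the $u$-coefficient from $\tfrac{1}{\tauM}$ to $\tfrac{1}{\tauM}+\tfrac{1}{L_F}$ and leaves a residual $\tfrac{L_F}{\tauM^2}\sqnorm{\nabla\localfun(\lastlocitterk)}$. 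Next I would spend the strong-convexity term: with $\localsolk=\Koper\hat x^t-\tfrac{1}{\tauM}(\hat u^{t+1}-u^t)$ the exact block minimizer, writing $-\tfrac{\mu}{3\Mx}\sqnorm{(\Koper\hat x^t-\localsolk)+(\localsolk-\Koper x^\star)}$ and using \myref{yi3} then \myref{yi2} gives $+\tfrac{2\mu}{3\Mx}\sqnorm{\Koper\hat x^t-\Koper x^\star}+\tfrac{2}{\tauM^2}\tfrac{\mu}{3\Mx}\sqnorm{\hat u^{t+1}-u^t}-\tfrac{\mu}{6\Mx}\sqnorm{\Koper\hat x^t-\localsolk}$; after further bounding $\sqnorm{\hat u^{t+1}-u^t}\le 2L_F^2\sqnorm{\lastlocitterk-\localsolk}+2\sqnorm{\bar u^{t+1}-u^t}$ via $L_F$-smoothness of $F$ and \myref{yi2}, and using $(1-\zeta)\Mx+\oma=\Mx$, the leftover coefficient of $\sqnorm{\bar u^{t+1}-u^t}$ is $\gammaM\Mx-\tfrac{1}{\tauM}+\tfrac{4\mu}{3\Mx\tauM^2}$, which is $\le0$ exactly under the assumed bound $\gammaM\le\tfrac{1}{\tauM\Mx}(1-\tfrac{4\mu}{3\Mx\tauM})$, so that term is dropped. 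What remains beyond the Lyapunov-type quantities is the separable sum $\sum_\iclient\big(\tfrac{4}{\tauM^2}\tfrac{\mu L_F^2}{3\Mx}\sqnorm{\lastlocittermk-\localsolmk}+\tfrac{L_F}{\tauM^2}\sqnorm{\nabla\localfuni(\lastlocittermk)}\big)$ against the negative budget $-\sum_\iclient\tfrac{\mu}{6\Mx}\sqnorm{\hat x^t-\localsolmk}=-\tfrac{\mu}{6\Mx}\sqnorm{\Koper\hat x^t-\localsolk}$, and Assumption~\ref{ass:GTPS} asserts precisely that the former does not exceed the latter in magnitude, so the residual cancels.

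After this cancellation the conditional recursion reads $\tfrac{1}{\gammaM}\Exp{\sqnorm{x^{t+1}-x^\star}\;|\;\mathcal{F}_t}+\tfrac{\Mx}{\Cx}(\tfrac{1}{\tauM}+\tfrac{1}{L_F})\Exp{\sqnorm{u^{t+1}-u^\star}\;|\;\mathcal{F}_t}\le\tfrac{1}{\gammaM(1+\gammaM\mu)}\sqnorm{x^t-x^\star}+\tfrac{\Mx}{\Cx}(\tfrac{1}{\tauM}+\tfrac{\omega}{1+\omega}\tfrac{1}{L_F})\sqnorm{u^t-u^\star}$, that is, $\Exp{\Psi^{t+1}\;|\;\mathcal{F}_t}\le(1-\rho)\Psi^t$ with $1-\rho=\max\{\tfrac{1}{1+\gammaM\mu},\,1-\tfrac{\Cx}{\Mx}\tfrac{\tauM}{L_F+\tauM}\}$, which matches the claimed $\rho$ (using $1+\omega=\tfrac{\Mx}{\Cx}$), and $\rho<1$ because both arguments of the min lie strictly in $(0,1)$. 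Taking total expectation and unrolling over $t=0,\dots,\Tx-1$ then yields $\Exp{\Psi^\Tx}\le(1-\rho)^\Tx\Psi^0$. I expect the main obstacle to be the bookkeeping around the $\sqnorm{\bar u^{t+1}-u^t}$ coefficient — verifying that the general stepsize condition is exactly what makes it nonpositive once the constants $\oma,\zeta$ are substituted — together with the precise matching of the inexactness terms to the budget $\tfrac{\mu}{6\Mx}\sqnorm{\Koper\hat x^t-\localsolk}$ furnished by Assumption~\ref{ass:GTPS}; everything else is the same algebra as in the proof of Theorem~\ref{thm:5GCS}, now carried out with a general $\tauM$.
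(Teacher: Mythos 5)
Your proposal tracks the paper's own argument step for step: the operator reformulation of the updates, variance decomposition with identity \myref{PP}, the two first-order identities from \myref{fooc}, the Lyapunov-weighted sum, co-coercivity of $\nabla F$ plus Young's inequality with $a=L_F/\tauM$, the two Young-type splits of the strong-convexity budget (including the $L_F$-smoothness bound $\sqnorm{\hat u^{t+1}-u^t}\le 2L_F^2\sqnorm{\lastlocitterk-\localsolk}+2\sqnorm{\bar u^{t+1}-u^t}$ and $(1-\zeta)\Mx+\oma=\Mx$), the cancellation supplied by Assumption~\ref{ass:GTPS}, and the unrolling to $(1-\rho)^\Tx$. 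You also correctly read $\rho$ as the $\min$ (the appendix's $\max$ is a typo) and observe that the concrete choice $\tauM=\tfrac{1}{2\gammaM\Mx}$, $\gammaM\le\tfrac{3}{16}\sqrt{\tfrac{\Cx}{L\mu\Mx}}$ implies the general condition $\gammaM\le\tfrac{1}{\tauM\Mx}\bigl(1-\tfrac{4\mu}{3\Mx\tauM}\bigr)$ actually used, so this is essentially the same proof.
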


\begin{proof}
	Noting that updates for $u^{t+1}$ and $x^{t+1}$ can be written as
	\begin{eqnarray}
		&u^{t+1}\eqdef u^t + \frac{1}{1+\omega} \Rop^t\left(\bar{u}^{t+1}-u^t\right),&\\
		&x^{t+1}=\hat{x}^t-\gammaM\left(\omega+1\right)\Koper^\top\left(u^{t+1}-u^t\right),&\label{xupdate35}
	\end{eqnarray}
	where $\Rop^t$ is the client sampling operator, $\omega=\frac{\Mx}{\Cx}-1$ and  $\bar{u}^{t+1} = \nabla F(\lastlocitterk)$. 
	Then using variance decomposition and Proposition~1 from \citep{condat2021murana}, we obtain
	\begin{eqnarray}
		\label{eq:starting_eq3.5}
		\Exp{\sqnorm{x^{t+1}-x^\star}\;|\;\mathcal{F}_t}&\overset{(\ref{vardec})}{=}&\sqnorm{\Exp{x^{t+1}\;|\;\mathcal{F}_t}-x^\star}+\Exp{\sqnorm{x^{t+1}-\Exp{x^{t+1}\;|\;\mathcal{F}_t}}\;|\;\mathcal{F}_t}\notag\\
		&\overset{(\ref{xupdate35})+(\ref{PP})}{=}& \underbrace{\sqnorm{\hat{x}^{t}-x^\star-\gammaM \Koper^\top(\bar{u}^{t+1}-u^t)}}_{X}+\gammaM^2\oma\sqnorm{\bar{u}^{t+1}-u^t}\notag\\
		&\quad &- \gammaM^2\zeta\sqnorm{\Koper^\top(\bar{u}^{t+1}-u^t)},
	\end{eqnarray}
	where
	\begin{equation*}
		\oma=\frac{\Mx(\Mx-\Cx)}{\Cx(\Mx-1)},\quad \zeta=\frac{\Mx-\Cx}{\Cx(\Mx-1)}.
	\end{equation*}
	Moreover, using \myref{fooc} and the definition of $\hat{x}^t$, we have
	\begin{align}
		&(1+\gammaM\mu)\hat{x}^t=x^t-\gammaM \Koper^\top u^{t},\label{opt1}\\
		&(1+\gammaM\mu)x^\star= x^\star -\gammaM \Koper^\top u^\star.\label{opt2}
	\end{align}
	Using \myref{opt1} and \myref{opt2} we obtain
	\begin{eqnarray}
		\label{eq:long3.5}
		X &=&	\sqnorm{\hat{x}^{t}-x^\star-\gammaM \Koper^\top(\bar{u}^{t+1}-u^t)}\notag\\
		&=&\sqnorm{\hat{x}^{t}-x^\star} +\gammaM^2\sqnorm{\Koper^\top (\bar{u}^{t+1}-u^t)}\notag\\
		&\quad&-2\gammaM \left\langle 
		\hat{x}^{t}-x^\star,\Koper^\top(\bar{u}^{t+1}-u^t) \right\rangle  \notag\\
		&=& (1+\gammaM\mu) \sqnorm{\hat{x}^{t}-x^\star} +\gammaM^2\sqnorm{\Koper^\top(\bar{u}^{t+1}-u^t)}\notag\\
		&\quad&-2\gammaM  \left\langle 
		\hat{x}^{t}-x^\star,\Koper^\top (\bar{u}^{t+1}-u^\star) \right\rangle  +2\gammaM  \left\langle 
		\hat{x}^{t}-x^\star,\Koper^\top (u^{t}-u^\star) \right\rangle  \notag\\
		&\quad&-\gammaM\mu\sqnorm{\hat{x}^{t}-x^\star} \notag\\
		&\overset{(\ref{opt1})+(\ref{opt2})}{=}&   \left\langle  x^t-x^\star-\gammaM \Koper^\top (u^t-u^\star),\hat{x}^{t}-x^\star \right\rangle +\gammaM^2\sqnorm{\Koper^\top (\bar{u}^{t+1}-u^t)}\notag\\
		&\quad&-2\gammaM  \left\langle 
		\hat{x}^{t}-x^\star,\Koper^\top (\bar{u}^{t+1}-u^\star) \right\rangle  +  \left\langle 
		\hat{x}^{t}-x^\star,2\gammaM \Koper^\top (u^{t}-u^\star) \right\rangle  \notag\\
		&\quad&-\gammaM\mu\sqnorm{\hat{x}^{t}-x^\star} \notag.
	\end{eqnarray}
	It leads to	
	\begin{eqnarray}
		X	&= &\left\langle  x^t-x^\star+\gammaM \Koper^\top (u^t-u^\star),\hat{x}^{t}-x^\star \right\rangle  \notag\\
		&\quad&+\gammaM^2\sqnorm{\Koper^\top (\bar{u}^{t+1}-u^t)}-2\gammaM  \left\langle 
		\hat{x}^{t}-x^\star, \Koper^\top(\bar{u}^{t+1}-u^\star) \right\rangle  \notag\\
		&\quad&-\gammaM\mu\sqnorm{\hat{x}^{t}-x^\star}\notag \\
		&\overset{(\ref{opt1})+(\ref{opt2})}{=}&\frac{1}{1+\gammaM\mu} \left\langle  x^t-x^\star+\gammaM \Koper^\top (u^t-u^\star),x^t-x^\star-\gammaM \Koper^\top (u^t-u^\star) \right\rangle  \notag\\
		&\quad&+\gammaM^2\sqnorm{\Koper^\top (\bar{u}^{t+1}-u^t)}-2\gammaM  \left\langle 
		\hat{x}^{t}-x^\star, \Koper^\top (\bar{u}^{t+1}-u^\star) \right\rangle  \notag\\
		&\quad&-\gammaM\mu\sqnorm{\hat{x}^{t}-x^\star} \notag\\
		&=& \frac{1}{1+\gammaM\mu}\sqnorm{x^t-x^\star}-\frac{\gammaM^2}{1+\gammaM\mu}\sqnorm{\Koper^\top (u^t-u^\star)}\notag\\
		&\quad&+\gammaM^2\sqnorm{\Koper^\top (\bar{u}^{t+1}-u^t)}-2\gammaM  \left\langle 
		\hat{x}^{t}-x^\star,\Koper^\top (\bar{u}^{t+1}-u^\star) \right\rangle\notag\\
		&\quad&-\gammaM\mu\sqnorm{\hat{x}^{t}-x^\star}   . 
	\end{eqnarray}
	Combining \myref{eq:starting_eq3.5} and \myref{eq:long3.5} we have
	\begin{eqnarray*}
		\Exp{\sqnorm{x^{t+1}-x^\star}\;|\;\mathcal{F}_t}&\leq&  \frac{1}{1+\gammaM\mu}\sqnorm{x^t-x^\star}-\frac{\gammaM^2}{1+\gammaM\mu}\sqnorm{\Koper^\top (u^t-u^\star)}\\
		&\quad&+\gammaM^2(1-\zeta)\sqnorm{\Koper^\top (\bar{u}^{t+1}-u^t)}-2\gammaM  \left\langle 
		\hat{x}^{t}-x^\star,\Koper^\top (\bar{u}^{t+1}-u^\star) \right\rangle \\
		&\quad&+\gammaM^2\oma\sqnorm{\bar{u}^{t+1}-u^t}-\frac{\gammaM\mu}{\Mx}\sqnorm{\Koper\hat{x}^t-\Koper x^\star}.
	\end{eqnarray*}
	Note that we can have the update rule for $u$ as: 
	\begin{equation*}
		u^{t+1}\eqdef u^t + \tfrac{1}{1+\omega} \Rop^t\left(\bar{u}^{t+1}-u^t\right),
	\end{equation*}
	where $\Rop^t$ is the client sampling operator with parameter  $\omega=\frac{\Mx}{\Cx}-1$. Using conic variance formula \myref{cvar} of $\Rop^t$ we obtain
	\begin{eqnarray}
		\Exp{\sqnorm{u^{t+1}-u^\star}\;|\;\mathcal{F}_t}&\overset{(\ref{vardec})+(\ref{cvar})}{\leq}& \sqnorm{u^{t}-u^\star+\frac{1}{1+\omega}\left(\bar{u}^{t+1} -u^t\right)}
		+\frac{\omega}{(1+\omega)^2}\sqnorm{\bar{u}^{t+1} -u^t }\notag\\
		&=&\frac{\omega^2}{(1+\omega)^2}\sqnorm{u^{t}-u^\star}+\frac{1}{(1+\omega)^2}\sqnorm{\bar{u}^{t+1}-u^\star}\notag\\
		&\quad&+\frac{2\omega}{(1+\omega)^2} \left\langle  u^{t}-u^\star,
		\bar{u}^{t+1}-u^\star \right\rangle +\frac{\omega}{(1+\omega)^2}\sqnorm{\bar{u}^{t+1} -u^\star }\notag\\
		&\quad&+\frac{\omega}{(1+\omega)^2}\sqnorm{u^{t} -u^\star }-\frac{2\omega}{(1+\omega)^2} \left\langle  u^{t}-u^\star,
		\bar{u}^{t+1}-u^\star \right\rangle \notag\\
		&=&\frac{1}{1+\omega}\sqnorm{\bar{u}^{t+1} -u^\star }+\frac{\omega}{1+\omega}\sqnorm{u^{t} -u^\star }.\label{expu}
	\end{eqnarray}
	Let us consider the first term in \myref{expu}:
	\begin{eqnarray*}
		\sqnorm{\bar{u}^{t+1}-u^\star}&=&\sqnorm{(u^t-u^\star)+(\bar{u}^{t+1}-u^t)}\\
		&=&\sqnorm{u^t-u^\star}+\sqnorm{\bar{u}^{t+1}-u^t}+2 \left\langle  u^t-u^\star,\bar{u}^{t+1}-u^t \right\rangle \\
		&=&\sqnorm{u^t-u^\star}+2  \left\langle  \bar{u}^{t+1}-u^\star,\bar{u}^{t+1}-u^t \right\rangle  - \sqnorm{\bar{u}^{t+1}-u^t}.
	\end{eqnarray*}
	Combining terms together we get
	\begin{equation*}
		\Exp{\sqnorm{u^{t+1}-u^\star}\;|\;\mathcal{F}_t}\leq \sqnorm{u^{t} -u^\star }+\frac{1}{1+\omega}\left(2  \left\langle  \bar{u}^{t+1}-u^\star,\bar{u}^{t+1}-u^t \right\rangle  - \sqnorm{\bar{u}^{t+1}-u^t}\right).
	\end{equation*}
	Finally, we obtain
	\begin{eqnarray*}
		\frac{1}{\gammaM}\Exp{\sqnorm{x^{t+1}-x^\star}\;|\;\mathcal{F}_t}&+&\frac{1+\omega}{\tauM}\Exp{\sqnorm{u^{t+1}-u^\star}\;|\;\mathcal{F}_t}\\
		&\leq&  \frac{1}{\gammaM(1+\gammaM\mu)}\sqnorm{x^t-x^\star}-\frac{\gammaM}{1+\gammaM\mu}\sqnorm{\Koper^\top (u^t-u^\star)}\\
		&\quad&+\gammaM(1-\zeta)\sqnorm{\Koper^\top (\bar{u}^{t+1}-u^t)}\\
		&\quad&+\gammaM\oma\sqnorm{\bar{u}^{t+1}-u^t}-\frac{\mu}{\Mx}\sqnorm{\Koper \hat{x}^t- \Koper x^\star}\\
		&\quad&+\frac{1+\omega}{\tauM}\sqnorm{u^{t} -u^\star }-2  \left\langle 
		\hat{x}^{t}-x^\star,\Koper^\top (\bar{u}^{t+1}-u^\star) \right\rangle \\ &\quad&+\frac{1}{\tauM}\left(2  \left\langle  \bar{u}^{t+1}-u^\star,\bar{u}^{t+1}-u^t \right\rangle  - \sqnorm{\bar{u}^{t+1}-u^t}\right).
	\end{eqnarray*}
	Ignoring $-\frac{\gammaM}{1+\gammaM\mu}\sqnorm{\Koper^\top (u^t-u^\star)}$ and noting
	\begin{eqnarray*}
		-\left\langle \hat{x}^{t}-x^\star,\Koper^\top (\bar{u}^{t+1}-u^\star) \right\rangle &+&\frac{1}{\tauM}\left\langle  \bar{u}^{t+1}-u^\star,\bar{u}^{t+1}-u^t \right\rangle  \\
		&=&-\left\langle \lastlocitterk-\Koper x^\star,\bar{u}^{t+1}-u^\star \right\rangle +\frac{1}{\tauM}\left\langle  \nabla\localfun (\lastlocitterk),\bar{u}^{t+1}-u^\star \right\rangle\\
		&\overset{(\ref{yi1})+(\ref{strmono})}{\leq}& -\frac{1}{L_F}\sqnorm{\bar{u}^{t+1}-u^\star}+\frac{a}{2\tauM}\sqnorm{\nabla\localfun (\lastlocitterk)}+\frac{1}{2a\tauM}\sqnorm{\bar{u}^{t+1}-u^\star }\\
		&=& -\left(\frac{1}{L_F}-\frac{1}{2a\tauM}\right)\sqnorm{\bar{u}^{t+1}-u^\star}+\frac{a}{2\tauM}\sqnorm{\nabla\localfun (\lastlocitterk)}\\
		&\overset{(\ref{expu})}{\leq}& -\left(\frac{1}{L_F}-\frac{1}{2a\tauM}\right)\left((1+\omega)\Exp{\sqnorm{u^{t+1}-u^\star}\;|\;\mathcal{F}_t}-\omega\sqnorm{u^t-u^\star}\right)\\
		&\quad&+\frac{a}{2\tauM}\sqnorm{\nabla\localfun (\lastlocitterk)},
	\end{eqnarray*}
	we get
	\begin{eqnarray*}
		\frac{1}{\gammaM}\Exp{\sqnorm{x^{t+1}-x^\star}\;|\;\mathcal{F}_t}&+&\left(1+\omega\right)\left(\frac{1}{\tauM}+\frac{1}{L_F}\right)\Exp{\sqnorm{u^{t+1}-u^\star}\;|\;\mathcal{F}_t}\\
		&\leq&  \frac{1}{\gammaM(1+\gammaM\mu)}\sqnorm{x^t-x^\star}\\
		&\quad&+\left(1+\omega\right)\left(\frac{1}{\tauM}+\frac{\omega}{1+\omega}\frac{1}{L_F}\right)\sqnorm{u^{t} -u^\star } \\
		&\quad&+\left(\gammaM\left(1-\zeta\right)\Mx+\gammaM\oma-\frac{1}{\tauM}\right)\sqnorm{\bar{u}^{t+1}-u^t}\\
		&\quad& +\frac{L_F}{\tauM^2}\sqnorm{\nabla\localfun (\lastlocitterk)}-\frac{\mu}{\Mx}\sqnorm{\Koper\hat{x}^t-\Koper x^\star}.
	\end{eqnarray*}
	Where we made the choice $a=\frac{L_F}{\tau}$.
	Using Young's inequality we have
	\begin{equation*}
		-\frac{\mu}{3\Mx}\sqnorm{\Koper \hat{x}^t-\localsolk+\localsolk-\Koper x^\star}\overset{(\ref{yi3})}{\leq}\frac{\mu}{3\Mx}\sqnorm{\localsolk-\Koper x^\star}-\frac{\mu}{6\Mx}\sqnorm{\Koper\hat{x}^t-\localsolk}.	\end{equation*}
	Noting the fact that $\localsolk=\Koper\hat{x}^t-\frac{1}{\tauM}(\hat{u}^{t+1}-u^t)$, we have
	$$\frac{\mu}{3\Mx}\sqnorm{\localsolk-\Koper x^\star}\overset{(\ref{yi2})}{\leq} 2\frac{\mu}{3\Mx}\sqnorm{\Koper\hat{x}^t-\Koper x^\star}+\frac{2}{\tauM^2}\frac{\mu}{3\Mx}\sqnorm{\hat{u}^{t+1}-u^t}.$$
	Combining those inequalities we get
	\begin{eqnarray*}
		\frac{1}{\gammaM}\Exp{\sqnorm{x^{t+1}-x^\star}\;|\;\mathcal{F}_t}&+&\left(1+\omega\right)\left(\frac{1}{\tauM}+\frac{1}{L_F}\right)\Exp{\sqnorm{u^{t+1}-u^\star}\;|\;\mathcal{F}_t}\\
		&\leq&  \frac{1}{\gammaM(1+\gammaM\mu)}\sqnorm{x^t-x^\star}\\
		&\quad&+\left(1+\omega\right)\left(\frac{1}{\tauM}+\frac{\omega}{1+\omega}\frac{1}{L_F}\right)\sqnorm{u^{t} -u^\star } \\
		&\quad&+\frac{2}{\tauM^2}\frac{\mu}{3\Mx}\sqnorm{\hat{u}^{t+1}-u^t}\\
		&\quad&-\left(\frac{1}{\tauM}-\left(\gammaM\left(1-\zeta\right)\Mx+\gammaM\oma\right)\right)\sqnorm{\bar{u}^{t+1}-u^t}\\
		&\quad&+\frac{L_F}{\tauM^2}\sqnorm{\nabla\localfun (\lastlocitterk)}-\frac{\mu}{6\Mx}\sqnorm{\Koper\hat{x}^t-\localsolk}.
	\end{eqnarray*}
	Assuming $\gammaM$ and $\tauM$ can be chosen so that $\frac{1}{\tauM}-(\gammaM(1-\zeta)\Mx+\gammaM\oma))\geq \frac{4}{\tauM^2}\frac{\mu}{3\Mx}$ we obtain
	\begin{eqnarray*}
		\frac{1}{\gammaM}\Exp{\sqnorm{x^{t+1}-x^\star}\;|\;\mathcal{F}_t}&+&\left(1+\omega\right)\left(\frac{1}{\tauM}+\frac{1}{L_F}\right)\Exp{\sqnorm{u^{t+1}-u^\star}\;|\;\mathcal{F}_t}\\
		&\leq&  \frac{1}{\gammaM(1+\gammaM\mu)}\sqnorm{x^t-x^\star}\\
		&\quad&+\left(1+\omega\right)\left(\frac{1}{\tauM}+\frac{\omega}{1+\omega}\frac{1}{L_F}\right)\sqnorm{u^{t} -u^\star } \\
		&\quad&+\frac{4}{\tauM^2}\frac{\mu L_F^2}{3\Mx}\sqnorm{\lastlocitterk-\localsolk}+\frac{L_F}{\tauM^2}\sqnorm{\nabla\localfun (\lastlocitterk)}\\
		&\quad&-\frac{\mu}{6\Mx}\sqnorm{\Koper\hat{x}^t-\localsolk}.
	\end{eqnarray*}
Using Assumption~\ref{ass:GTPS}, we have 
	\begin{align*}
	\sum\limits_{\iclient=1}^{\Mx}\frac{4}{\tauM^2}\frac{\mu L_F^2}{3\Mx}\sqnorm{\lastlocittermk-\localsolmk} +\sum\limits_{\iclient=1}^{\Mx}\frac{L_F}{\tauM^2}\sqnorm{\nabla\localfuni(\lastlocittermk)} \leq\sum\limits_{\iclient=1}^{\Mx}\frac{\mu}{6\Mx}\sqnorm{\hat{x}^\kstep-\localsolmk},
\end{align*}
This is enough to have similar bound in lifted space for the point $\lastlocitterk$:
	\begin{eqnarray*}
		\frac{4}{\tauM^2}\frac{\mu L_F^2}{3\Mx}\sqnorm{\lastlocitterk-\localsolk}+\frac{L_F}{\tauM^2}\sqnorm{\nabla\localfun (\lastlocitterk)}\leq\frac{\mu}{6\Mx}\sqnorm{\Koper\hat{x}^t-\localsolk}.
	\end{eqnarray*}
	Thus
	\begin{eqnarray*}
		\frac{1}{\gammaM}\Exp{\sqnorm{x^{t+1}-x^\star}\;|\;\mathcal{F}_t}&+&\left(1+\omega\right)\left(\frac{1}{\tauM}+\frac{1}{L_F}\right)\Exp{\sqnorm{u^{t+1}-u^\star}\;|\;\mathcal{F}_t}\\
		&\leq&  \frac{1}{\gammaM(1+\gammaM\mu)}\sqnorm{x^t-x^\star}\\
		&\quad&+\left(1+\omega\right)\left(\frac{1}{\tauM}+\frac{\omega}{1+\omega}\frac{1}{L_F}\right)\sqnorm{u^{t} -u^\star }.
	\end{eqnarray*}
	
	By taking the expectation on both sides we get
	\begin{equation*}
		\Exp{\Psi^{t+1}}\leq  \max\left\{\frac{1}{1+\gammaM\mu} ,\frac{ L_F+\frac{\Mx-\Cx}{\Cx}\tauM}{L_F+\tauM} \right\}\Exp{\Psi^{t}},
	\end{equation*}
	which finishes the proof. 
	Note that our standard choice of constants is $$\omega=\frac{\Mx}{\Cx}-1,\quad \oma=\frac{\Mx(\Mx-\Cx)}{\Cx(\Mx-1)},\quad \zeta=\frac{\Mx-\Cx}{\Cx(\Mx-1)}.$$ Using these parameters the requirement for stepsizes becomes:$$\frac{1}{\tauM}-\gammaM \Mx\geq \frac{4\mu}{3\Mx\tauM^2}.$$
	This inequality is satisfied, when $0<\gammaM\leq \frac{3}{16}\sqrt{\frac{\Cx}{L\mu \Mx}}$ and $\tauM=\frac{1}{2\Mx\gammaM}.$
\end{proof}

\subsection{Reallocation of resources}

\begin{assumption}\label{ass:TPS}
	Let $\mathcal{A}_\iclient$ be an Algorithm that can find a point $\lastlocittermk$ after $\Kx$ local steps applied to the local function $\localfuni$ from \myref{localprob} and starting point $y_{\iclient}^{0,\kstep}=\hat{x}^\kstep$, which satisfies
	\begin{eqnarray*}
		\frac{4}{\tauM^2}\frac{\mu L_F^2}{3\Mx}\sqnorm{\lastlocittermk-\localsolmk}	+ \frac{L_F}{\tauM^2}\sqnorm{\nabla\localfuni(\lastlocittermk)}
		\leq	\squeeze \frac{\mu}{6\Mx}\sqnorm{\hat{x}^\kstep-\localsolmk},
	\end{eqnarray*}
	where $\localsolmk$ is the unique minimizer of $\localfuni$, and $\tauM\geq \frac{8}{3}\sqrt{\frac{L\mu}{\Mx \Cx}}.$
\end{assumption}

The general local problem is
\begin{equation}
	\argmin \limits_{y\in\mathbb{R}^{dn}}\left\{\localfun(y) \eqdef  F(y)+\frac{\tauM}{2} \sqnorm{y-\left(\Koper\hat{x}^\kstep+\frac{1}{\tauM}u^t\right)}\right\}\label{locprobG},
\end{equation}
and the condition necessary for Theorem~\ref{thm:INEXACTPPanyM} is
\begin{align*}
	\frac{4}{\tauM^2}\frac{\mu L_F^2}{3\Mx}\sqnorm{\lastlocitterk-\localsolk}+\squeeze\frac{L_F}{\tauM^2}\sqnorm{\nabla\localfun (\lastlocitterk)}\leq\frac{\mu}{6\Mx}\sqnorm{\Koper\hat{x}^t-\localsolk}.
\end{align*}
This is actually a restriction in $\mathbb{R}^{dn}$ (a dual/lifted space), which can be equivalently written as
\begin{align*}
	\sum_{\iclient=1}^{\Mx}\frac{4}{\tauM^2}\frac{\mu L_F^2}{3\Mx}\sqnorm{\lastlocittermk-\localsolmk}+\sum_{\iclient=1}^{\Mx}\frac{L_F}{\tauM^2}\sqnorm{\nabla\localfuni(\lastlocittermk)}\leq\sum_{\iclient=1}^{\Mx}\frac{\mu}{6\Mx}\sqnorm{\hat{x}^\kstep-\localsolmk}.
\end{align*}

Assumption~\ref{ass:GTPS}, which is necessary to hold for Theorem~\ref{thm:INEXACTPPanyM} arises due to the definition of the lifted space. The strength of this condition is that it allows for provable convergence even in situations where some clients can not find the required by Assumption~\ref{ass:TPS} accuracy as long other clients compensate for it by doing more iterations.

\subsection{Number of local steps in LT subroutine of 5GCS}
\label{mylabel}
In this section, we would like to present different guarantees that various Algorithms $\localsolver$ can give us.
Algorithm~$\mathcal{A}$ is simply taking current iterates $\hat{x}^t$ and $u^t$ and applies Algorithms $\localsolver_m$ to the local problem \myref{localprob}  (at each clients)and finally concatenates the result in $\localsolk$. 
To guarantee convergence of Algorithm~\ref{alg:5GCS}, we need to do locally $\Kx$ iterations of Algorithm~$\localsolver$ which would guarantee:
\begin{eqnarray*}
	\frac{4}{\tauM^2}\frac{\mu L_F^2}{3\Mx}\sqnorm{\lastlocitterk-\localsolk}+\frac{a}{\tauM}\sqnorm{\nabla\localfun (\lastlocitterk)}	&\leq&\left(\frac{4\mu L_F^2}{3\Mx\tauM^2}+\frac{a(L_F+\tauM)^2}{\tauM}\right)\sqnorm{\lastlocitterk-\localsolk}\\
	&\leq&\frac{\mu}{6\Mx}\sqnorm{\Koper\hat{x}^t-\localsolk}.
\end{eqnarray*}
Thus, we need:
\begin{equation}
	\sqnorm{\lastlocitterk-\localsolk}
	\leq\betaM\sqnorm{\Koper\hat{x}^t-\localsolk}.\label{betasol}  
\end{equation}
Where $$\betaM=\frac{\frac{\mu}{6\Mx}}{\left(\frac{4\mu L_F^2}{3\Mx\tauM^2}+\frac{a(L_F+\tauM)^2}{\tauM}\right)}.$$
For $a=\frac{L_F}{\tauM}$, the term that will appear in most of those analysis is 
\begin{equation*}
	\frac{1}{\betaM}=\frac{\left(\frac{4\mu L_F^2}{3\Mx\tauM^2}+\frac{a(L_F+\tauM)^2}{\tauM}\right)}{\frac{\mu}{6\Mx}}\leq\frac{8L_F^2}{\tauM^2}+\frac{12 L_F^3\Mx}{\tauM^2\mu}+\frac{12 L_F}{\mu}.
\end{equation*}  
Note that $\tauM$ is smallest for the optimal choice of $\gammaM$, thus
\begin{equation*}
	\frac{1}{\betaM}\leq\frac{9 L_F^2\Cx\Mx}{8L\mu}+\frac{108 L_F^3\Mx^2\Cx}{64L\mu^2}+\frac{12 L_F \Mx}{\mu}\leq\left(4\frac{L}{\mu}\right)^2,
\end{equation*}  
where in the last inequality we used  bounds such as $\Mx\geq \Cx$, $L\geq \Mx L_F$ and $\frac{L}{\mu}\geq 1$.

\subsubsection{Gradient descent for local problem}
\algname{GD} with stepsize $\frac{1}{L_F+\tauM}$ would need:
\begin{equation*}
	\Kx\geq \left(\frac{L_F+\tauM}{\tauM}\right)\log\left(\frac{1}{\betaM}\right).
\end{equation*}
Again noting that $\tauM$ is smallest when we choose stepsizes optimally:
$$\frac{L_F+\tauM}{\tauM}\leq \frac{3}{8}\sqrt{\frac{L\Cx}{\mu \Mx}}+1.$$
Thus, if $\mathcal{A}$ is \algname{GD}, then:
\begin{equation*}
	\Kx\geq \left(\frac{3}{4}\sqrt{\frac{L\Cx}{\mu \Mx}}+2\right)\log\left(4\frac{L}{\mu}\right).
\end{equation*}

\subsection{Local speed up due to personalized condition number of each client}
Dependence of the local condition number on $\tauM$ and how can we use this dependence to control the speed of local convergence is described in Section~\ref{mylabel}. Here we would like to focus on the case where each function has a different smoothness parameter.
Suppose each $f_\iclient$ is $L_m$-smooth and $\mu$-convex. If we let $L=\max_\iclient L_\iclient$ then we can note that each $f_\iclient$ is $L$-smooth, thus we have that $L_F=\frac{1}{\nclients}\left(L-\mu\right)$ and we recover the whole communication result for our Algorithm. However, locally we can note that each clients needs to find $\betaM$-solution to the local problem \myref{localprob}, which is $\left(\frac{1}{\nclients}\left(L_\iclient-\mu\right)+\tauM\right)$-smooth and $\tauM$-convex.
Remembering $\tauM\geq\frac{8}{3}\sqrt{\frac{\mu L}{\nclients\kcohort}} $, \algname{GD} needs
\begin{eqnarray*}
	2\left(\frac{1}{\nclients}\left(L_\iclient-\mu\right)\frac{1}{\tauM}+1\right)\log\left(4\frac{L}{\mu}\right)&\leq&2\left(\frac{3}{8}\sqrt{\frac{L_\iclient\kcohort}{\mu \nclients }}+1\right)\log\left(4\frac{L}{\mu}\right),
\end{eqnarray*}
iterations. Which is better, then if we were using the upper bound $\max_\iclient L_\iclient$ on each $L_\iclient$. To illustrate this we can formulate the following Corollary~\ref{cor:personal} to the general Theorem~\ref{thm:5GCS}
\begin{corollary}\label{cor:personal}
	Consider Algorithm~\ref{alg:5GCS} with LT solver being GD. In the new personalized setting with $L=\max_{\iclient}L_{\iclient}$, we can run the LT for $\localsteps\geq 2\left(\frac{3}{8}\sqrt{\frac{L_\iclient\kcohort}{\mu \nclients }}+1\right)\log\left(4\frac{L}{\mu}\right)$ and still accomplish guarantees of Theorem \ref{thm:5GCS}.
\end{corollary}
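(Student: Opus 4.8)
The plan is to obtain Corollary~\ref{cor:personal} directly from Theorem~\ref{thm:INEXACTPPanyM}, whose Lyapunov function $\Psi^\kstep$ and contraction factor $\rho$ coincide with those of Theorem~\ref{thm:5GCS} once we fix $0<\gammaM\le\frac{3}{16}\sqrt{\frac{\Cx}{L\mu\Mx}}$ and $\tauM=\frac{1}{2\gammaM\Mx}$ (so that $\tauM\ge\frac{8}{3}\sqrt{\frac{L\mu}{\Mx\Cx}}$, with equality at the largest admissible $\gammaM$). Since the per-client inequality of Assumption~\ref{ass:TPS}, summed over $\iclient\in\{1,\dots,\Mx\}$, is exactly Assumption~\ref{ass:GTPS}, it suffices to show that for every client $\iclient$ the point $\lastlocittermk$ produced by $\Kx_\iclient$ \algname{GD} steps on $\localfuni$ verifies Assumption~\ref{ass:TPS} whenever $\Kx_\iclient\ge 2\left(\frac{3}{8}\sqrt{\frac{L_\iclient\Cx}{\mu\Mx}}+1\right)\log\!\left(4\frac{L}{\mu}\right)$; then taking $\Kx=\max_\iclient\Kx_\iclient$ (or, using the flexibility of Assumption~\ref{ass:GTPS}, letting each client use its own $\Kx_\iclient$) closes the argument.

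Next I would reduce Assumption~\ref{ass:TPS} to a pure distance-contraction condition. In the personalized setting $f_\iclient$ is $L_\iclient$-smooth, so $F_\iclient$ is $\frac{1}{\Mx}(L_\iclient-\mu)$-smooth and the local objective $\localfuni$ of \myref{localprob} is $L_{\psi_\iclient}\eqdef\frac{1}{\Mx}(L_\iclient-\mu)+\tauM$-smooth and $\tauM$-strongly convex. Using $\nabla\localfuni(\localsolmk)=0$ and this smoothness, $\sqnorm{\nabla\localfuni(\lastlocittermk)}\le L_{\psi_\iclient}^2\sqnorm{\lastlocittermk-\localsolmk}$, so the left-hand side of Assumption~\ref{ass:TPS} is bounded by $\big(\frac{4\mu L_F^2}{3\Mx\tauM^2}+\frac{L_F L_{\psi_\iclient}^2}{\tauM^2}\big)\sqnorm{\lastlocittermk-\localsolmk}$, and it is enough to guarantee $\sqnorm{\lastlocittermk-\localsolmk}\le\betaM_\iclient\sqnorm{\hat{x}^\kstep-\localsolmk}$ with $1/\betaM_\iclient=\frac{6\Mx}{\mu}\big(\frac{4\mu L_F^2}{3\Mx\tauM^2}+\frac{L_F L_{\psi_\iclient}^2}{\tauM^2}\big)$. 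This is exactly the quantity analysed in Section~\ref{mylabel}, but with $(L_F+\tauM)^2$ replaced by $L_{\psi_\iclient}^2$; since $L_\iclient\le L$ gives $L_{\psi_\iclient}\le\frac{1}{\Mx}(L-\mu)+\tauM=L_F+\tauM$, we obtain $1/\betaM_\iclient\le 1/\betaM$, and the estimate $1/\betaM\le(4L/\mu)^2$ from Section~\ref{mylabel} (which uses $\Mx\ge\Cx$, $L\ge\Mx L_F$, $L/\mu\ge1$ and the smallest value of $\tauM$) yields $\log(1/\betaM_\iclient)\le 2\log(4L/\mu)$.

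It then remains to count the local \algname{GD} iterations. Running \algname{GD} with stepsize $1/L_{\psi_\iclient}$ on the $\tauM$-strongly convex, $L_{\psi_\iclient}$-smooth function $\localfuni$ from $y_\iclient^{0,\kstep}=\hat{x}^\kstep$ contracts $\sqnorm{\,\cdot-\localsolmk}$ by the factor $1-\tauM/L_{\psi_\iclient}$ per step, so $\Kx_\iclient\ge\frac{L_{\psi_\iclient}}{\tauM}\log(1/\betaM_\iclient)$ steps produce the required accuracy. Using $\tauM\ge\frac{8}{3}\sqrt{\frac{L\mu}{\Mx\Cx}}$ and $L_\iclient^2/L\le L_\iclient$,
\[
\frac{L_{\psi_\iclient}}{\tauM}=\frac{L_\iclient-\mu}{\Mx\tauM}+1\le\frac{3}{8}\sqrt{\frac{\Cx}{\Mx}\frac{\nicefrac{L_\iclient^2}{L}}{\mu}}+1\le\frac{3}{8}\sqrt{\frac{L_\iclient\Cx}{\mu\Mx}}+1,
\]
and multiplying by $\log(1/\betaM_\iclient)\le 2\log(4L/\mu)$ shows that the displayed lower bound on $\Kx$ (equivalently $\Kx_\iclient$) is sufficient. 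Hence Assumption~\ref{ass:TPS}, and therefore Assumption~\ref{ass:GTPS}, holds, and Theorem~\ref{thm:INEXACTPPanyM} delivers the same $\Psi^\kstep$ and the same rate as Theorem~\ref{thm:5GCS}.

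I expect the only genuinely delicate step to be the second paragraph, where one must disentangle the two roles of smoothness: the global $L$ (through $L_F$) still governs the communication rate and the multiplicative coefficients of the inexactness criterion in Assumption~\ref{ass:TPS}, whereas the local smoothness $L_{\psi_\iclient}$ of $\localfuni$ governs both how tightly $\sqnorm{\nabla\localfuni(\lastlocittermk)}$ is dominated by the iterate error and the per-step \algname{GD} contraction. The point to verify carefully is that the required local accuracy $\betaM_\iclient$ does not deteriorate relative to the homogeneous analysis even though the criterion keeps the crude global coefficients --- this is precisely where $L_{\psi_\iclient}\le L_F+\tauM$ is needed --- so that $L_\iclient$ enters advantageously only through the iteration count $\Kx_\iclient$; the rest is a verbatim repetition of the bookkeeping in Section~\ref{mylabel}.
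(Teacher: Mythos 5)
Your proposal is correct and follows essentially the same route as the paper: identify the client-dependent smoothness $L_{\psi_\iclient}=\tfrac{1}{\Mx}(L_\iclient-\mu)+\tauM$ of the local objective $\localfuni$, bound the per-step \algname{GD} contraction by $1-\tauM/L_{\psi_\iclient}$, and combine $\tauM\ge\tfrac{8}{3}\sqrt{\tfrac{L\mu}{\Mx\Cx}}$ with $L_\iclient^2/L\le L_\iclient$ to get $L_{\psi_\iclient}/\tauM\le\tfrac{3}{8}\sqrt{\tfrac{L_\iclient\Cx}{\mu\Mx}}+1$, while the global $L$ continues to govern the coefficients in Assumption~\ref{ass:TPS} and hence the $2\log(4L/\mu)$ accuracy factor. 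The one cosmetic difference is that you introduce a per-client tolerance $\betaM_\iclient$ (tightening the gradient term with $L_{\psi_\iclient}$ rather than $L_F+\tauM$) and then dominate it by the global $\betaM$; the paper simply reuses the global $\betaM$ throughout, which is slightly cruder but yields the same displayed bound.
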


\clearpage
\subsection{Local solvers $\localsolver_m$ may be stochastic}
Until now we assumed that Algorithms $\localsolver_m$ were deterministic(in a sense that they do not introduce any randomness to the system). However, with a small change in the analysis from Section~\ref{proof35}, we can allow for local solvers to be stochastic, we can present a more general condition which includes stochastic local solvers. To analyze the stochastic local solvers we need to modify Assumption~\ref{ass:GTPS} with respect to stochasticity. We introduce a new assumption, where the inequality appearing in Assumption~\ref{ass:GTPS} should be satisfied in expectation. 
\begin{assumption}\label{ass:STPS}
	Let $\mathcal{A}$ be stochastic Algorithm that can find a point $\lastlocitterk$ in $\Kx$ local steps applied to the local function $\localfun$ from \myref{localprob} and starting point $y_{\iclient}^{0,\kstep}=\hat{x}^\kstep$, which satisfies
	\begin{align*}
		\Exp{\sum_{\iclient=1}^{\Mx}\frac{4}{\tauM^2}\frac{\mu L_F^2}{3\Mx}\sqnorm{\lastlocittermk-\localsolmk}+\sum_{\iclient=1}^{\Mx}\frac{L_F}{\tauM^2}\sqnorm{\nabla\localfuni(\lastlocittermk)}\;|\;\mathcal{F}_t}\leq\sum_{\iclient=1}^{\Mx}\frac{\mu}{6\Mx}\sqnorm{\hat{x}^\kstep-\localsolmk},
	\end{align*}
	where $\localsolmk$ is the unique minimizer of $\localfuni$, and $\tauM\geq \frac{8}{3}\sqrt{\frac{L\mu}{\Mx \Cx}}.$
\end{assumption}
 The conditioning on $\mathcal{F}^t$ simply means that $\hat{x}^t$ is not treated as a random vector and the only randomness comes from the local . Let us consider $\Exp{X\;|\;A}$, which represents the expectation of a random variable $X$ condition on the randomness accumulated due to local solvers being stochastic. Then conditioning on both $A^t$ and $\mathcal{F}^t$, we can get
	\begin{eqnarray*}
		\frac{1}{\gammaM}\Exp{\sqnorm{x^{t+1}-x^\star}\;|\;\mathcal{F}_t\cup A^t}&+&\left(1+\omega\right)\left(\frac{1}{\tauM}+\frac{1}{L_F}\right)\Exp{\sqnorm{u^{t+1}-u^\star}\;|\;\mathcal{F}_t\cup A^t}\\
		&\leq&  \frac{1}{\gammaM(1+\gammaM\mu)}\sqnorm{x^t-x^\star}\\
		&\quad&+\left(1+\omega\right)\left(\frac{1}{\tauM}+\frac{\omega}{1+\omega}\frac{1}{L_F}\right)\sqnorm{u^{t} -u^\star } \\
		&\quad&+\frac{4}{\tauM^2}\frac{\mu L_F^2}{3\Mx}\sqnorm{\lastlocitterk-\localsolk}+\frac{L_F}{\tauM^2}\sqnorm{\nabla\localfun (\lastlocitterk)}\\
		&\quad&-\frac{\mu}{6\Mx}\sqnorm{\Koper\hat{x}^t-\localsolk}.
	\end{eqnarray*}
	Taking expectation condition on $\mathcal{F}^t$ on both sides we get
	\begin{eqnarray*}
		\frac{1}{\gammaM}\Exp{\sqnorm{x^{t+1}-x^\star}\;|\;\mathcal{F}_t}&+&\left(1+\omega\right)\left(\frac{1}{\tauM}+\frac{1}{L_F}\right)\Exp{\sqnorm{u^{t+1}-u^\star}\;|\;\mathcal{F}_t}\\
		&\leq&  \frac{1}{\gammaM(1+\gammaM\mu)}\sqnorm{x^t-x^\star}\\
		&\quad&+\left(1+\omega\right)\left(\frac{1}{\tauM}+\frac{\omega}{1+\omega}\frac{1}{L_F}\right)\sqnorm{u^{t} -u^\star } \\
		&\quad&+\Exp{\frac{4}{\tauM^2}\frac{\mu L_F^2}{3\Mx}\sqnorm{\lastlocitterk-\localsolk}+\frac{L_F}{\tauM^2}\sqnorm{\nabla\localfun (\lastlocitterk)}\;|\;\mathcal{F}_t}\\
		&\quad&-\frac{\mu}{6\Mx}\sqnorm{\Koper\hat{x}^t-\localsolk}.
	\end{eqnarray*}
Crucial practical benefit comes from the expected improvement in gradient calculation, when each local function has a finite sum structure, which is common in practice.

\subsubsection{L-SVRG for local problem}
\begin{algorithm}[H]
	\caption{\algn{L-SVRG}}
	\begin{algorithmic}[1]\label{alg:LSVRG}
		\STATE  \textbf{input:} initial points $x^0\in\mathbb{R}^d$, $y^0=x^0$, gradient estimator $g$; 
		\STATE stepsize $\gammaM>0$
		\FOR{$k=0, 1, \ldots$}
		\STATE$g^k=g(x^k)-g(y^k)+\nabla f(y^k)$
		\STATE$x^{k+1}=x^k-\gammaM g^k$
		\STATE$y^{k+1}=\begin{cases}
			x^k \quad \text{with probability $p$}\\
			y^k \quad \text{with probability $1-p$}
		\end{cases}$
		\ENDFOR
	\end{algorithmic}
\end{algorithm}
In this section we consider \algname{L-SVRG}\citep{kovalev2020don} method as local stochastic solver with variance reduction mechanism. Our analysis is based on general expected smoothness assumption \citep{gower2019sgd}.
\begin{assumption}
The gradient estimator $g$ is unbiased, and satisfies the expected smoothness bound\label{ass:expsmooth}
	\begin{align*}
		\Exp{g(x)} &= \nabla f(x),\\
		\Exp{\sqnorm{g(x)-g(x^\star)}}&\leq 2A''\mathcal{D}_f(x,x^\star).
	\end{align*}
\end{assumption}
We apply convergence guarantees of \algname{L-SVRG} for the subproblem in Algorithm~\ref{alg:5GCS} for a gradient estimator $g$ satisfying Assumption~\ref{ass:expsmooth} and stepsize $\gammaM_2=\frac{1}{6A''}$. We obtain the following bound:
\begin{equation*}
	\Exp{\sqnorm{\lastlocitterk-\localsolk}}\leq\left(1-\min\left\{\gammaM_2\tauM,\frac{p}{2}\right\}\right)^T\left(1+2\gammaM_2^2\frac{L_F+\tauM}{p}\right)\sqnorm{\Koper\hat{x}^t-\localsolk} .
\end{equation*}
This means that Algorithm~\ref{alg:LSVRG} with $p=2\tauM\gammaM_2$ finds $\betaM$-solution to the local problem of Algorithm~\ref{alg:5GCS} in
\begin{equation*}
K=	\frac{6A''}{\tauM}\log\left(\frac{\tauM+\left(L_F + \tauM\right)\gammaM_2}{\tauM}\frac{1}{\betaM}\right)
\end{equation*}
local steps. Particularly interesting and practical example of $g$ in the Algorithm~\ref{alg:LSVRG} is mini-batch gradient estimator.
Thus, we assume the finite sum structure:
\begin{eqnarray*}
	f_\iclient(x)=\frac{1}{n_\iclient}\sum_{i=1}^{n_\iclient}f_{\iclient,i}(x),
\end{eqnarray*}
where each $f_{\iclient,i}$ is convex and $L_{i}$ smooth. Than $\localfuni$ can be put in the finite sum structure, by writing
\begin{eqnarray*}
	\localfuni(y)=\frac{1}{n_\iclient}\sum_{i=1}^{n_\iclient}g_i(y),
\end{eqnarray*}
where
\begin{eqnarray*}
	g_i(y)=\frac{1}{\Mx}\left(f_{\iclient,i}(y)-\frac{\mu}{2}\sqnorm{y}\right)+\frac{\tauM}{2}\sqnorm{y-(\hat{x}^t+\frac{1}{\tauM}u_{\iclient}^t)}.
\end{eqnarray*}
Since $\tau\geq\frac{4\mu}{3\Mx}$, $g_{i}$ is $\left(\frac{1}{\Mx}\left(L_{i}-\mu\right)+\tauM\right)$-smooth and $\left(\tauM-\frac{\mu}{\Mx}\right)$-convex. 
Fix a mini-batch size $b_\iclient\in\{1,2,\dots,\Mx_\iclient\}$ and let $S_\iclient$ be a random subset of $ \{1,\ldots,\Mx_\iclient\}$ of size $\Cx$, chosen uniformly at random, then the mini-batch gradient estimator is 
\begin{eqnarray*}
	g(y)=\frac{1}{b_\iclient}\sum_{i\in S_\iclient} \nabla g_i(y).
\end{eqnarray*}
For this gradient estimator 
\begin{eqnarray*}
	A''=\frac{n_\iclient-b_\iclient}{b_\iclient\left(n_\iclient-1\right)}\underset{i}{\max}L_{g_i}+\frac{n_\iclient\left(b_\iclient-1\right)}{b_\iclient\left(n_\iclient-1\right)}\left(L_F+\tauM\right),
\end{eqnarray*}
where $L_{g_i}=\frac{1}{\Mx}\left(L_{i}-\mu\right)+\tau$.

\clearpage

\section{Relation between the \# of communication rounds $\Tx$ on the \# of local steps $\Kx$}
\subsection{Proof of Theorem~\ref{thm:relation2}}
\begin{theorem}
	Consider Algorithm~\ref{alg:5GCS} (\algname{5GCS}) with the LT solver being \algname{GD}. Let $\gammaM=\frac{3}{16L}$  and $\tauM=\frac{8L}{3\Mx}.$
	With such chosen stepsizes, it is enough to run \algname{GD} for
	\begin{eqnarray*}
		\squeeze		\Kx\geq \left(2+\frac{3\Mx L_F}{4L}\right)\log\left(4\frac{L}{\mu}\right)=\mathcal{O}\left(\log\frac{L}{\mu}\right).
	\end{eqnarray*}
	Whereas, the number of communication rounds to reach $\varepsilon$-solution is
	\begin{equation*}
		\squeeze	T\geq\max\left\{1+\frac{16}{3}\frac{L}{\mu}, \frac{\Mx}{\Cx}+\frac{3\Mx}{8\Cx}\frac{\Mx L_F}{L}\right\}\log\frac{1}{\epsilon}=\tilde{\mathcal{O}}\left(\frac{\Mx}{\Cx}+\frac{L}{\mu}\right).
	\end{equation*}
\end{theorem}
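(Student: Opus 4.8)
The statement is the specialization of Theorem~\ref{thm:5GCS} (equivalently of the general Theorem~\ref{thm:INEXACTPPanyM} instantiated with \algname{GD}) to the concrete stepsizes $\gammaM=\tfrac{3}{16L}$, $\tauM=\tfrac{8L}{3\Mx}$, so the plan is simply to substitute these values into the machinery already built. Concretely, I would (a) verify that this pair of stepsizes satisfies the hypotheses under which the one-round contraction in the proof of Theorem~\ref{thm:5GCS} goes through; (b) count how many local \algname{GD} steps on the subproblem \myref{localprob} are needed so that the inexactness condition used in that proof (the inequality of Assumption~\ref{ass:GTPS}) holds; and (c) read off the contraction factor $\rho$ and turn it into the announced bound on $\Tx$.

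\emph{Step (a).} First note $\tauM=\tfrac{8L}{3\Mx}=\tfrac{1}{2\gammaM\Mx}$, so the primal--dual coupling $\tauM=\tfrac{1}{2\gammaM\Mx}$ used throughout the proof of Theorem~\ref{thm:5GCS} holds. Under this coupling the sole stepsize requirement of that proof is $\tfrac{1}{\tauM}-\gammaM\Mx\ge\tfrac{4\mu}{3\Mx\tauM^2}$; substituting $\tfrac{1}{\tauM}=2\gammaM\Mx$ and $\gammaM=\tfrac{3}{16L}$ collapses it to $\mu\le L$, which is Assumption~\ref{ass:main}. (Since we only reuse this recursion, we do not need the lower bound $\tauM\ge\tfrac83\sqrt{L\mu/(\Mx\Cx)}$ that appears in Assumption~\ref{ass:GTPS}; that bound is used elsewhere in the paper only to bound the number of \algname{GD} steps, which we recompute below directly for our $\tauM$.)

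\emph{Step (b).} Each local objective $\localfuni$ of \myref{localprob} is $\tauM$-strongly convex and $(L_F+\tauM)$-smooth, hence has condition number $1+\tfrac{L_F}{\tauM}=1+\tfrac{3\Mx L_F}{8L}$ (using $\Mx L_F=L-\mu$). Running \algname{GD} with stepsize $\tfrac{1}{L_F+\tauM}$ from $\hat x^t$ gives $\sqnorm{\lastlocittermk-\localsolmk}\le\bigl(1-\tfrac{\tauM}{L_F+\tauM}\bigr)^{\Kx}\sqnorm{\hat x^t-\localsolmk}$, so it suffices that $\Kx\ge\bigl(1+\tfrac{3\Mx L_F}{8L}\bigr)\log\tfrac1\betaM$ to ensure every client reaches the accuracy $\betaM$ of \myref{betasol} with $a=L_F/\tauM$, which in turn guarantees Assumption~\ref{ass:GTPS}. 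It remains to bound $\tfrac1\betaM=\tfrac{8L_F^2}{\tauM^2}+\tfrac{6\Mx L_F}{\mu}\bigl(1+\tfrac{L_F}{\tauM}\bigr)^2$: using $\tfrac{L_F}{\tauM}=\tfrac{3\Mx L_F}{8L}<\tfrac38$ and $\Mx L_F=L-\mu<L$ gives $\tfrac1\betaM<12\,\tfrac L\mu\le(4L/\mu)^2$, whence $\log\tfrac1\betaM\le2\log(4L/\mu)$ and $\Kx\ge\bigl(2+\tfrac{3\Mx L_F}{4L}\bigr)\log(4L/\mu)$ is enough, as claimed.

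\emph{Step (c).} With these stepsizes the proof of Theorem~\ref{thm:5GCS} yields $\Exp{\Psi^{\Tx}}\le(1-\rho)^{\Tx}\Psi^0$ with $\rho=\min\bigl\{\tfrac{\gammaM\mu}{1+\gammaM\mu},\ \tfrac{\Cx}{\Mx}\tfrac{\tauM}{L_F+\tauM}\bigr\}$, and since $(1-\rho)^{\Tx}\le e^{-\rho\Tx}$ it is enough to take $\Tx\ge\tfrac1\rho\log\tfrac1\varepsilon$. Finally $\tfrac1\rho=\max\bigl\{1+\tfrac1{\gammaM\mu},\ \tfrac\Mx\Cx\bigl(1+\tfrac{L_F}{\tauM}\bigr)\bigr\}=\max\bigl\{1+\tfrac{16}{3}\tfrac L\mu,\ \tfrac\Mx\Cx+\tfrac{3\Mx}{8\Cx}\tfrac{\Mx L_F}{L}\bigr\}$, which is exactly the announced number of rounds; the $\tilde{\mathcal{O}}(\Mx/\Cx+L/\mu)$ form follows because $\Mx L_F=L-\mu\le L$ makes the second term at most $\tfrac{11\Mx}{8\Cx}$. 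The only piece that is not pure bookkeeping with the explicit constants $\tfrac{3}{16L}$ and $\tfrac{8L}{3\Mx}$ is the uniform bound $\tfrac1\betaM\le(4L/\mu)^2$ in step (b) for this particular (non-optimal) choice of $\tauM$, but as shown above it reduces to a short estimate using $\tfrac{L_F}{\tauM}<\tfrac38$.
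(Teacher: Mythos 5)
Your proof is correct and follows the same route as the paper: specialize the machinery of Theorem~\ref{thm:INEXACTPPanyM}/Theorem~\ref{thm:5GCS} to $\gammaM=\tfrac{3}{16L}$, $\tauM=\tfrac{8L}{3\Mx}$, verify the stepsize condition, bound the number of local \algname{GD} steps via the inexactness requirement, and read the round complexity off the contraction factor. You are more explicit than the paper's one-line justification and correctly flag that the lower bound $\tauM\ge\tfrac83\sqrt{L\mu/(\Mx\Cx)}$ (used in the appendix to derive $\tfrac1\betaM\le(4L/\mu)^2$) need not hold for this $\tauM$ when $\tfrac L\mu<\tfrac\Mx\Cx$, repairing it by bounding $\tfrac1\betaM$ directly; a small arithmetic slip in step~(b): your estimate gives $\tfrac1\betaM<\tfrac{399}{32}\tfrac L\mu\approx12.5\tfrac L\mu$ rather than $12\tfrac L\mu$, but the conclusion $\tfrac1\betaM\le(4L/\mu)^2$ is unaffected.
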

\begin{proof}
	Note that by choosing $\tauM=\frac{8L}{3\Mx}$ and $\gammaM=\frac{3}{16L}$ stepsizes satisfy the condition from Theorem~\ref{thm:INEXACTPPanyM} and the number of local iterations of \algname{GD} to guarantee convergence is:
	\begin{eqnarray*}
		\Kx\geq 2\frac{L_F+\frac{8L}{3\Mx}}{\frac{8L}{3\Mx}}\log\left(4\frac{L}{\mu}\right) = \left(2+\frac{3\Mx L_F}{4L}\right)\log\left(4\frac{L}{\mu}\right)=\mathcal{O}\left(\log\frac{L}{\mu}\right).
	\end{eqnarray*}
	Whereas, the number of communication rounds to reach $\varepsilon$-solution is:
	\begin{equation*}
		\max\left\{1+\frac{16}{3}\frac{L}{\mu}, \frac{\Mx}{\Cx}+\frac{3\Mx}{8\Cx}\frac{\Mx L_F}{L}\right\}\log\frac{1}{\epsilon}=\mathcal{O}\left(\left(\frac{\Mx}{\Cx}+\frac{L}{\mu}\right)\log\frac{1}{\epsilon}\right).
	\end{equation*}
\end{proof}
\subsection{Proof of Theorem~\ref{thm:relation}}
\begin{theorem}
	Consider Algorithm~\ref{alg:5GCS} (\algname{5GCS}) with the LT solver being \algname{GD} run for $\Kx \geq \Kx(\deltaM)\eqdef 2\deltaM \log\left(\nicefrac{4L}{\mu}\right)$ iterations, where  $1<\deltaM< 1+\frac{3}{8}\sqrt{\frac{L\Cx}{\mu \Mx}}$ . Let $\gammaM=\frac{1}{2\Mx\tauM}$  and $\tauM=\max\left\{\frac{L}{\Mx (\deltaM-1)},\frac{8}{3}\sqrt{\frac{L\mu}{\Mx \Cx}}\right\}.$
	Then for the Lyapunov function
	\begin{equation*}
%		\squeeze 
			\Psi^{\kstep}\eqdef \frac{1}{\gammaM}\sqnorm{x^{\kstep}-x^\star}+\frac{\Mx}{\Cx}\left(\frac{1}{\tauM}+\frac{1}{L_F}\right)\sqnorm{u^{\kstep}-u^\star}\label{PPINEXACTpsianyM},
	\end{equation*}
	the iterates of the method satisfy
	$
	\Exp{\Psi^{\Tx}}\leq (1-\rho)^\Tx \Psi^0,
	$
	where 
	$	\rho\eqdef  \max\left\{\frac{\gammaM\mu}{1+\gammaM\mu},\frac{\Cx}{\Mx}\frac{\tauM}{(L_F+\tauM)}\right\}<1.
	$

\end{theorem}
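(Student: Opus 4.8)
The plan is to obtain this theorem as a direct instantiation of the general result Theorem~\ref{thm:INEXACTPPanyM} with \algname{GD} as the LT subroutine. With the prescribed parameters $\gammaM=\frac{1}{2\Mx\tauM}$ and $\tauM=\max\bigl\{\frac{L}{\Mx(\deltaM-1)},\frac{8}{3}\sqrt{\frac{L\mu}{\Mx\Cx}}\bigr\}$, it suffices to verify the two ingredients that Theorem~\ref{thm:INEXACTPPanyM} requires: (i) the stepsize restriction $\gammaM\leq\frac{1}{\tauM\Mx}\bigl(1-\frac{4\mu}{3\Mx\tauM}\bigr)$, together with the lower bound $\tauM\geq\frac{8}{3}\sqrt{\frac{L\mu}{\Mx\Cx}}$ demanded in Assumption~\ref{ass:GTPS}; and (ii) that \algname{GD} run for $\Kx\geq\Kx(\deltaM)=2\deltaM\log(\nicefrac{4L}{\mu})$ inner iterations on each (separable) local problem \myref{localprob} actually returns points $\lastlocittermk$ satisfying the inequality in Assumption~\ref{ass:GTPS}. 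Granting both, Theorem~\ref{thm:INEXACTPPanyM} outputs verbatim the claimed Lyapunov contraction with rate $\rho=\min\bigl\{\frac{\gammaM\mu}{1+\gammaM\mu},\frac{\Cx}{\Mx}\frac{\tauM}{L_F+\tauM}\bigr\}$.

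For (i), I would substitute $\gammaM=\frac{1}{2\Mx\tauM}$ into the stepsize restriction; after multiplying through by $\Mx\tauM>0$ it collapses to the single scalar inequality $\tauM\geq\frac{8\mu}{3\Mx}$. Since $L\geq\mu$ and $\Cx\leq\Mx$ give $\frac{8}{3}\sqrt{\frac{L\mu}{\Mx\Cx}}\geq\frac{8}{3}\frac{\mu}{\Mx}$, the second branch of the max already forces $\tauM\geq\frac{8\mu}{3\Mx}$; the same inequality is exactly the lower bound on $\tauM$ needed in Assumption~\ref{ass:GTPS}, so (i) is immediate. I also expect the hypothesis $\deltaM<1+\frac{3}{8}\sqrt{\frac{\Cx}{\Mx}\frac{L}{\mu}}$ to be precisely the condition under which the \emph{first} branch $\frac{L}{\Mx(\deltaM-1)}$ attains the maximum defining $\tauM$, which is what couples $\tauM$ to $\deltaM$ in step (ii); I would record this but not otherwise need it.

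For (ii), I would appeal to the reduction carried out in Section~\ref{mylabel}: since the lifted inner problem is separable and each $\localfuni$ is $(L_F+\tauM)$-smooth and $\tauM$-strongly convex, Assumption~\ref{ass:GTPS} is implied once \algname{GD} with stepsize $\frac{1}{L_F+\tauM}$ returns a $\betaM$-approximate minimizer in the sense of \myref{betasol}, and it is shown there that $\frac{1}{\betaM}\leq\bigl(4\frac{L}{\mu}\bigr)^2$, an estimate using only $\tauM\geq\frac{8}{3}\sqrt{\frac{L\mu}{\Mx\Cx}}$ and $\Mx\geq\Cx$, $L\geq\Mx L_F$, $\frac{L}{\mu}\geq1$. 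Because $\nicefrac{1}{\betaM}$ is decreasing in $\tauM$, this bound survives for the (possibly larger) $\tauM$ used here. Linear convergence of \algname{GD} then says $\Kx\geq\frac{L_F+\tauM}{\tauM}\log\frac{1}{\betaM}$ suffices; and since $L_F=\frac{1}{\Mx}(L-\mu)<\frac{L}{\Mx}$ together with $\tauM\geq\frac{L}{\Mx(\deltaM-1)}$ gives $\frac{L_F}{\tauM}<\deltaM-1$, hence $\frac{L_F+\tauM}{\tauM}<\deltaM$, I obtain $\frac{L_F+\tauM}{\tauM}\log\frac{1}{\betaM}<\deltaM\cdot2\log(\nicefrac{4L}{\mu})=\Kx(\deltaM)\leq\Kx$, which closes (ii).

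I expect step (ii) to be the only genuinely delicate part: one must be careful that the accuracy bound $\nicefrac{1}{\betaM}\leq(4L/\mu)^2$ from Section~\ref{mylabel}, derived there for the \emph{smallest} admissible $\tauM$, is still valid for the present (larger) choice of $\tauM$, and that the local \algname{GD} contraction factor $\frac{L_F+\tauM}{\tauM}$ is controlled exactly by $\deltaM$ through the first branch of the max. Once these are pinned down, invoking Theorem~\ref{thm:INEXACTPPanyM} and reporting $\rho$ is bookkeeping, and Corollary~\ref{cor:rela} then follows from the usual estimate $(1-\rho)^\Tx\leq e^{-\rho\Tx}$ together with a lower bound on $\nicefrac{1}{\rho}$.
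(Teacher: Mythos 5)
Your proposal is correct and follows essentially the same route as the paper: both instantiate Theorem~\ref{thm:INEXACTPPanyM}, check the stepsize restriction (which, with $\gammaM=\frac{1}{2\Mx\tauM}$, collapses to $\tauM\geq\frac{8\mu}{3\Mx}$, guaranteed by the second branch of the max since $L\geq\mu$ and $\Mx\geq\Cx$), and verify Assumption~\ref{ass:GTPS} for \algname{GD} via the Section~\ref{mylabel} bounds $\log\frac{1}{\betaM}\leq 2\log(4L/\mu)$ together with $\frac{L_F+\tauM}{\tauM}\leq\deltaM$ forced by the first branch $\tauM\geq\frac{L}{\Mx(\deltaM-1)}$. (Your reported $\rho=\min\{\cdot\}$ is also the correct form; the appendix's $\max$ is a typo, as the proof of Theorem~\ref{thm:INEXACTPPanyM} yields $1-\rho$ as a max of contraction factors.)
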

\begin{proof}
	Firstly, we can note that at each step we need to find $\betaM$-solution to the local problem \myref{localprob}. Here, noting that we can restrict ourself to $\tauM\geq \frac{8}{3}\sqrt{\frac{L\mu}{\Mx \Cx}}$ since for this choice we get optimal number of communication rounds, thus we can note:
	\begin{eqnarray*}
		&6\frac{L}{\mu} \leq \frac{1}{\betaM}\leq \left(4\frac{L}{\mu}\right)^2&\\
		& \log\left(6\frac{L}{\mu}\right)\leq \log\frac{1}{\betaM}\leq 2\log\left(4\frac{L}{\mu}\right)&.
	\end{eqnarray*}
	Thus, the speed of local convergence depends fully on the condition number of the local problem $\left(\text{i.e., on }\frac{L_F+\tauM}{\tauM}\right)$. 
	For general result we can ask for the guarantee such that
	\begin{equation*}
		\Kx\geq \Kx(\deltaM)\eqdef \deltaM\left(2\log\left(4\frac{L}{\mu}\right)\right),\quad\deltaM>1.
	\end{equation*}
 For that we would need:
	\begin{equation*}
		\frac{L_F+\tauM}{\tauM}\leq\frac{\frac{L}{\Mx}+\tauM}{\tauM} \leq\deltaM \implies \tauM\geq \frac{\frac{L}{\Mx}}{\deltaM-1}.
	\end{equation*}
We use the choice 
		\begin{equation*}
			\gammaM\leq \frac{1}{\Mx\tauM}\left(1-\frac{4\mu}{3\Mx\tauM}\right).
		\end{equation*}
		Thus if $\tauM\geq \frac{8\mu}{3\Mx}$, then we can choose $\gammaM=\frac{1}{2\Mx\tauM}$.	Thus, let us take $\tauM=\max\left\{\frac{\frac{L}{\Mx}}{\deltaM-1},\frac{8}{3}\sqrt{\frac{L\mu}{\Mx \Cx}}\right\}$, so that we can choose $\gammaM=\frac{1}{2\Mx\tauM}$. With $\Kx$ \algname{GD} local iterations and this stepsize choice the contraction of the Lyapunov function follows from Theorem \ref{thm:INEXACTPPanyM}.
	\end{proof}
\subsection{Proof of Corollary~\ref{cor:rela}}
\begin{corollary}
	Choose any $0<\varepsilon<1$. In order to guarantee $\Exp{\Psi^{\Tx}}\leq \varepsilon \Psi^0$, it suffices to take 
	\begin{equation*}
		\squeeze	T\geq	\max\left\{1+\frac{2L}{\left(\deltaM-1\right)\mu},\frac{\Mx}{\Cx}\deltaM\right\}\log\frac{1}{\epsilon}.
	\end{equation*}
	We can note that when $\deltaM\leq \frac{\Mx+\Cx}{2\Mx}+\sqrt{\frac{2L\Cx}{\mu \Mx}+\left(\frac{\Mx-\Cx}{2\Mx}\right)^2}$, then
	\begin{equation*}
		\squeeze 	\Tx \geq	T(\alpha)\eqdef \left(1+\frac{2}{\deltaM-1} \frac{L}{\mu}\right)\log \frac{1}{\varepsilon}.
	\end{equation*}
\end{corollary}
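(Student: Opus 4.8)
The plan is to obtain Corollary~\ref{cor:rela} as a direct consequence of Theorem~\ref{thm:relation}, which already establishes $\Exp{\Psi^{\Tx}} \leq (1-\rho)^{\Tx}\Psi^0$ with $\rho = \min\left\{\frac{\gammaM\mu}{1+\gammaM\mu},\ \frac{\Cx}{\Mx}\frac{\tauM}{L_F+\tauM}\right\}$ under the choices $\gammaM = \frac{1}{2\Mx\tauM}$ and $\tauM = \max\left\{\frac{L}{\Mx(\deltaM-1)},\ \frac{8}{3}\sqrt{\frac{L\mu}{\Mx\Cx}}\right\}$. First I would use the elementary fact that $(1-\rho)^{\Tx} \leq e^{-\rho\Tx} \leq \varepsilon$ whenever $\Tx \geq \frac{1}{\rho}\log\frac{1}{\varepsilon}$, which reduces the task to upper-bounding $\frac{1}{\rho} = \max\left\{1 + \frac{1}{\gammaM\mu},\ \frac{\Mx}{\Cx}\frac{L_F+\tauM}{\tauM}\right\}$.

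Next I would resolve the maximum defining $\tauM$. The hypothesis of Theorem~\ref{thm:relation} forces $\deltaM - 1 < \frac{3}{8}\sqrt{\frac{\Cx}{\Mx}\frac{L}{\mu}}$, and a one-line manipulation shows this is equivalent to $\frac{L}{\Mx(\deltaM-1)} > \frac{8}{3}\sqrt{\frac{L\mu}{\Mx\Cx}}$; hence $\tauM = \frac{L}{\Mx(\deltaM-1)}$. Substituting $\gammaM = \frac{1}{2\Mx\tauM}$ gives $1 + \frac{1}{\gammaM\mu} = 1 + \frac{2\Mx\tauM}{\mu} = 1 + \frac{2L}{(\deltaM-1)\mu}$. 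For the second term, $L_F = \frac{1}{\Mx}(L-\mu) \leq \frac{L}{\Mx}$ together with $\tauM = \frac{L}{\Mx(\deltaM-1)}$ yields $\frac{L_F}{\tauM} \leq \deltaM - 1$, so $\frac{\Mx}{\Cx}\frac{L_F+\tauM}{\tauM} = \frac{\Mx}{\Cx}\left(1 + \frac{L_F}{\tauM}\right) \leq \frac{\Mx}{\Cx}\deltaM$. Combining, $\frac{1}{\rho} \leq \max\left\{1 + \frac{2L}{(\deltaM-1)\mu},\ \frac{\Mx}{\Cx}\deltaM\right\}$, which gives the first displayed bound on $\Tx$.

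For the refinement I would identify exactly when the first argument of this maximum dominates, i.e., when $\frac{\Mx}{\Cx}\deltaM \leq 1 + \frac{2L}{(\deltaM-1)\mu}$. Writing $x = \deltaM - 1 > 0$ and clearing denominators converts this into the quadratic inequality $x^2 + \frac{\Mx-\Cx}{\Mx}x - \frac{2L\Cx}{\mu\Mx} \leq 0$, whose nonnegative root is $x_+ = -\frac{\Mx-\Cx}{2\Mx} + \sqrt{\left(\frac{\Mx-\Cx}{2\Mx}\right)^2 + \frac{2L\Cx}{\mu\Mx}}$. Thus the inequality holds exactly when $\deltaM = x+1 \leq \frac{\Mx+\Cx}{2\Mx} + \sqrt{\frac{2L\Cx}{\mu\Mx} + \left(\frac{\Mx-\Cx}{2\Mx}\right)^2}$, and in that case the maximum equals $1 + \frac{2L}{(\deltaM-1)\mu}$, so taking $\Tx \geq T(\alpha) = \left(1 + \frac{2}{\deltaM-1}\frac{L}{\mu}\right)\log\frac{1}{\varepsilon}$ suffices, as claimed.

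The whole argument is routine bookkeeping; the only delicate points are the sign handling when passing from the quadratic in $x$ back to a bound on $\deltaM$, and remembering to invoke the upper restriction on $\deltaM$ from Theorem~\ref{thm:relation} — it is precisely this restriction that collapses the maximum defining $\tauM$ to its first argument. Without it, $\tauM$ would instead equal $\frac{8}{3}\sqrt{\frac{L\mu}{\Mx\Cx}}$ and the first term of $\frac{1}{\rho}$ would read $1 + \frac{16}{3}\sqrt{\frac{\Mx L}{\Cx\mu}}$ rather than $1 + \frac{2L}{(\deltaM-1)\mu}$, so the clean statement of the corollary would break.
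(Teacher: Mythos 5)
Your proposal is correct and tracks the paper's own argument: both bound $1/\rho$ from Theorem~\ref{thm:relation} and reduce the refinement to the same quadratic in $\deltaM-1$. The only cosmetic difference is that you resolve the maximum defining $\tauM$ to its first branch at the outset (using the hypothesis on $\deltaM$), whereas the paper carries both branches through and collapses them at the end; the substance is identical.
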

\begin{proof}
		To satsify Assumption \ref{ass:GTPS}, assume that the Local Solver is \algname{GD} run for 
		\begin{equation*}
			\Kx\geq \Kx(\deltaM)\eqdef \deltaM\left(2\log\left(4\frac{L}{\mu}\right)\right),\quad\deltaM>1.
		\end{equation*}
	To ensure that choose $\tauM=\max\left\{\frac{\frac{L}{\Mx}}{\deltaM-1},\frac{8}{3}\sqrt{\frac{L\mu}{\Mx \Cx}}\right\}$ and $\gammaM=\frac{1}{2\Mx\tauM}$. Than the communication complexity is:
	\begin{eqnarray*}
		\max\left\{1+\frac{1}{\gammaM\mu},\frac{\Mx}{\Cx}+\frac{\Mx}{\Cx}\frac{L_F}{\tauM}\right\}
		\leq\max\left\{\max\left\{1+\frac{2L}{\left(\deltaM-1\right)\mu},1+\frac{16}{3}\sqrt{\frac{L\Mx}{\mu \Cx}}\right\},\frac{\Mx}{\Cx}\min\left\{\deltaM,1+\frac{3}{8}\sqrt{\frac{L\Cx}{\mu \Mx}}\right\}\right\}.
	\end{eqnarray*}
	For $\deltaM\leq 1+\frac{3}{8}\sqrt{\frac{L\Cx}{\mu \Mx}}$, this simplifies to:
	\begin{equation*}
	T\geq	\max\left\{1+\frac{2L}{\left(\deltaM-1\right)\mu},\frac{\Mx}{\Cx}\deltaM\right\}\log\frac{1}{\epsilon}.
	\end{equation*}
	We can note that when $\deltaM\leq \frac{\Mx+\Cx}{2\Mx}+\sqrt{\frac{2L\Cx}{\mu \Mx}+\left(\frac{\Mx-\Cx}{2\Mx}\right)^2}$, then:
	\begin{equation*}
	T\geq	\max\left\{1+\frac{2L}{\left(\deltaM-1\right)\mu},\frac{\Mx}{\Cx}\deltaM\right\}\log\frac{1}{\epsilon}=\left(1+\frac{2L}{\left(\deltaM-1\right)\mu}\right)\log\frac{1}{\epsilon}.
	\end{equation*}
	Thus, we get a relation  between the number of local steps and communication rounds.
\end{proof}

\clearpage
\section{Implementation-friendly version of Algorithm~\ref{alg:5GCS}}

We now present Algorithm~\ref{alg:membet}, which is Algorithm~\ref{alg:5GCS} written in a memory-efficient manner. We use the fact that we do not need any information on specific $u_\iclient^t$ and that not all $u_\iclient^t$ are updated in each communication round.
\begin{algorithm}[H]
	\caption{\algn{Client sampling with a new update for $u$ and memory-efficient update for $v$} [new]}
	\begin{algorithmic}[1]\label{alg:membet}
		\STATE  \textbf{input:} initial points $x^0\in\mathbb{R}^d$, $u_\iclient^0\in\mathbb{R}^d$ for all $\iclient=\{1,\dots,\Mx\}$; 
		\STATE stepsize $\gammaM>0$, $\tauM>0$;  $\Cx\in \{1,\dots,\Mx\}$
		\STATE $v^0\eqdef \sum_{\iclient=1}^\Mx u_\iclient^0$
		\FOR{$t=0, 1, \ldots$}
		\STATE $\hat{x}^{t} \eqdef \frac{1}{1+\gammaM\mu} \left(x^t - \gammaM v^t\right)$
		\STATE Pick $\set\subset \{1,\ldots,\Mx\}$ of size $\Cx$ uniformly at random
		\FOR{$\iclient\in \set$}
		\STATE Find $\lastlocittermk$ as a final point of $\Kx$ iteration of some Algorithm~$\mathcal{A}_\iclient$ starting with $y_\iclient^0=\hat{x}^t$ for following problem:
		\begin{eqnarray}
			\lastlocittermk \approx \argmin_{y\in\mathbb{R}^d}\left\{\localfuni(y) =  F_\iclient(y)+\frac{\tauM}{2} \sqnorm{y-\left(\hat{x}^\kstep+\frac{1}{\tauM}u_\iclient^t\right)}\right\}\label{localprob2}
		\end{eqnarray}
		\STATE $u_\iclient^{t+1}=\nabla F_\iclient(\lastlocittermk)$
		\STATE $\Delta u_\iclient^{t+1}=u_\iclient^{t+1}-u_\iclient^t$
		\ENDFOR
		\FOR{$\iclient\in\{1, \ldots,\Mx\}\backslash \set$}
		\STATE $u_{\iclient}^{t+1}\eqdef u_{\iclient}^t $
		\ENDFOR
		\STATE $\Delta v^{t+1}\eqdef \sum_{\iclient\in\set} \Delta u_\iclient^{t+1}$
		\STATE  $x^{t+1} \eqdef \hat{x}^{t}- \gammaM \frac{\Mx}{\Cx} \Delta v^{t+1}$
		\STATE $v^{t+1}=v^t+\Delta v^{t+1} $
		\ENDFOR
	\end{algorithmic}
\end{algorithm}
\end{document}